\pgfplotsset{compat=1.15}
\pgfplotsset{compat=1.15}
\pgfplotsset{compat=1.15}
\newtheorem{theorem}{Theorem}
\newtheorem{proposition}{Proposition}
\newcommand{\C}{\mathfrak{C}}
\newcommand{\E}{\mathbb{E}}
\newcommand{\F}{\mathcal{F}}
\newcommand{\Ec}{\mathcal{E}}
\newcommand{\R}{\mathcal{R}}
\newcommand{\mY}{f_Y}
\newcommand{\x}{x}
\newcommand{\y}{y}
\newcommand{\mmR}{\R_\textrm{MM-REx}}
\newcommand{\VR}{\R_\textrm{V-REx}}
\newcommand{\V}{\mathbb{V}}
\definecolor{myred}{RGB}{215,48,39}
\definecolor{mygreen}{RGB}{26,152,80}
\newcommand{\cmark}{\textcolor{mygreen}{\ding{51}}}
\newcommand{\xmark}{\textcolor{myred}{\ding{55}}}
\icmltitlerunning{Out-of-Distribution Generalization via Risk Extrapolation}
\begin{document}

\twocolumn[
\icmltitle{Out-of-Distribution Generalization via Risk Extrapolation}

\begin{icmlauthorlist}
\icmlauthor{David Krueger}{mila,udem}
\icmlauthor{Ethan Caballero}{mila,udem}
\icmlauthor{Joern-Henrik Jacobsen}{vector,toronto}
\icmlauthor{Amy Zhang}{mila,mcgill,fb}
\icmlauthor{Jonathan Binas}{mila,udem}
\icmlauthor{Dinghuai Zhang}{mila,udem}
\icmlauthor{Remi Le Priol}{mila,udem}
\icmlauthor{Aaron Courville}{mila,udem}
\end{icmlauthorlist}

\icmlaffiliation{mila}{Mila}
\icmlaffiliation{udem}{University of Montreal}
\icmlaffiliation{mcgill}{McGill University}
\icmlaffiliation{fb}{Facebook AI Research}
\icmlaffiliation{vector}{Vector}
\icmlaffiliation{toronto}{University of Toronto}

\icmlcorrespondingauthor{}{david.scott.krueger@gmail.com}

\icmlkeywords{} %

\vskip 0.3in
]
\printAffiliationsAndNotice{}  %

\begin{abstract}
Distributional shift is one of the major obstacles when transferring machine learning prediction systems from the lab to the real world.
To tackle this problem, we assume that variation 
across training domains is representative of the variation we might encounter at test time, but also that \textit{shifts at test time may be more extreme in magnitude}.
In particular, we show that reducing differences in risk across training domains can reduce a model’s sensitivity to a wide range of extreme distributional shifts, including the challenging setting where the input contains both causal and anti-causal elements.  
We motivate this approach, \textbf{Risk Extrapolation (REx)}, as a form of robust optimization over a perturbation set of \
extrapolated
domains (MM-REx), and propose a penalty on the variance of training risks (V-REx) as a simpler variant.
We prove that variants of REx can recover the causal mechanisms of the targets, while also providing some robustness to changes in the input distribution (``covariate shift'').
By %
trading-off robustness to causally induced distributional shifts and covariate shift, REx is able to outperform alternative methods such as Invariant Risk Minimization in situations where these types of shift co-occur.
\end{abstract}

\vspace{-9mm}
\section{Introduction}
\vspace{-2mm}

While neural networks often exhibit super-human generalization on the training distribution, they can be extremely sensitive to distributional shift, presenting a major roadblock for their practical application \citep{su2019one, engstrom2017exploring, recht2019imagenet, hendrycks2019benchmarking}.  
This sensitivity is often caused by relying on ``spurious'' features unrelated to the core concept we are trying to learn \citep{geirhos2018imagenet}.
For instance, \citet{Beery_2018} give the example of an image recognition model failing to correctly classify cows on the beach, since it has learned to make predictions based on the features of the background (e.g. a grassy field) instead of just the animal.

In this work, we consider \textbf{out-of-distribution (OOD) generalization}, also known as \textbf{domain generalization}, where a model must generalize appropriately to a new test domain for which it has neither labeled nor unlabeled training data.
Following common practice \citep{ben2009robust}, we formulate this as optimizing the worst-case performance over a \textbf{perturbation set} of possible test domains, $\mathcal{F}$:
\vspace{-2mm}
\begin{align}
\label{eqn:R^OOD}
    \R^{\mathrm{OOD}}_\mathcal{F}(\theta) = \max_{e \in \mathcal{F}} \R_e(\theta)
\end{align}
Since generalizing to arbitrary test domains is impossible, the choice of perturbation set encodes our assumptions about which test domains might be encountered.
Instead of making such assumptions \textit{a priori}, we 
assume access to data from multiple training domains, which can inform our choice of perturbation set.
A classic approach for this setting is \textbf{group distributionally robust optimization (DRO)} \citep{sagawa2019distributionally}, where $\mathcal{F}$ contains all mixtures of the training distributions.
This is mathematically equivalent to considering convex combinations of the training \textit{risks}.
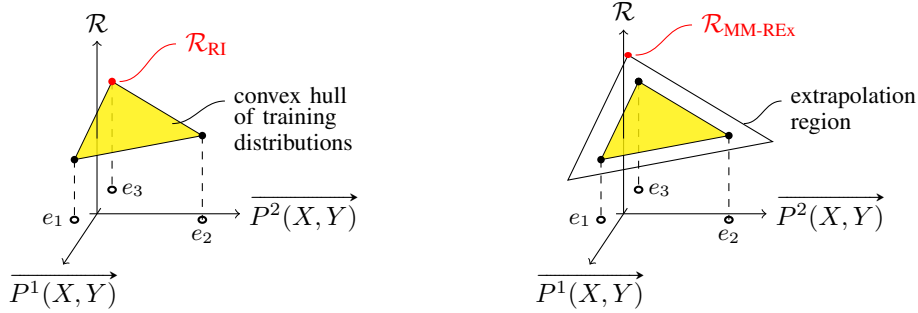
\begin{figure*}[ht!]
\centering
\vspace{-11mm}
\begin{tikzpicture}
\fill[white] (-1.7,-1.5) rectangle ++(10.5,5.5);

\hspace{-1cm}
\begin{scope}

    \draw[->] (0.05,0.075) -- ++(-0.5,-0.75) node[below] {$\vv{P^1(X,Y)}$};
    \draw[->] (-0.1,0) -- ++(2,0) node[right] {$\vv{P^2(X,Y)}$};
    \draw (0,0) -- ++(0,1);
    
    \begin{scope}[yscale=0.8, yshift=0.9cm, xshift=0.2cm]
    \coordinate (r1) at (-0.5,0);
    \coordinate (r2) at (1.2,0.4);
    \coordinate (r3) at (0,1.3);
    
    \draw[dashed] (r1) -- ++(0,-1) coordinate (p1);
    \draw[dashed] (r2) -- ++(0,-1.4) coordinate (p2);
    \draw[dashed] (r3) -- ++(0,-1.8) coordinate(p3);
    \draw[thick] (p1) circle (0.05) node[left] {\footnotesize $e_1$};
    \draw[thick] (p2) circle (0.05) node[below] {\footnotesize $e_2$};
    \draw[thick] (p3) circle (0.05) node[right] {\footnotesize $e_3$};
    
    \fill[yellow, opacity=0.8] (r1) -- (r2) -- (r3) -- cycle;
    \draw[very thin] (r1) -- (r2) -- (r3) -- cycle;
    
    \end{scope}

    \draw[->] (0,1) -- ++(0,1.3) node[above] {$\mathcal{R}$};
    
    \fill (r1) circle (0.05);
    \fill (r2) circle (0.05);
    \fill[red] (r3) circle (0.05);
    
    \draw[red] (r3) ++(0.1,0) to[out=0, in=180] ++(0.75,0.5) node[right] {$\mathcal{R_\textrm{RI}}$};
    \draw (1,1.2) to[out=0, in=180] ++(0.7,0.3) node[right, anchor=north west, yshift=2ex, text width=2cm] {\footnotesize convex hull \\[-2pt] of training \\[-2pt] distributions};
    
\end{scope}

\hspace{2cm}

\begin{scope}[xshift=5cm]

    \draw[->] (0.05,0.075) -- ++(-0.5,-0.75) node[below] {$\vv{P^1(X,Y)}$};
    \draw[->] (-0.1,0) -- ++(2,0) node[right] {$\vv{P^2(X,Y)}$};
    \draw (0,0) -- ++(0,1);
    
    \begin{scope}[yscale=0.8, yshift=0.9cm, xshift=0.2cm]
    \def\b{1.25} %
    \coordinate (r1) at (-0.5,0);
    \coordinate (r2) at (1.2,0.4);
    \coordinate (r3) at (0,1.3);
    \coordinate (r1x) at (-0.94, -0.34);
    \coordinate (r2x) at (1.78, 0.3);
    \coordinate (r3x) at (-0.14, 1.74);
    
    \draw[dashed] (r1) -- ++(0,-1) coordinate (p1);
    \draw[dashed] (r2) -- ++(0,-1.4) coordinate (p2);
    \draw[dashed] (r3) -- ++(0,-1.8) coordinate(p3);
    \draw[thick] (p1) circle (0.05) node[left] {\footnotesize $e_1$};
    \draw[thick] (p2) circle (0.05) node[below] {\footnotesize $e_2$};
    \draw[thick] (p3) circle (0.05) node[right] {\footnotesize $e_3$};
    
    \fill[yellow, opacity=0.8] (r1) -- (r2) -- (r3) -- cycle;
    \draw[very thin] (r1) -- (r2) -- (r3) -- cycle;

    \draw (r1x) -- (r2x) -- (r3x) -- cycle;

    \fill[red] (r3x) circle (0.05);
    
    \draw[red] (r3x) ++(0.1,0) to[out=0, in=180] ++(0.75,0.5) node[right] {$\mathcal{R_\textrm{MM-REx}}$};
    \end{scope}

    \draw[->] (0,1) -- ++(0,1.45) node[above] {$\mathcal{R}$};
    
    \fill (r1) circle (0.05);
    \fill (r2) circle (0.05);
    \fill (r3) circle (0.05);
    \fill (r3) circle (0.05);
    
    \draw (1.6,1.2) to[out=0, in=180] ++(0.5,0.3) node[right, anchor=north west, yshift=2ex, text width=2cm] {\footnotesize extrapolation\\[-2pt] region};
\end{scope}
\end{tikzpicture}
\caption{
\textbf{Left}: Robust optimization optimizes worst-case performance over the convex hull of training distributions. %
\textbf{Right}: By extrapolating risks, REx encourages robustness to larger shifts.
Here $e_1, e_2,$ and $e_3$ represent training distributions, and $\vv{P^1(X,Y)}$, $\vv{P^2(X,Y)}$ represent some particular directions of variation in the affine space of quasiprobability distributions over $(X,Y)$.
}
\label{fig:risk_plane}
\end{figure*}
However, we aim for a more ambitious form of OOD generalization, over a larger perturbation set.
Our method \textbf{minimax Risk Extrapolation (MM-REx)} is an extension of DRO where $\mathcal{F}$ instead contains \textit{affine} combinations of training risks, see Figure~\ref{fig:risk_plane}.
Under specific circumstances, MM-REx can be thought of as DRO over a set of extrapolated domains.\footnote{
We define ``extrapolation'' to mean ``outside the convex hull'', see Appendix~\ref{app:extrapolation} for more.
}
But MM-REx also unlocks fundamental new generalization capabilities unavailable to DRO.

In particular, focusing on supervised learning, we show that Risk Extrapolation can uncover invariant relationships between inputs $X$ and targets $Y$.
Intuitively, an \textbf{invariant relationship} is a statistical relationship which is maintained across all domains in $\mathcal{F}$.
Returning to the cow-on-the-beach example, the relationship between the animal and the label is expected to be invariant, while the relationship between the background and the label is not.
A model which bases its predictions on such an invariant relationship is said to perform \textbf{invariant prediction}.\footnote{
Note this is different from learning an invariant representation \citep{ganin2016domain}; see Section~\ref{sec:invariance_and_causality}.
}
Many domain generalization methods assume $P(Y|X)$ is an invariant relationship, limiting distributional shift to changes in $P(X)$, which are known as \textbf{covariate shift} \citep{ben-david2010impossibility}.
This assumption can easily be violated, however.
For instance, when $Y$ causes $X$, a more sensible assumption is that $P(X|Y)$ is fixed, with $P(Y)$ varying across domains \citep{scholkopf2012causal, lipton2018detecting}.
In general, invariant prediction may involve an aspect of causal discovery.
Depending on the perturbation set, however, other, more predictive, invariant relationships may also exist \citep{koyama2020outofdistribution}.  
The first method for invariant prediction to be compatible with modern deep learning problems and techniques is \textbf{Invariant Risk Minimization (IRM)} \citep{arjovsky2019invariant}, making it a natural point of comparison.
Our work focuses on explaining how REx addresses OOD generalization, and highlighting differences (especially advantages) of REx compared with IRM and other domain generalization methods, see Table~\ref{tab:DG_comparison}.
Broadly speaking, REx optimizes for robustness to the forms of distributional shift that have been observed to have the largest impact on performance in training domains.
This can be a significant advantage over the more focused (but also limited) robustness that IRM targets.
For instance, unlike IRM, REx can also encourage robustness to covariate shift (see Section~\ref{sec:methods} and Figure~\ref{fig:REx_vs_IRM_covariate_shift}).

Our experiments show that REx significantly outperforms IRM in settings that involve covariate shift and require invariant prediction, including modified versions of CMNIST and simulated robotics tasks from the Deepmind control suite.
On the other hand, because REx does not distinguish between underfitting and inherent noise, IRM has an advantage in settings where some domains are intrinsically harder than others.
Our contributions include: 
\begin{enumerate}
    \item MM-REx, a novel domain generalization problem formulation suitable for invariant prediction. 
    \item Demonstrating that REx solves invariant prediction tasks where IRM fails due to covariate shift. 
    \item Proving that equality of risks can be a sufficient criteria for discovering causal structure.
\end{enumerate}

\begin{table*}[t!]
\centering
\renewcommand{\arraystretch}{1.3}
\begin{tabular}{c|c|c|c} 
Method & Invariant Prediction & Cov. Shift Robustness & Suitable for Deep Learning \\
\midrule
 DRO & \xmark & \cmark & \cmark \\
 (C-)ADA & \xmark & \cmark & \cmark \\
 ICP & \cmark & \xmark & \xmark \\
 IRM & \cmark & \xmark & \cmark \\
 REx & \cmark & \cmark & \cmark \\
\end{tabular}
\caption{
A comparison of approaches for OOD generalization.
}
\label{tab:DG_comparison}
\end{table*}

\section{Background \& Related work}

We consider multi-source domain generalization, where our goal is to find parameters $\theta$ that perform well on unseen domains, given a set of $m$ training \textbf{domains}, $\Ec = \{e_1, .., e_m\}$, sometimes also called \textbf{environments}.
We assume the loss function, $\ell$ is fixed, and domains only differ in terms of their data distribution $P_e(X,Y)$ and dataset $D_e$.
The \textbf{risk function} for a given domain/distribution $e$ is:
\begin{align}
    \R_e(\theta) \doteq \E_{(x,y) \sim P_e(X,Y)} \ell(f_\theta(x), y)
\end{align}
We refer to members of the set $\{\R_e | e \in \Ec\}$ as the \textbf{training risks} or simply \textbf{risks}. %
Changes in $P_e(X,Y)$ can be categorized as either changes in $P(X)$ (\textbf{covariate shift}), changes in $P(Y|X)$ (\textbf{concept shift}), or a combination.
The standard approach to learning problems is \textbf{Empirical Risk Minimization (ERM)}, which minimizes the average loss across all the training examples from all the domains:
\begin{align}
    \R_\textrm{ERM}(\theta) &\doteq  \E_{(x,y) \sim \cup_{e \in \Ec} D_e} \; \ell(f_\theta(x), y)  \\
    &= \sum_e |D_e| \E_{(x,y) \sim D_e} \; \ell(f_\theta(x), y)
\end{align}
\subsection{Robust Optimization}
An approach more taylored to OOD generalization is \textbf{robust optimization} \citep{ben2009robust}, which aims to optimize a model’s worst-case performance over some \textbf{perturbation set} of possible data distributions, $\mathcal{F}$ (see Eqn.~\ref{eqn:R^OOD}). 
When only a single training domain is available (\textbf{single-source domain generalization}), it is common to assume that $P(Y|X)$ is fixed, and let $\mathcal{F}$ be all distributions within some $f$-divergence ball of the training $P(X)$ \citep{hu2016does, bagnell2005}. 
As another example, adversarial robustness can be seen as instead using a Wasserstein ball as a perturbation set \citep{sinha2017certifying}.
The assumption that $P(Y|X)$ is fixed is commonly called the ``covariate shift assumption'' \citep{ben-david2010impossibility}; however, we assume that covariate shift and concept shift can co-occur, and refer to this assumption as \textbf{the fixed relationship assumption (FRA)}.

In \textbf{multi-source domain generalization}, test distributions are often assumed to be mixtures (i.e. convex combinations) of the training distributions; 
this is equivalent to setting $\mathcal{F} \doteq \Ec$:
\begin{align}
    \label{eqn:RO}
    \R_{\mathrm{RI}}(\theta) \doteq
    \max_{\substack{
    \Sigma_e \lambda_e = 1 \\
    \lambda_e \geq 0}}
    \sum_{e=1}^m \lambda_e \R_e(\theta) %
    =
    \max_{e \in \Ec} \R_e(\theta).
\end{align}
We call this objective \textbf{Risk Interpolation (RI)}, or, following \citet{sagawa2019distributionally}, \textbf{(group) Distributionally Robust Optimization (DRO)}. 
While single-source methods classically assume that the probability of each data-point can vary independently \citep{hu2016does}, DRO yields a much lower dimensional perturbation set, with at most one direction of variation per domain, regardless of the dimensionality of $X$ and $Y$.
It also does not rely on FRA, and can provide robustness to any form of shift in $P(X,Y)$ which occurs across training domains. %
Minimax-REx is an extension of this approach to affine combinations of training risks.
\begin{figure*}[t!]
\centering
\includegraphics[width=1.9\columnwidth]{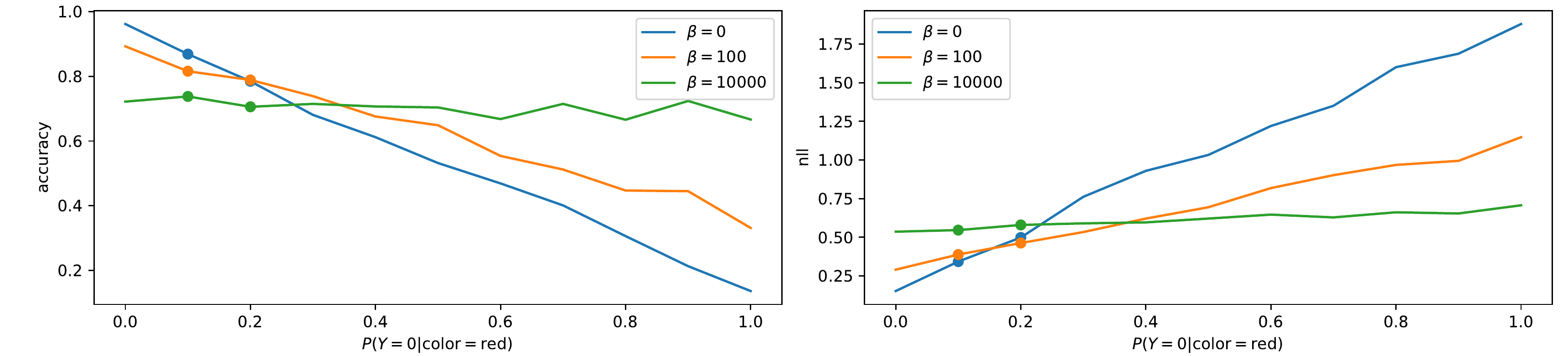}
\caption{
Training accuracies (\textbf{left}) and risks (\textbf{right}) on colored MNIST domains with varying $P(Y=0 | \mathrm{color=red})$ after 500 epochs.
Dots represent training risks, lines represent test risks on different domains.
Increasing the V-REx penalty ($\beta$) leads to a flatter ``risk plane'' and more consistent performance across domains, as the model learns to ignore color in favor of shape-based invariant prediction.
Note that $\beta=100$ gives the best worst-case risk across the 2 training domains, and so would be the solution preferred by DRO \citep{sagawa2019distributionally}.
This demonstrates that REx's counter-intuitive propensity to \textit{increase} training risks can be necessary for good OOD performance. %
}
\label{fig:extrapolated_risks}
\end{figure*}
\subsection{Invariant representations vs.\ invariant predictors} %
\label{sec:invariance_and_causality}
An \textbf{equipredictive representation}, $\Phi$, is a function of $X$ with the property that $P_e(Y | \Phi)$ is equal, $\forall e \in \F$.
In other words, the relationship between such a $\Phi$ and $Y$ is fixed across domains. %
\textbf{Invariant relationships} between $X$ and $Y$ are then exactly those that can be written as $P(Y|\Phi(x))$ with $\Phi$ an equipredictive representation.
A model $\hat{P}(Y|X=x)$ that learns such an invariant relationship is called an \textbf{invariant predictor}.
Intuitively, an invariant predictor works equally well across all domains in $\F$.
The principle of risk extrapolation aims to achieve invariant prediction by enforcing such equality across \textit{training} domains $\Ec$, and does not rely on explicitly learning an equipredictive representation.

\citet{koyama2020outofdistribution} prove that a \textit{maximal} equipredictive representation -- that is, one that maximizes mutual information with the targets, $\Phi^* \doteq \mathrm{argmax}_\Phi I(\Phi, Y)$ -- solves the robust optimization problem (Eqn.~\ref{eqn:R^OOD}) under fairly general assumptions.\footnote{
The first formal definition of an equipredictive representation we found was by \citet{koyama2020outofdistribution}, who use the term ``(maximal) invariant predictor''.
We prefer our terminology since: 1) it is more consistent with \citet{arjovsky2019invariant}, and 2) $\Phi$ is a representation, not a predictor.
}
When $\Phi^*$ is unique, we call the features it ignores \textbf{spurious}.
The result of \citet{koyama2020outofdistribution} provides a theoretical reason for favoring invariant prediction over the common approach of learning \textbf{invariant \textit{representations}} \citep{pan2010domain}, which make $P_e(\Phi)$ or $P_e(\Phi | Y)$ equal $\forall e \in \Ec$.
Popular methods here include \textbf{adversarial domain adaptation (ADA)} \citep{ganin2016domain} and \textbf{conditional ADA (C-ADA)} \citep{long2018conditional}.
Unlike invariant predictors, invariant representations can easily fail to generalize OOD: ADA forces the predictor to have the same marginal predictions $\hat{P}(Y)$, which is a mistake when $P(Y)$ in fact changes across domains \citep{zhao2019learning}; 
C-ADA suffers from more subtle issues \citep{arjovsky2019invariant}.
\subsection{Invariance and causality}
\label{sec:invariance_and_causality}

The relationship between cause and effect is a paradigmatic example of an invariant relationship.
Here, we summarize definitions from causal modeling, and discuss causal approaches to domain generalization.
We will refer to these definitions for the statements of our theorems in Section~\ref{sec:theoretical_conditions}.

\paragraph{Definitions.}
A \textbf{causal graph} is a directed acyclic graph (DAG), where nodes represent variables and edges point from causes to effects.
In this work, we use \textbf{Structural Causal Models (SCMs)}, which also specify how the value of a variable is computed given its parents.
An SCM, $\C$, is defined by specifying the \textbf{mechanism}, $f_Z: Pa(Z) \rightarrow dom(Z)$ for each variable $Z$.\footnote{
Our definitions follow \textit{Elements of Causal Inference} \citep{peters2017elements}; our notation mostly does as well.
} %
Mechanisms are \textit{deterministic}; noise in $Z$ is represented explicitly via a special noise variable $N_Z$, and these noise variables are jointly independent.
An \textbf{intervention}, $\iota$ is any modification to the mechanisms of one or more variables; an intervention can introduce new edges, so long as it does not introduce a cycle.
$do(X_i=x)$ denotes an intervention which sets $X_i$ to the constant value $x$ (removing all incoming edges).
Data can be generated from an SCM, $\C$, by sampling all of the noise variables, and then using the mechanisms to compute the value of every node whose parents' values are known.
This sampling process defines an \textbf{entailed distribution}, $P^\C(\mathbf{Z})$ over the nodes $\mathbf{Z}$ of $\C$.
We overload $f_Z$, letting $f_Z(\mathbf{Z})$ refer to the conditional distribution $P^\C(Z | \mathbf{Z} \setminus \{Z\})$.

\subsubsection{Causal approaches to domain generalization}

Instead of assuming $P(Y|X)$ is fixed (FRA), works that take a causal approach to domain generalization often assume that the \textit{mechanism} for $Y$ is fixed; we call this \textbf{the fixed mechanism assumption (FMA)}.
Meanwhile, they assume $X$ may be subject to different (e.g. arbitrary) interventions in different domains \citep{Bulhmann2018invariance}.
We call changes in $P(X,Y)$ resulting from interventions on $X$ \textbf{interventional shift}.
Interventional shift can involve both covariate shift and/or concept shift. 
In their seminal work on \textbf{Invariant Causal Prediction (ICP)}, \citet{peters2016causal} leverage this invariance to learn which elements of $X$ cause $Y$.
ICP and its nonlinear extension \citep{Heinze_Deml_2018} use statistical tests to detect whether the residuals of a linear model are equal across domains.
Our work differs from ICP in that: 
\begin{enumerate}
\itemsep.1em 
    \item Our method is model agnostic and scales to deep networks.
    \item Our goal is OOD generalization, not causal inference.  
    These are not identical: invariant prediction can sometimes make use of non-causal relationships, but when deciding which interventions to perform, a truly causal model is called for.    
    \item Our learning principle only requires invariance of risks, not residuals.
    Nonetheless, we prove that this can ensure invariant causal prediction.
\end{enumerate}
A more similar method to REx is \textbf{Invariant Risk Minimization (IRM)} \citep{arjovsky2019invariant}, which shares properties (1) and (2) of the list above.
Like REx, IRM also uses a weaker form of invariance than ICP; namely, they insist that the optimal linear classifier must match across domains.\footnote{
In practice, IRMv1 replaces this bilevel optimization problem with a 
gradient penalty on classifier weights.%
}
Still, REx differs significantly from IRM.
While IRM specifically aims for invariant prediction, REx seeks robustness to \textit{whichever} forms of distributional shift are present.
Thus, REx is more directly focused on the problem of OOD generalization, and can provide robustness to a wider variety of distributional shifts, inluding covariate shift.
Also, unlike REx, IRM seeks to match $\mathbb{E}(Y|\Phi(X))$ across domains, not the full $P(Y|\Phi(X))$.
This, combined with IRM's indifference to covariate shift, make it more effective in cases where different domains or examples are inherently more noisy.

\subsection{Fairness}
Equalizing risk across different groups (e.g. male vs. female) has been proposed as a definition of \textbf{fairness} \citep{donini2018empirical}, generalizing the equal opportunity definition of fairness \citep{hardt2016equality}.
\citet{williamson2019fairness} propose using the absolute difference of risks to measure deviation from this notion of fairness; this corresponds to our MM-REx, in the case of only two domains, and is similar to V-REx, which uses the variance of risks.
However, in the context of fairness, equalizing the risk of training groups is the \textit{goal}.
Our work goes beyond this by showing that it can serve as a \textit{method} for OOD generalization.
\section{Risk Extrapolation}
\label{sec:methods}

Before discussing algorithms for REx and theoretical results, we first expand on our high-level explanations of what REx does, what kind of OOD generalization it promotes, and how.
The principle of Risk Extrapolation (REx) has two aims:
\vspace{-1mm}
\begin{enumerate}
\itemsep.2em 
    \item Reducing training risks
    \item Increasing similarity of training risks
\end{enumerate} 
\vspace{-1mm}
In general, these goals can be at odds with each other; decreasing the risk in the domain with the lowest risk also decreases the overall similarity of training risks. 
Thus methods for REx may seek to \textit{increase} risk on the best performing domains.
While this is counter-intuitive, it can be necessary to achieve good OOD generalization, as Figure~\ref{fig:extrapolated_risks} demonstrates.
From a geometric point of view, encouraging equality of risks flattens the ``risk plane'' (the affine span of the training risks, considered as a function of the data distribution, see Figures~\ref{fig:risk_plane} and \ref{fig:extrapolated_risks}).
While this can result in higher training risks, it also means that the risk changes less if the distributional shifts between training domains are magnified at test time.

Figure~\ref{fig:extrapolated_risks} illustrates how flattening the risk plane can promote OOD generalization on real data, using the Colored MNIST (CMNIST) task as an example \citep{arjovsky2019invariant}.
In the CMNIST training domains, the color of a digit is more predictive of the label than the shape is.
But because the correlation between color and label is not invariant, predictors that use the color feature achieve different risk on different domains.
By enforcing equality of risks, REx prevents the model from using the color feature enabling successful generalization to the test domain where the correlation between color and label is reversed.

\vspace{-2mm}
\paragraph{Probabilities vs. Risks.}
Figure~\ref{fig:MoG} depicts how the extrapolated risks considered in MM-REx can be translated into a corresponding change in $P(X,Y)$, using an example of pure covariate shift.
Training distributions can be thought of as points in an affine space with a dimension for every possible value of $(X,Y)$; see Appendix~\ref{app:6D} for an example.
Because the risk is linear w.r.t. $P(\x, \y)$, a convex combination of risks from different domains is equivalent to the risk on a domain given by the mixture of their distributions.
The same holds for the affine combinations used in MM-REx, with the caveat that the negative coefficients may lead to negative probabilities, making the resulting $P(X,Y)$ a \textbf{\textit{quasi}probability distribution}, i.e.\ a signed measure with integral 1.
We explore the theoretical implications of this in Appendix~\ref{sec:theory}.

\begin{figure}[ht]
\centering
\includegraphics[width=.99\linewidth]{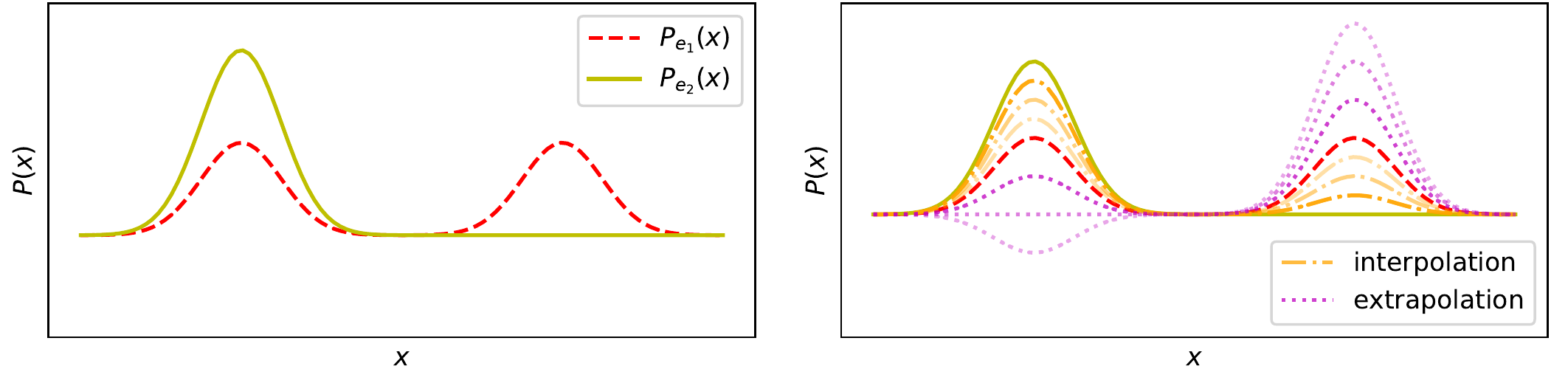}
\caption{
Extrapolation can yield a distribution with \textit{negative} $P(x)$ for some $x$.
\textbf{Left:} $P(x)$ for domains $e_1$ and $e_2$. 
\textbf{Right:} Point-wise interpolation/extrapolation of $P^{e_1}(x)$ and $P^{e_2}(x)$.
Since MM-REx target worst-case robustness across extrapolated domains, it can provide robustness to such shifts in P(X) (covariate shift).
}
\label{fig:MoG}
\vspace{-3mm}
\end{figure}

\paragraph{Covariate Shift.} %
When only $P(X)$ differs across domains (i.e.\ FRA holds), as in Figure~\ref{fig:MoG}, then $\Phi(x) = x$ is already an equipredictive representation, and so \textit{any} predictor is an invariant predictor.
Thus methods which only promote invariant prediction -- such as IRM -- are not expected to improve OOD generalization (compared with ERM).
Indeed, \citet{arjovsky2019invariant} recognize this limitation of IRM in what they call the ``realizable'' case.
Instead, what is needed is robustness to covariate shift, which REx, but not IRM, can provide.
Robustness to covariate shift can improve OOD generalization by ensuring that low-capacity models spend sufficient capacity on low-density regions of the input space; we show how REx can provide such benefits in Appendix~\ref{app:covariate_shift_example}.
But even for high capacity models, $P(X)$ can have a significant influence on what is learned; for instance \citet{sagawa2019distributionally} show that DRO can significantly improves the performance on rare groups in their with a model that achieves 100\% training accuracy in their Waterbirds dataset.
Pursuing robustness to covariate shift also comes with drawbacks for REx, however: REx does not distinguish between underfitting and inherent noise in the data, and so can force the model to make equally bad predictions everywhere, even if some examples are less noisy than others. 
\subsection{Methods of Risk Extrapolation}
We now formally describe the \textbf{Minimax REx (MM-REx)} and \textbf{Variance-REx (V-REx)} techniques for risk extrapolation.
Minimax-REx performs robust learning over a perturbation set of \textit{affine} combinations of training risks with bounded coefficients:
\begin{align}
    \mmR(\theta) 
    &\doteq \max_{\substack{
    \Sigma_e \lambda_e = 1 \\
    \lambda_e \geq \lambda_{\min}}}
    \sum_{e=1}^m \lambda_e \R_e(\theta) \\ %
    &= (1 - m\lambda_{\min}) \max_e \R_e(\theta) 
    + \lambda_{\min} \sum_{e=1}^m  \R_e(\theta)\,,
    \label{eqn:MM-REx_closed_form}
\end{align}
where $m$ is the number of domains, and the hyperparameter $\lambda_{\min}$ controls how much we extrapolate.
For negative values of $\lambda_{\min}$, MM-REx places negative weights on the risk of all but the worst-case domain, and as $\lambda_{\min} \rightarrow -\infty$, this criterion enforces strict equality between training risks; $\lambda_{\min}=0$ recovers risk interpolation (RI).
Thus, like RI, MM-REx aims to be robust in the direction of variations in $P(X,Y)$ between test domains.
However, negative coefficients allow us to extrapolate to more extreme variations.
Geometrically, larger values of $\lambda_{\min}$ expand the perturbation set farther away from the convex hull of the training risks, encouraging a flatter ``risk-plane'' (see Figure~\ref{fig:extrapolated_risks}).

While MM-REx makes the relationship to RI/RO clear, we found using the variance of risks as a regularizer (V-REx) simpler, stabler, and more effective:
\begin{align}
    \label{eqn:V-REx}
    \R_\textrm{V-REx}(\theta) \doteq \beta \; \mathrm{Var}(\{\R_1(\theta), ..., \R_m(\theta)\}) + \sum^m_{e=1} \R_e(\theta) 
\end{align}
Here $\beta \in [0, \infty)$ controls the balance between reducing average risk and enforcing equality of risks, with $\beta=0$ recovering ERM, and $\beta \rightarrow \infty$ leading V-REx to focus entirely on making the risks equal.
See Appendix for the relationship between V-REx and MM-REx and their gradient vector fields.

\vspace{-2mm} 
\subsection{Theoretical Conditions for REx to Perform Causal Discovery}
\label{sec:theoretical_conditions}
\vspace{-2mm}

We now prove that exactly equalizing training risks (as incentivized by REx) leads a model to learn the causal mechanism of $Y$ under assumptions similar to those of \citet{peters2016causal}, namely:
\vspace{-3mm}
\begin{enumerate}
    \item The causes of $Y$ are observed, i.e.\ $Pa(Y) \subseteq X$.
    \item Domains correspond to interventions on $X$.
    \item Homoskedasticity (a slight generalization of the additive noise setting assumed by \citet{peters2016causal}).  We say an SEM $\C$ is \textbf{homoskedastic} (with respect to a loss function $\ell$), if the Bayes error rate of $\ell(\mY(x), \mY(x))$ is the same for all $x \in \mathcal{X}$.\footnote{
\label{foot:homo_hetero}
Note that our definitions of \textbf{homoskedastic}/\textbf{heteroskedastic} do \textit{not} correspond to the types of domains constructed in \citet{arjovsky2019invariant}, Section 5.1, but rather are a generalization of the definitions of these terms as commonly used in statistics.
Specifically, for us, \textit{hetero}skedasticity means that the ``predicatability'' (e.g. variance) of $Y$ differs across inputs $x$, whereas for \citet{arjovsky2019invariant}, it means the  predicatability of $Y$ at a given input varies across \textit{domains}; we refer to this second type as \textit{domain}-homo/heteroskedasticity for clarity.
}
\end{enumerate}
\vspace{-3mm}

The contribution of our theory (vs. ICP) is to prove that equalizing risks is sufficient to learn the causes of $Y$.
In contrast, they insist that the entire distribution of error residuals (in predicting $Y$) be the same across domains.
We provide proof sketches here and complete proofs in the appendix.

Theorem 1 demonstrates a practical result: we can identify a linear SEM model using REx with a number of domains linear in the dimensionality of X.

\begin{theorem}
Given a Linear SEM, $X_i \leftarrow \sum_{j \neq i} \beta_{(i,j)} X_j + \varepsilon_i$, with $Y \doteq X_0$, %
and a predictor $f_\beta(X) \doteq \sum_{j: j > 0} \beta_j X_j + \varepsilon_j$ that satisfies REx (with mean-squared error) over a perturbation set of domains that contains 3 distinct $do()$ interventions for each $X_i: i>0$.
Then $\beta_j = \beta_{0,j}, \forall j$.
\end{theorem}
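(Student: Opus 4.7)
The plan is to analyze how the mean-squared risk depends on the intervention $do(X_i = c)$ and to extract enough constraints from REx's equality of risks to pin down the predictor's coefficients. Write $\gamma_j \doteq \beta_{(0,j)} - \beta_j$ for $j > 0$, so the residual factors as
\[
Y - f_\beta(X) \;=\; \sum_{j > 0} \gamma_j X_j + \varepsilon_0,
\]
and the goal is to show $\gamma_j = 0$ for every $j$. Fix $i > 0$ and consider the family of interventions $\{do(X_i = c)\}_c$. Because the SEM is linear and acyclic, for each $j$ there is a scalar $\mu_j^{(i)}$ (the total causal effect of $X_i$ on $X_j$ in the post-intervention graph, with $\mu_i^{(i)} = 1$ and $\mu_j^{(i)} = 0$ whenever $X_j$ is not a descendant of $X_i$) and a random variable $N_j^{(i)}$ that does not depend on $c$, such that under $do(X_i = c)$ we have $X_j = \mu_j^{(i)} c + N_j^{(i)}$.

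Substituting into the residual yields $Y - f_\beta(X) = M_i c + R_i$ with $M_i \doteq \sum_{j > 0} \gamma_j \mu_j^{(i)}$ and $R_i$ independent of $c$, so the risk is a polynomial of degree at most two:
\[
\R_{do(X_i = c)}(\beta) \;=\; M_i^2 \, c^2 + 2 M_i \, \mathbb{E}[R_i] \, c + \mathbb{E}[R_i^2].
\]
REx's equality of risks at three distinct values $c_1, c_2, c_3$ forces this quadratic to take the same value at three distinct points, hence to be identically constant, hence $M_i = 0$. (Three interventions are genuinely needed: two values can satisfy $c_1^2 = c_2^2$ with $c_1 = -c_2$, and even when they do not, a nonzero linear coefficient could still balance a nonzero square coefficient at any chosen pair of points; three distinct abscissas determine the quadratic.) Running this argument for every $i > 0$ produces the homogeneous linear system $\sum_{j > 0} \mu_j^{(i)} \gamma_j = 0$. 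Ordering the $X_j$ topologically makes the matrix $(\mu_j^{(i)})_{i,j}$ triangular with unit diagonal, hence invertible, so its only null vector is $\gamma = 0$, i.e.\ $\beta_j = \beta_{(0,j)}$ for every $j$.

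The main obstacle will be spelling out the decomposition $X_j^{do(X_i = c)} = \mu_j^{(i)} c + N_j^{(i)}$ rigorously for a general Linear SEM and checking that $R_i$ is truly $c$-independent with finite first and second moments; this is standard for acyclic SEMs with square-integrable noise, but needs to be written carefully so that the quadratic-in-$c$ computation is unambiguous. A secondary subtlety is that the theorem does not explicitly restrict to acyclic Linear SEMs: if cycles are permitted, the topological-ordering step must be replaced by an argument showing that the total-effect matrix in the intervened graph is still invertible (for example, via the reduced form of the structural equations), which is where most of the technical work would actually lie.
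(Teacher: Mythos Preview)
Your proposal is correct and rests on the same core observation as the paper's proof: under $do(X_i=c)$ the MSE is a polynomial of degree at most two in $c$, so equality at three distinct values kills the leading coefficient, and the DAG's topological order then forces every $\gamma_j$ to vanish. The organisation differs. The paper, adapting Theorem~4i of \citet{peters2016causal}, runs a minimal-counterexample argument: it picks the \emph{youngest} node $X_j$ with $\alpha_j\neq 0$ (so no descendant of $X_j$ carries a nonzero coefficient, and the quadratic's leading coefficient is exactly $\alpha_j^2$), compares the interventional risk with the observational one, and derives a contradiction. You instead write the explicit total-effect decomposition $X_j=\mu_j^{(i)}c+N_j^{(i)}$, obtain the full homogeneous system $\sum_j \mu_j^{(i)}\gamma_j=0$, and invoke triangularity of the total-effect matrix. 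Your route buys two things: it avoids referencing the observational domain (which the theorem statement does not place in the perturbation set; the paper uses it only as a notational anchor for the quadratic, so its argument is still valid, but yours is cleaner on this point), and it makes the dependence on acyclicity explicit rather than hiding it in the word ``youngest.'' The paper's route is shorter because the back-substitution is implicit in the choice of node. Your closing caveat about cycles is well placed---both proofs need acyclicity, and the paper assumes it tacitly.
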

\vspace{-2mm}
\textbf{Proof Sketch.}
We adapt the proof of Theorem 4i from  \citet{peters2016causal}.
They show that matching the residual errors across observational and interventional domains forces the model to learn $\mY$.
We use the weaker condition of matching risks to derive a quadratic equation that the $do()$ interventions must satisfy for any model other than $\mY$.
Since there are at most 2 solutions to a quadratic equation, insisting on equality of risks across 3 distinct $do()$ interventions forces the model to learn $\mY$.

Given the assumption that a predictor satisfies REx over \textit{all} interventions that do not change the mechanism of $Y$, we can prove a much more general result.
We now consider an arbitrary SCM, $\C$, generating $Y$ and $X$, and let $\Ec^I$ be the set of domains corresponding to arbitrary interventions on $X$, similarly to \citet{peters2016causal}.

\begin{theorem}
Suppose $\ell$ is a (strictly) proper scoring rule.
Then a predictor that satisfies REx for a over $\Ec^I$ uses $\mY(x)$ as its predictive distribution on input $x$ for all $x \in  \mathcal{X}$.
\end{theorem}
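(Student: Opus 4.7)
The plan is to exploit the fact that $\Ec^I$ contains every intervention $do(X=x)$ for $x \in \mathcal{X}$, and to combine this with strict properness of $\ell$ and the homoskedasticity assumption. In essence, equalizing risks over the one-parameter family $\{do(X=x)\}_{x\in\mathcal{X}}$ already pins down $\hat P(Y|X=x)$ pointwise.

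First I would fix $x \in \mathcal{X}$ and consider the domain $e_x \in \Ec^I$ given by $do(X=x)$. Because $Pa(Y)\subseteq X$ and the mechanism $f_Y$ is unchanged by interventions on $X$, the conditional distribution of $Y$ under $e_x$ is exactly $f_Y(x)$, so for any predictor $\hat P(Y|X)$,
\[
\R_{e_x}(\hat P) \;=\; \E_{y \sim f_Y(x)}\bigl[\ell(\hat P(Y|X=x),\, y)\bigr],
\]
which depends on $\hat P$ only through its value at $x$. Strict properness of $\ell$ then gives $\R_{e_x}(\hat P) \geq \E_{y\sim f_Y(x)}[\ell(f_Y(x),y)] =: b(x)$, with equality iff $\hat P(Y|X=x) = f_Y(x)$. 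Homoskedasticity is exactly the statement that $b(x)$ is a constant $b$ independent of $x$, so the ``oracle'' predictor $\hat P^\star(Y|X=x) := f_Y(x)$ attains the pointwise Bayes risk $b$ on every $e_x$ simultaneously and is, in particular, a predictor that satisfies REx on $\Ec^I$ (constant risk, nothing lower attainable).

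Next I would show that this oracle is the only REx-satisfier on $\Ec^I$. Reading ``satisfies REx'' as the joint requirement of equalizing risks and not being strictly dominated by another equalizer (the limit of both MM-REx and V-REx), note that $\hat P^\star$ equalizes risks at level $b$, so any REx-satisfier $\hat P$ must also equalize its risks at some level $c \leq b$. But properness forces $\R_{e_x}(\hat P) \geq b$ for every $x$, hence $c = b$ and $\R_{e_x}(\hat P) = b$ for every $x$. Strict properness then yields $\hat P(Y|X=x) = f_Y(x)$ for all $x \in \mathcal{X}$, which is the claimed conclusion.

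The main obstacle I anticipate is pinning down the precise reading of ``satisfies REx.'' If it is taken to mean only strict equality of risks across $\Ec^I$, one still has to rule out degenerate equalizers at some constant $c > b$; the cleanest way is to invoke the minimization aspect of REx (e.g., the $\lambda_{\min}\to-\infty$ limit of MM-REx, or $\beta\to\infty$ with the sum-of-risks term in V-REx) and to observe that $\hat P^\star$ is a feasible comparator achieving $c = b$. Once this is granted, the argument is a pointwise application of strict properness, with homoskedasticity doing nothing more than ensuring that the Bayes risk curve $b(x)$ is flat so that the oracle is actually feasible under the equality constraint. No causal structure beyond ``$do(X=x)\in\Ec^I$ pushes $Y$ through $f_Y$'' is used, which is why the result holds for arbitrary SCMs rather than only linear ones as in Theorem~1.
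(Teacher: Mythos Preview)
Your proposal is correct and mirrors the paper's proof: both show the oracle $\hat P^\star = f_Y$ equalizes risks at the Bayes level $b$ via homoskedasticity, then use strict properness on the point-mass domains $do(X=x)$ to force any REx-satisfier to agree with $f_Y$ pointwise. The one place the paper is slightly more explicit is in verifying that $f_Y$ has constant risk over \emph{all} of $\Ec^I$ (not just the subfamily $\{e_x\}$) by integrating the invariant pointwise risk $\R(f_Y,x)\equiv b$ against an arbitrary $P^e(x)$; you assert this feasibility step but do not spell it out, and it is needed for your comparison ``$c \le b$'' to go through.
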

\vspace{-2mm}
\textbf{Proof Sketch.}
Since the distribution of $Y$ given its parents doesn't depend on the domain, $\mY$ can make reliable point-wise predictions across domains.
This translates into equality of risk across domains when the overall difficulty of the examples is held constant across domains, e.g.\ by assuming homoskedasticity.\footnote{
Note we could also assume no covariate shift in order to fix the difficulty, but this seems hard to motivate in the context of interventions on $X$, which can change $P(X)$.
}
While a different predictor might do a better job on \textit{some} domains, we can always find an domain where it does worse than $\mY$, and so $\mY$ is both unique and optimal.

\textbf{Remark.}
Theorem 2 is only meant to provide insight into how the REx principle relates to causal invariance; the perturbation set in this theorem is uncountably infinite.
Note, however, that even in this setting, the ERM principle does \textit{not}, in general, recover the causal mechanism for $Y$.  
Rather, the ERM solution depends on the distribution over domains.
For instance, if all but an $\epsilon \rightarrow 0$ fraction of the data comes from the CMNIST training domains, then ERM will learn to use the color feature, just as in original the CMNIST task.

\begin{figure*}[ht!]
\centering
\label{fig:REx_vs_IRM_covariate_shift}
\includegraphics[width=.66\columnwidth]{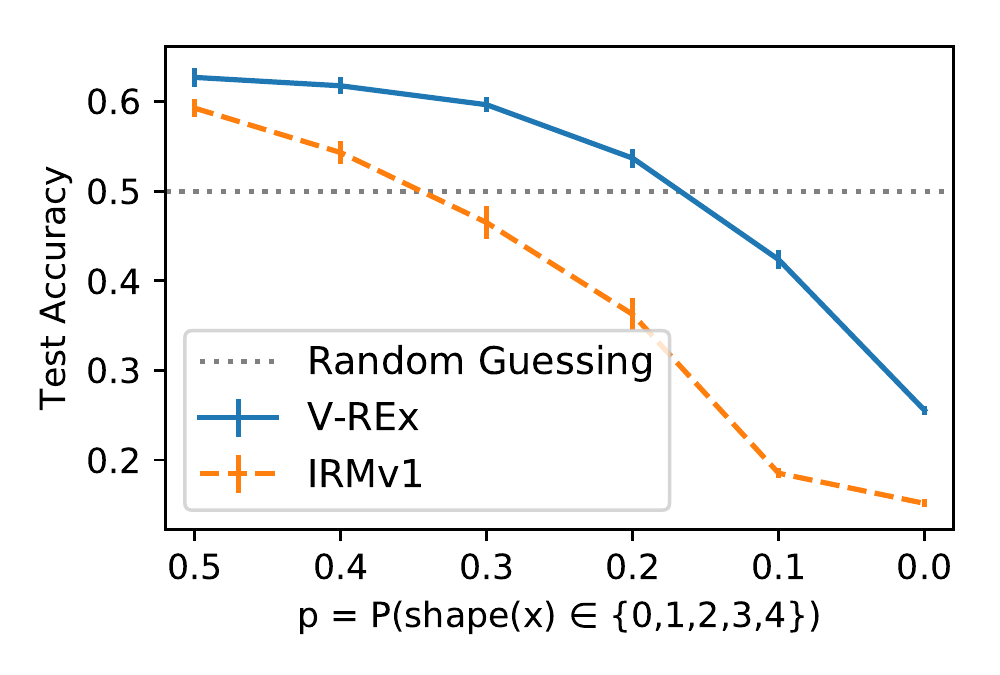}
\includegraphics[width=.66\columnwidth]{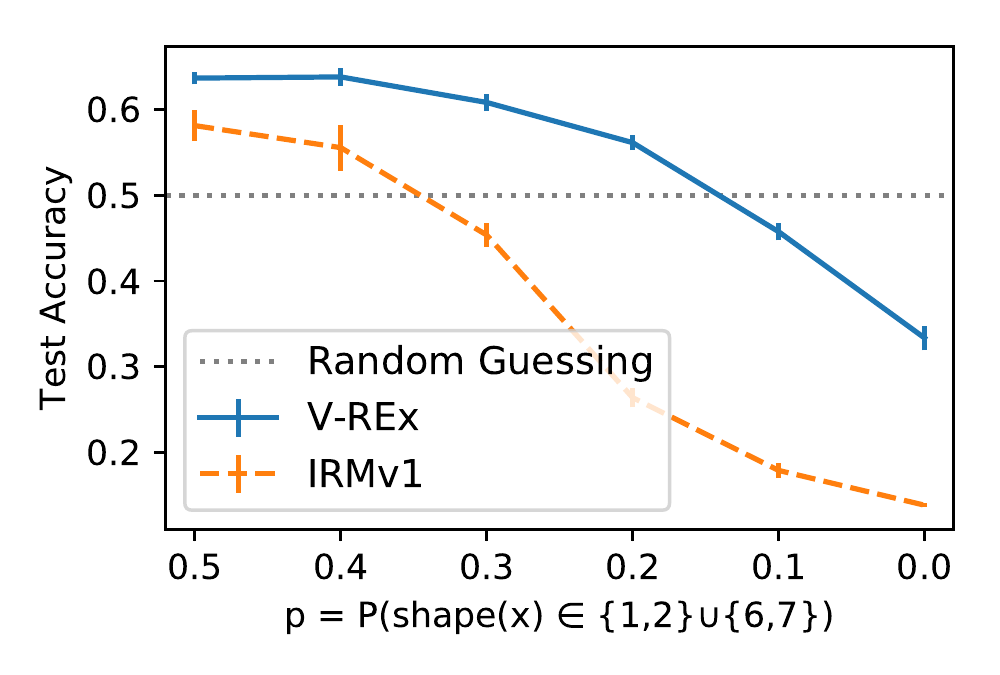}
\includegraphics[width=.66\columnwidth]{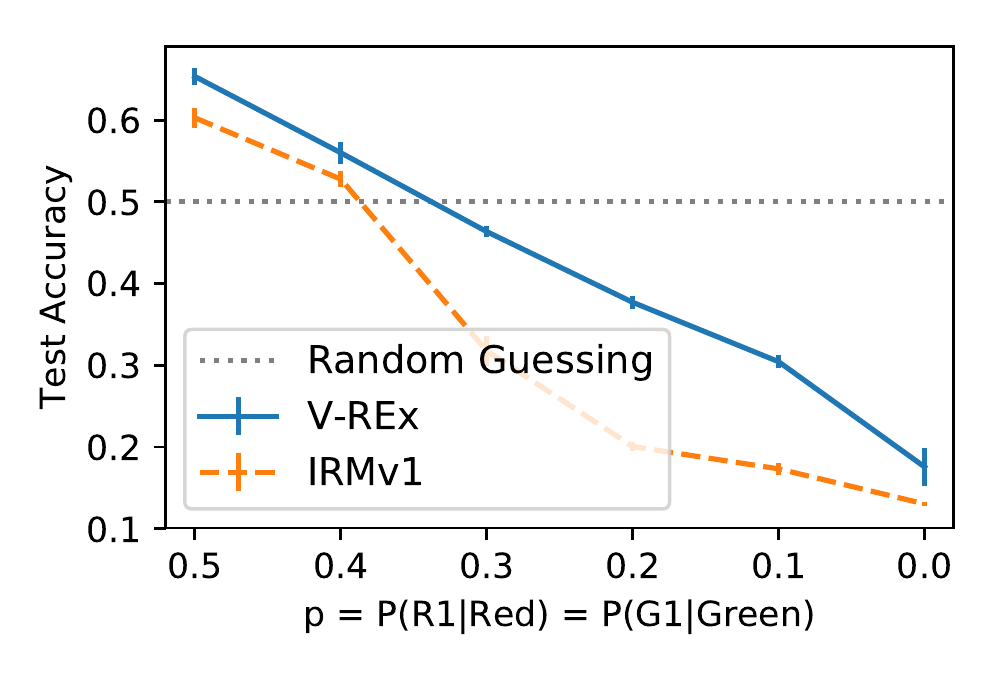}
\caption{
REx outperforms IRM on Colored MNIST variants that include covariate shift. 
The x-axis indexes increasing amount of shift between training distributions, with $p=0$ corresponding to disjoint supports. 
\textbf{Left:} class imbalance, 
\textbf{Center:} shape imbalance, 
\textbf{Right:} color imbalance.
}
\end{figure*}

\vspace{-3mm}
\section{Experiments}
\vspace{-2mm}
We evaluate REx and compare with IRM on a range of tasks requiring OOD generalization.
REx provides generalization benefits and outperforms IRM on a wide range of tasks, including: 
i) variants of the Colored MNIST (CMNIST) dataset \citep{arjovsky2019invariant} with covariate shift, 
ii) continuous control tasks with partial observability and spurious features,
iii) domain generalization tasks from the DomainBed suite \citep{gulrajani2020search}.
On the other hand, when the inherent noise in $Y$ varies across environments, IRM succeeds and REx performs poorly.

\vspace{-2mm} 
\subsection{Colored MNIST}
\vspace{-2mm} 
\label{sec:CMNIST}

\citet{arjovsky2019invariant} construct a binary classification problem (with 0-4 and 5-9 each collapsed into a single class) based on the MNIST dataset, using color as a spurious feature.
Specifically, digits are either colored red or green, and there is a strong correlation between color and label, which is reversed at test time. 
The goal is to learn the causal ``digit shape'' feature and ignore the anti-causal ``digit color'' feature. 
The learner has access to three domains:

\vspace{-2mm}
\begin{enumerate}
\itemsep.1em 
    \item A training domain where green digits have a 80\% chance of belonging to class 1 (digits 5-9).
    \item A training domain where green digits have a 90\% chance of belonging to class 1.
    \item A test domain where green digits have a 10\% chance of belonging to class 1.
\end{enumerate}
\vspace{-1.5mm}

\begin{table}[t!]
\footnotesize
\centering
\begin{tabular}{ l r r }
  \textbf{Method} & \textbf{train acc} & \textbf{test acc} \\
  \midrule
  \textbf{V-REx (ours)} & $71.5 \pm 1.0$ & \sout{$\mathbf{68.7 \pm 0.9}$} \\
  IRM & $70.8 \pm 0.9$ & \sout{$66.9 \pm 2.5$} \\
  MM-REx (ours) & $72.4 \pm 1.8$ & \sout{$66.1 \pm 1.5$} \\
  RI & $88.9 \pm 0.3$ & \sout{$22.3 \pm 4.6$} \\
  ERM & $87.4 \pm 0.2$ & \sout{$17.1 \pm 0.6$} \\
  \midrule
  Grayscale oracle \! & $73.5 \pm 0.2$ & \sout{$73.0 \pm 0.4$} \\
  Optimum & 75 & 75 \\
  Chance & 50 & 50 \\
  \bottomrule
\end{tabular}
\caption{
\small
Accuracy (percent) on Colored \linebreak MNIST.
REx and IRM learn to ignore the spurious color feature.
\sout{Strikethrough} results achieved via tuning on the test set.}
\label{tab:colored-mnist-results}
\end{table}

We use the exact same hyperparameters as \citet{arjovsky2019invariant}, only replacing the IRMv1 penalty with MM-REx or V-REx penalty.\footnote{
When there are only 2 domains, MM-REx is equivalent to a penalty on the Mean Absolute Error (MAE), see Appendix~\ref{app:MM-REx_is_MAE}.
}
These methods all achieve similar performance, see Table~\ref{tab:colored-mnist-results}.
\textbf{CMNIST with covariate shift.}
To test our hypothesis that REx should outperform IRM under covariate shift, we construct 3 variants of the CMNIST dataset. 
Each variant represents a different way of inducing covariate shift to ensure differences across methods are consistent.
These experiments combine covariate shift with interventional shift, since $P(Green | Y=1)$ still differs across training domains as in the original CMNIST.
\vspace{-1.5mm}
\begin{enumerate}
    \item \textbf{Class imbalance:} varying $p=P(\mathrm{shape(x)} \in \{0,1,2,3,4\})$; as in
    \citet{wu2020representation}.
    \item \textbf{Digit imbalance:} varying $p=P(\mathrm{shape(x)} \in \{1,2\} \cup \{6,7\})$; digits $0$ and $5$ are removed. 
    \item \textbf{Color imbalance:} We use 2 versions of each color, for 4 total channels: $R_1$, $R_2$, $G_1$, $G_2$.  
    We vary $p=P(R_1| Red) = P(G_1 | Green)$.
\end{enumerate} 
\vspace{-2.5mm}

While (1) also induces change in $P(Y)$, (2) and (3) induce \textit{only} covariate shift in the causal shape and anti-causal color features (respectively).
We compare across several levels of imbalance, $p \in [0,0.5]$, using the same hyperparameters from \citet{arjovsky2019invariant}, and plot the mean and standard error over 3 trials.

V-REx significantly outperforms IRM in every case, see Figure~\ref{fig:REx_vs_IRM_covariate_shift}.
In order to verify that these results are not due to bad hyperparameters for IRM, we perform a random search that samples 340 unique hyperparameter combinations for each value of $p$, and compare the the number of times each method achieves better than chance-level (50\% accuracy).
Again, V-REx outperforms IRM; in particular, for small values of $p$, IRM never achieves better than random chance performance, while REx does better than random in 4.4\%/23.7\%/2.0\% of trials, respectively, in the class/digit/color imbalance scenarios for $p=0.1/0.1/0.2$.
This indicates that REx can achieve good OOD generalization in settings involving both covariate and interventional shift, whereas IRM struggles to do so.
\label{sec:RL}
\begin{figure*}[ht!]
    \centering
    \includegraphics[width=.75\columnwidth]{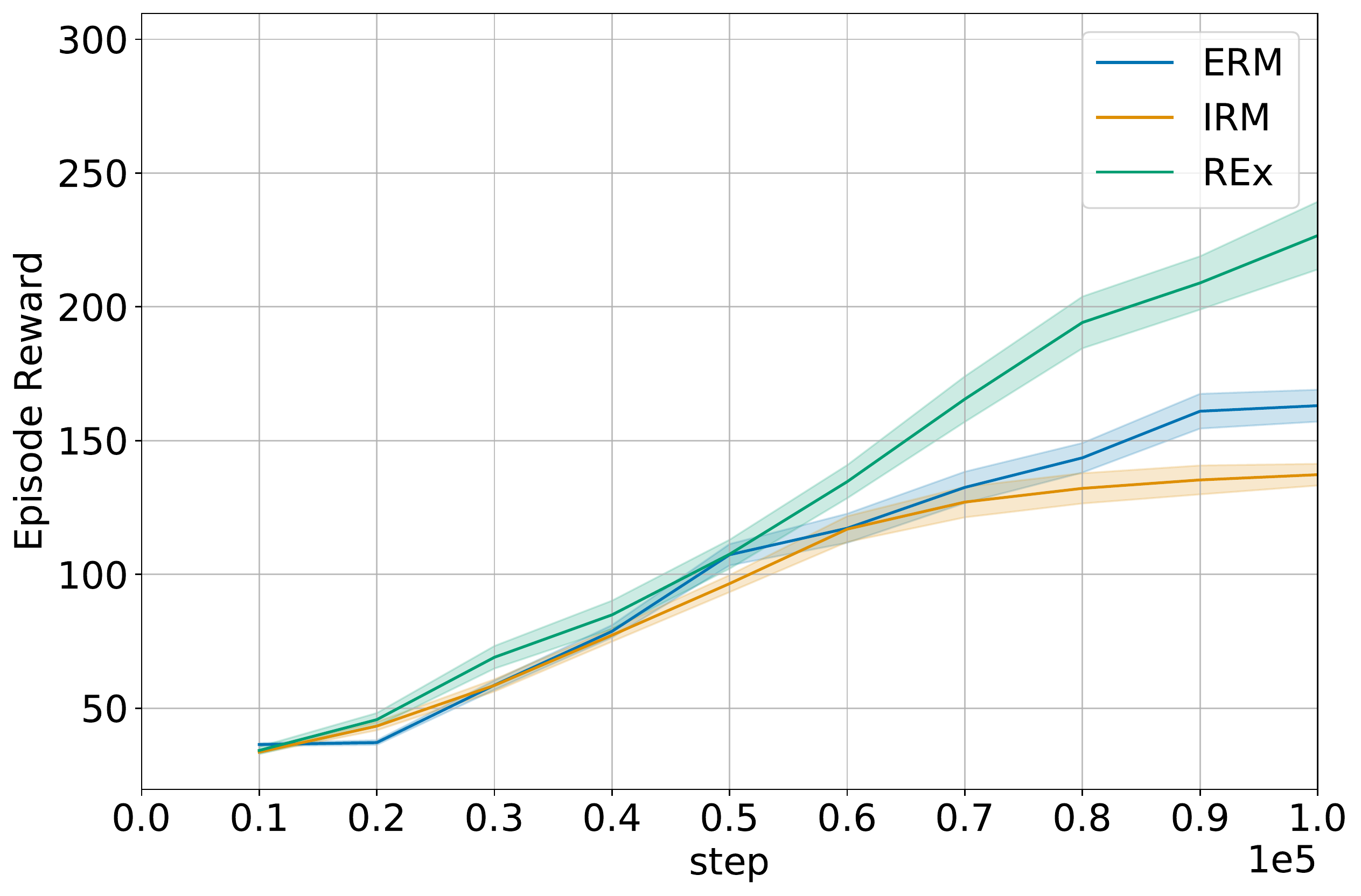}
    ~~~~~~
    \includegraphics[width=.75\columnwidth]{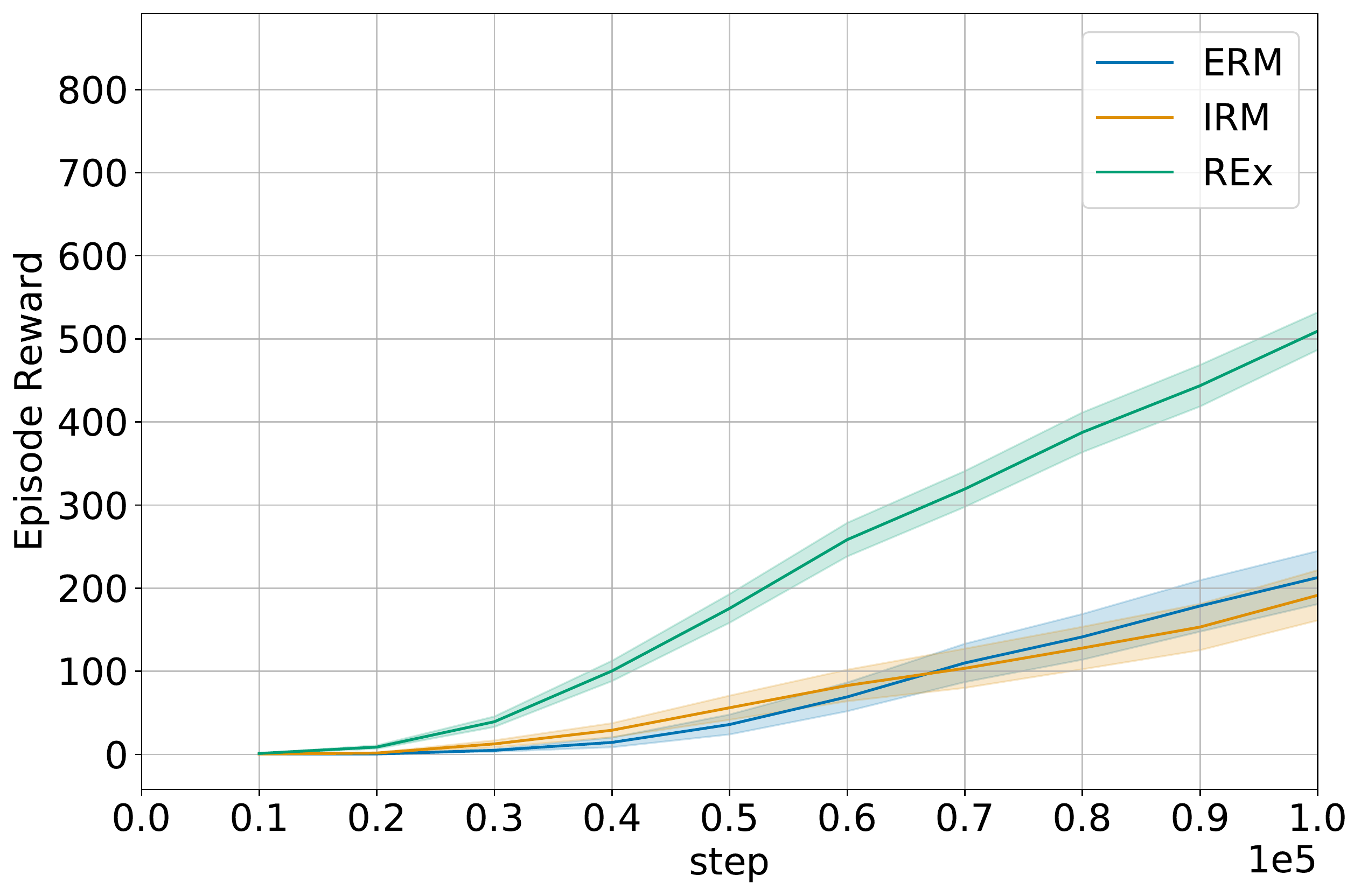}
    \caption{Performance and standard error on \texttt{walker\_walk} (\textbf{top}), \texttt{finger\_spin} (\textbf{bottom}).}
    \label{fig:mujoco_rl}
\end{figure*}
\begin{table*}
\centering
\begin{tabular}{l|ccccc}
\toprule
\textbf{Algorithm}   & \textbf{ColoredMNIST}     &  \textbf{VLCS}       & \textbf{PACS}   &   \textbf{OfficeHome}    \\
\midrule
ERM & 52.0 $\pm$ 0.1     & 77.4 $\pm$ 0.3   & 85.7 $\pm$ 0.5  & 67.5 $\pm$ 0.5 \\
IRM & 51.8 $\pm$ 0.1     & 78.1 $\pm$ 0.0 & 84.4 $\pm$ 1.1  & 66.6 $\pm$ 1.0 \\
\midrule
V-REx  & 52.1 $\pm$ 0.1  &77.9 $\pm$ 0.5  &  85.8 $\pm$ 0.6  & 66.7 $\pm$ 0.5  \\
\bottomrule
\end{tabular}
\caption{
REx, IRM, and ERM all perform comparably on a set of domain generalization benchmarks.
}
\label{tab:domainbed_maintext}
\end{table*}

\subsection{Toy Structural Equation Models (SEMs)}
REx's sensitivity to covariate shift can also be a weakness when reallocating capacity towards domains with higher risk does not help the model reduce their risk, e.g. due to irreducible noise.
We illustrate this using the linear-Gaussian structural equation model (SEM) tasks introduced by \citet{arjovsky2019invariant}.
Like CMNIST, these SEMs include spurious features by construction.
They also introduce 1) heteroskedasticity, 2) hidden confounders, and/or 3) elements of $X$ that contain a mixture of causes and effects of $Y$.
These three properties highlight advantages of IRM over ICP \citep{peters2016causal}, as demonstrated empirically by \citet{arjovsky2019invariant}.
REx is also able to handle (2) and (3), but it performs poorly in the heteroskedastic tasks.
See Appendix~\ref{app:SEMs} for details and Table~\ref{tab:SEMs} for results.

\subsection{Domain Generalization in the DomainBed Suite}
Methodologically, it is inappropriate to assume access to the test environment in domain generalization settings, as the goal is to find methods which generalize to \textit{unknown} test distributions.
\citet{gulrajani2020search} introduced the DomainBed evaluation suite to rigorously compare existing approaches to domain generalization, and found that no method reliably outperformed ERM.
We evaluate V-REx on DomainBed using the most commonly used training-domain validation set method for model selection.
Due to limited computational resources, we limited ourselves to the 4 cheapest datasets. 
Results of baseline are taken from \citet{gulrajani2020search}, who compare with more methods.
Results in Table~\ref{tab:domainbed_maintext} give the average over 3 different train/valid splits.
\subsection{Reinforcement Learning with partial observability and spurious features}
Finally, we turn to reinforcement learning, where covariate shift (potentially favoring REx) and heteroskedasticity (favoring IRM) both occur naturally as a result of randomness in the environment and policy.
In order to show the benefits of invariant prediction, we modify tasks from the Deepmind Control Suite~\citep{deepmindcontrolsuite2018} to include spurious features in the observation, and train a Soft Actor-Critic~\citep{haarnoja2018sac} agent.
REx outperforms both IRM and ERM, suggesting that REx's robustness to covariate shift outweighs the challenges it faces with heteroskedasticity in this setting, %
see Figure~\ref{fig:mujoco_rl}. \linebreak
We average over 10 runs on \texttt{finger\_spin} and \texttt{walker\_walk}, using hyperparameters tuned on \texttt{cartpole\_swingup} (to avoid overfitting).
See Appendix for details and further results.

\section{Conclusion}
\vspace{-3mm}

We have demonstrated that REx, a method for robust optimization, can provide robustness and hence out-of-distribution generalization in the challenging case where $X$ contains both causes and effects of $Y$. 
In particular, like IRM, REx can perform causal identification, but REx can also perform more robustly in the presence of covariate shift. 
Covariate shift is known to be problematic when models are misspecified, when training data is limited, or does not cover areas of the test distribution. 
As such situations are inevitable in practice, REx's ability to outperform IRM in scenarios involving a combination of covariate shift and interventional shift makes it a powerful approach. 

\FloatBarrier

\bibliography{main}
\bibliographystyle{icml2021}

\cleardoublepage
\begin{appendices}

\newtheorem{appendix_theorem}{Theorem}

\onecolumn

\vspace{-4mm}
\section{Appendix Overview}
\vspace{-2mm}

Our code is available online at:  \href{https://anonymous.4open.science/r/12747e81-8505-43cb-b54e-e75e2344a397/}{https://anonymous.4open.science/r/12747e81-8505-43cb-b54e-e75e2344a397/}.
The sections of our appendix are as follows:
\vspace{-2mm}
\begin{enumerate} [A)]
    \item Appendix Overview
    \item Definition and discussion of extrapolation in machine learning
    \item Illustrative examples of how REx works in toy settings\
    \item A summary of different types of causal model
    \item Theory
    \item The relationship between MM-REx vs. V-REx, and the role each plays in our work
    \item Further results and details for experiments mentioned in main text
    \item Experiments not mentioned in main text
    \item Overview of other topics related to OOD generalization
\end{enumerate}

\vspace{-4mm}
\section{Definition and discussion of extrapolation in machine learning}  %
\vspace{-2mm}
\label{app:extrapolation}

We define interpolation and extrapolation as follows: 
\textbf{interpolation} refers to making decisions or predictions about points \textit{within} the convex hull of the training examples and \textbf{extrapolation} refers to making decisions or predictions about points \textit{outside} their convex hull.\footnote{
Surprisingly, we were not able to find any existing definition of these terms in the machine learning literature.  
They have been used in this sense \citep{hastie2009elements, haffner2002}, but also to refer to strong generalization capabilities more generally \citep{sahoo2018learning}.
}
This generalizes the familiar sense of these terms for one-dimensional functions.
An interesting consequence of this definition is: for data of high intrinsic dimension, generalization \textit{requires} extrapolation \citep{hastie2009elements}, even in the i.i.d.\ setting.
This is because the volume of high-dimensional manifolds
concentrates near their boundary; see Figure~\ref{fig:hypersphere}.

\paragraph{Extrapolation in the space of risk functions.}
The same geometric considerations apply to extrapolating to new domains.
Domains can be highly diverse, varying according to high dimensional attributes, and thus requiring extrapolation to generalize across.
Thus Risk Extrapolation might often do a better job of including possible test domains in its perturbation set than Risk Interpolation does.

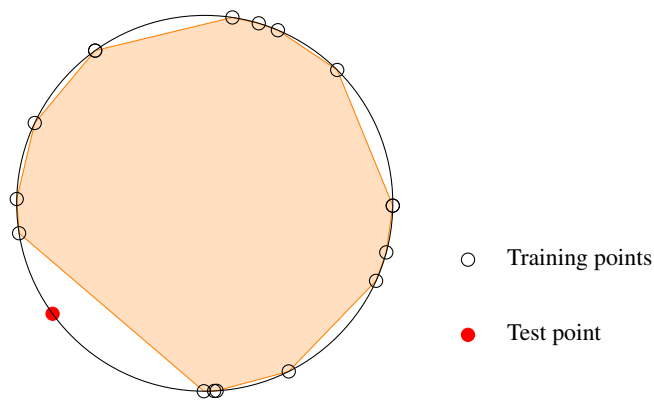
\begin{figure}[htp!]
\centering
\begin{tikzpicture}[
    style1/.style={mark=o, mark size=1pt, fill=orange!25, draw=orange, mark options={color=black}},
    style2/.style={mark=*, mark size=1pt, mark options={color=red}}
    ]
    \begin{scope}[scale=2.5]
    \draw[style1] plot coordinates {
       ( 0.99992118, -0.01255534)
       ( 0.9653067 , -0.26111869)
       ( 0.91096372, -0.41248648)
       ( 0.44668959, -0.894689  )
       ( 0.06300234, -0.99801338)
       ( 0.04994367, -0.99875204)
       (-0.00478575, -0.99998855)
       (-0.98723335, -0.1592806 )
       (-0.99974001,  0.02280153)
       (-0.90402234,  0.42748522)
       (-0.58301292,  0.81246288)
       (-0.58241451,  0.81289196)
       ( 0.14722415,  0.98910315)
       ( 0.28729285,  0.95784279)
       ( 0.38889766,  0.92128096)
       ( 0.70454086,  0.70966343)
       ( 0.99992118, -0.01255534)
       };
    \draw[style2] plot coordinates {(-0.80901, -0.58778)};
    \draw (0,0) circle (1cm);
    \draw[style1] plot coordinates {(1.4, -0.3)} ++(1ex,0) node[right] {\footnotesize Training points};
    \draw[style2] plot coordinates {(1.4,-0.7)} ++(1ex,0) node[right] {\footnotesize Test point};
    \end{scope}
\end{tikzpicture}
\caption{
Illustration of the importance of extrapolation for generalizing in high dimensional space.
In high dimensional spaces, mass concentrates near the boundary of objects.
For instance, the uniform distribution over a ball in $N+1$-dimensional space can be approximated by the uniform distribution over the $N$-dimensional hypersphere.
We illustrate this in 2 dimensions, using the $1$-sphere (i.e. the unit circle).
Dots represent a finite training sample, and the shaded region represents the convex hull of all but one member of the sample.
Even in 2 dimensions, we can see why any point from a finite sample from such a distribution remains outside the convex hull of the other samples, with probability 1.
The only exception would be if two points in the sample coincide \textit{exactly}.
}
\label{fig:hypersphere}
\end{figure}

\FloatBarrier

\section{Illustrative examples of how REx works in toy settings}

Here, we work through two examples to illustrate:
\begin{enumerate}
    \item How to understand extrapolation in the space of probability density/mass functions (PDF/PMFs)
    \item How REx encourages robustness to covariate shift via distributing capacity more evenly across possible input distributions.
\end{enumerate}

\subsection{6D example of REx}
\label{app:6D}
Here we provide a simple example illustrating how to understand extrapolations of probability distributions.
Suppose $X \in \{0,1,2\}$ and $Y \in \{0,1\}$, so there are a total of $6$ possible types of examples, and we can represent their distributions in a particular domain as a point in 6D space: %
$(P(0,0), P(0,1), P(1,0), P(1,1), P(2,0), P(2,1))$.
Now, consider three domains $e_1, e_2, e_3$ given by 
\begin{enumerate}
    \item $(a,b,c,d,e,f)$
    \item $(a,b,c,d,e-k,f+k)$
    \item $(2a,2b,c(1 -\frac{a+b}{c+d}),d(1 -\frac{a+b}{c+d}),e,f)$
\end{enumerate}
The difference between $e_1$ and $e_2$ corresponds to a shift in $P(Y|X=2)$, and suggests that $Y$ cannot be reliably predicted across different domains when $X=2$.
Meanwhile, the difference between $e_1$ and $e_3$ tells us that the relative probability of $X=0$ vs. $X=1$ can change, and so we might want our model to be robust to these sorts of covariate shifts.
Extrapolating risks across these 3 domains effectively tells the model: ``don't bother trying to predict $Y$ when $X=2$ (i.e.\ aim for $\hat{P}(Y=1|X=2) = .5$), and split your capacity equally across the $X=0$ and $X=1$ cases''.
By way of comparison, IRM would also aim for $\hat{P}(Y=1|X=2) = .5$, whereas ERM would aim for $\hat{P}(Y=1|X=2) = \frac{3f + k}{3e + 3f}$ (assuming $|D_1| = |D_2| = |D_3|$).
And unlike REx, both ERM and IRM would split capacity between $X=0/1/2$ cases according to their empirical frequencies.
\subsection{Covariate shift example}
\label{app:covariate_shift_example}

We now give an example to show how REx provides robustness to covariate shift.
Covariate shift is an issue when a model has limited capacity or limited data.

\newcommand{\cheap}{\mathrm{CHEAP}}
\newcommand{\costly}{\mathrm{COSTLY}}
Viewing REx as robust learning over the affine span of the training distributions reveals its potential to improve robustness to distribution shifts. Consider a situation in which a model encounters two types of inputs: $\costly$ inputs with probability $q$ and $\cheap$ inputs with probability $1-q$.
The model tries to predicts the input -- it outputs $\costly$ with probability $p$ and $\cheap$ with probability $1-p$.
If the model predicts right its risk is $0$, but if it predicts 
$\costly$ instead of $\cheap$ it gets a risk $u=2$, and if it predicts $\cheap$ instead of $\costly$ it gets a risk $v=4$.
The risk has expectation $\R_q(p) = (1-p)(1-q)u + p q v $.
We have access to two domains with different input probabilities $q_1 < q_2$.
This is an example of pure covariate shift.

\begin{figure}[h]
    \centering
    \includegraphics[width=.75\textwidth]{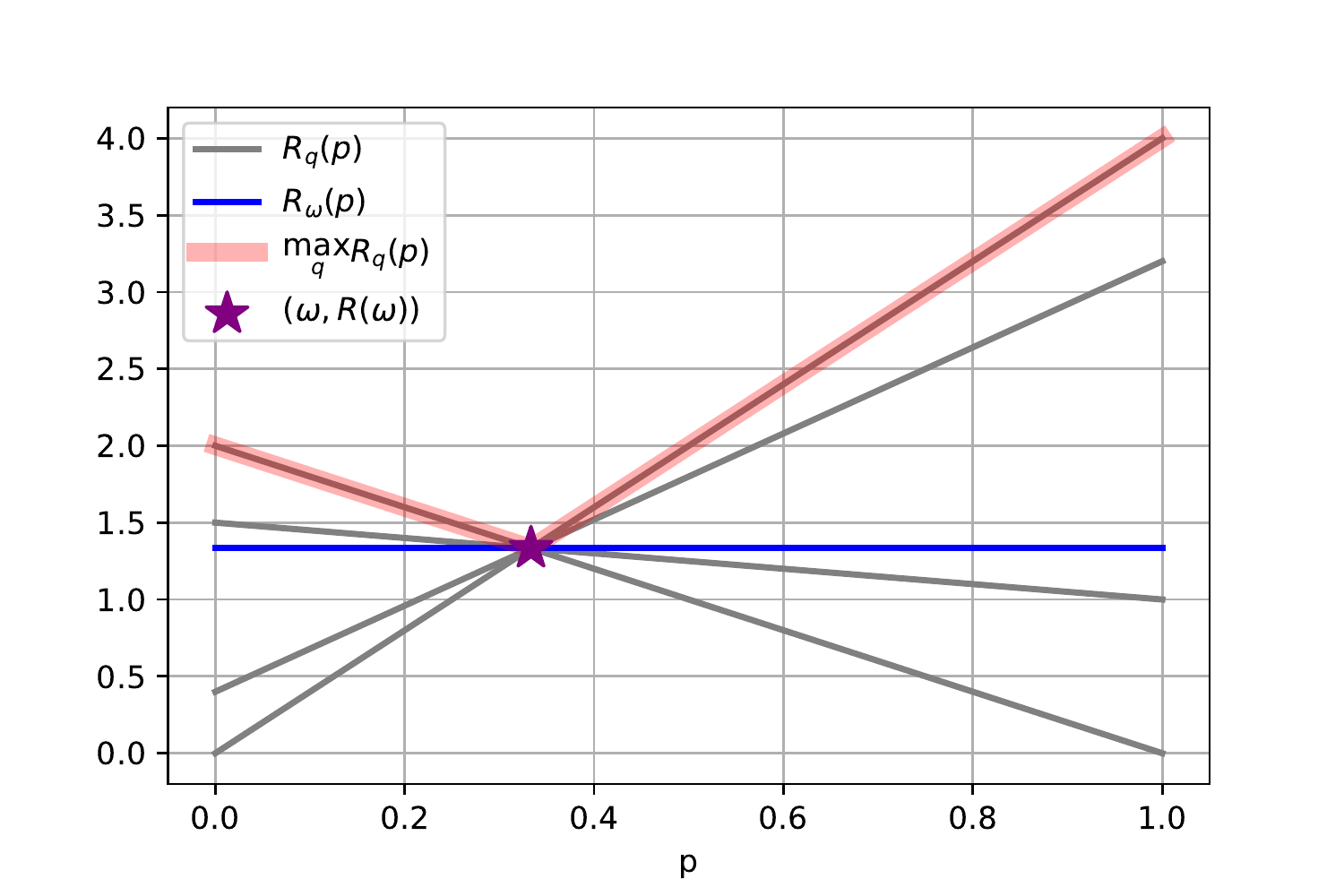}
    \caption{Each grey line is a risk $\R_q(p)$ as functions of $p$ for a specific value of $q$. The blue line is when $q=\omega$. We highlight in red the curve $\max_q \R_q(p)$ whose minimum is the saddle point marked by a purple star in $p=\omega$.}
    \label{fig:toy_example}
\end{figure}

We want to guarantee the minimal risk over the set of all possible domains:
\begin{align*}
    \min_{p\in[0,1]} \max_{q\in[0,1]} \R_q(p) = (1-p)(1-q)u + p q v
\end{align*}
as illustrated in Figure~\ref{fig:toy_example}.
The saddle point solution of this problem is $p = \omega=\nicefrac{u}{u+v}$ and $\R_q(p)=\nicefrac{u v}{u+v}, \forall q$. 
From the figure we see that $\R_{q_1}(p)=\R_{q_2}(p)$ can only happen for $p=\omega$, so the risk extrapolation principle will return the minimax optimal solution.

If we use ERM to minimize the risk, we will pool together the domains into a new domain with $\costly$ input probability $\bar q = (q_1+q_2)/2$. ERM will return $p=0$ if $\bar q > \omega$ and $p=1$ otherwise.
Risk interpolation (RI) $\min_p \max_{q\in\{q_1,q_2\}} R_q(p)$ will predict $p=0$ if $q_1,q_2>\omega$, $p=1$ if $q_1,q_2 < \omega$ and $p=\omega$ if $q_1< \omega< q_2$.
We see that only REx finds the minimax optimum for arbitrary values of $q_1$ and $q_2$.

\section{A summary of different types of causal models}

Here, we briefly summarize the differences between 3 different types of causal models, see Table~\ref{tab:causal_models_comparison}.
Our definitions and notation follow \textit{Elements of Causal Inference: Foundations and Learning Algorithms} \citep{peters2017elements}.

A \textbf{Causal Graph} is a directed acyclic graph (DAG) over a set of nodes corresponding to random variables $\mathbf{Z}$, where edges point from causes (including noise variables) to effects.
A \textbf{Structural Causal Model (SCM)}, $\C$, additionally specifies a \textit{deterministic} mapping $f_Z$ for every node $Z$, which computes the value of that node given the values of its parents, which include a special noise variable $N_Z$, which is sampled independently from all other nodes.
This $f_Z$ is called the \textbf{mechanism}, \textbf{structural equation}, or \textbf{structural assignment} for $Z$.
Given an SCM, $\C$, the \textbf{entailed distribution} of $\C$, $P^\C(\mathbf{Z})$ is defined via ancestral sampling.
Thus for any $Z \in \mathbf{Z}$, we have that the marginal distribution $P^\C(Z | \mathbf{Z} \setminus Z) = P^\C(Z | Pa(Z))$.
A \textbf{Causal Graphical Model (CGM)} can be thought of as specifying these marginal distributions \textit{without} explicitly representing noise variables $N_Z$.
We can draw rough analogies with (non-causal) statistical models.
Roughly speaking, Causal Graphs are analogous to Graphical Models, whereas SCMs and CGMs are analogous to joint distributions.

\begin{table}[ht!]
\centering
\renewcommand{\arraystretch}{1.3}
\begin{tabular}{c|c|c|c|c} 
Model & Independences & Distributions & Interventions & Counterfactuals \\
\midrule
 Graphical Model &\cmark & \xmark & \xmark & \xmark \\
 Joint Distribution & \cmark & \cmark & \xmark & \xmark \\
\midrule
\midrule
 Causal Graph & \cmark & \xmark & \cmark & \xmark \\
 Causal Graphical Model & \cmark & \cmark & \cmark & \xmark \\
 Structural Causal Model & \cmark & \cmark & \cmark & \cmark \\
\end{tabular}
\caption{
A comparison of causal and non-causal models.
}
\label{tab:causal_models_comparison}
\end{table}

\section{Theory}
\label{sec:theory}

\subsection{Proofs of theorems 1 and 2}
\label{sec:causal_theory}

The REx principle (Section~\ref{sec:methods}) has two goals:
\begin{enumerate}
\itemsep.2em 
    \item Reducing training risks
    \item Increasing similarity of training risks.
\end{enumerate} 
In practice, it may be advantageous to trade-off these two objectives, using a hyperparameter (e.g.\ $\beta$ for V-REx or $\lambda_\mathrm{min}$ for MM-REx).
However, in this section, we assume the 2nd criteria takes priority; i.e.\ we define ``satisfying'' the REx principle as selecting a minimal risk predictor among those that achieve \textit{exact} equality of risks across all the domains in a set $\Ec$.

Recall our assumptions from Section~\ref{sec:theoretical_conditions} of the main text:
\begin{enumerate}
    \item The causes of $Y$ are observed, i.e.\ $Pa(Y) \subseteq X$.
    \item Domains correspond to interventions on $X$.
    \item Homoskedasticity (a slight generalization of the additive noise setting assumed by \citet{peters2016causal}).  We say an SEM $\C$ is \textbf{homoskedastic} (with respect to a loss function $\ell$), if the Bayes error rate of $\ell(\mY(x), \mY(x))$ is the same for all $x \in \mathcal{X}$.
\end{enumerate}
And see Section~\ref{sec:invariance_and_causality} for relevant definitions and notation.

We begin with a theorem based on the setting explored by \citet{peters2016causal}.
Here, $\varepsilon_i \doteq N_i$ are assumed to be normally distributed.
\begin{appendix_theorem}
Given a Linear SEM, $X_i \leftarrow \sum_{j \neq i} \beta_{(i,j)} X_j + \varepsilon_i$, with $Y \doteq X_0$, %
and a predictor $f_\beta(X) \doteq \sum_{j: j > 0} \beta_j X_j + \varepsilon_j$ that satisfies REx (with mean-squared error) over a perturbation set of domains that contains 3 distinct $do()$ interventions for each $X_i: i>0$.
Then $\beta_j = \beta_{0,j}, \forall j$.
\end{appendix_theorem}
\begin{proof}
We adapt the proof of Theorem 4i from  \citet{peters2016causal} to show that REx will learn the correct model under similar assumptions.
Let $Y \leftarrow \gamma X + \varepsilon$ be the mechanism for $Y$, assumed to be fixed across all domains, and let $\hat{Y} = \beta X$ be our predictor.
Then the residual is $R(\beta) = (\gamma - \beta) X + \varepsilon$. %
Define $\alpha_i \doteq \gamma_i - \beta_i$, and consider an intervention $do(X_j = x)$ on the youngest node $X_j$ with $\alpha_j \neq 0$.
Then as in eqn 36/37 of \citet{peters2016causal}, we compare the residuals $R$ of this intervention and of the observational distribution:

\begin{align}
R^\text{obs}(\beta) = \alpha_j X_j + \sum_{i \neq j} \alpha_i X_i + \varepsilon &&
R^{do(X_j=x)}(\beta) = \alpha_j x + \sum_{i \neq j} \alpha_i X_i + \varepsilon
\end{align}

We now compute the MSE risk for both domains, set them equal, and simplify to find a quadratic formula for $x$:

\begin{align}
\E \left[(\alpha_j X_j + \sum_{i \neq j} \alpha_i X_i + \varepsilon)^2 \right] = \E \left[(\alpha_j x + \sum_{i \neq j} \alpha_i X_i + \varepsilon)^2\right] \\
0 = \alpha_j^2 x^2 + 2\alpha_j \E[\sum_{i \neq j} \alpha_i X_i + \varepsilon] x -  \E \left[(\alpha_j X_j)^2  - 2 \alpha_j X_j(\sum_{i \neq j} \alpha_i X_i + \varepsilon)\right]
\end{align}

Since there are at most two values of $x$ that satisfy this equation, any other value leads to a violation of REx, so that $\alpha_j$ needs to be zero -- contradiction.
In particular having domains with 3 different $do$-interventions on every $X_i$ guarantees that the risks are not equal across all domains.
\end{proof}

Given the assumption that a predictor satisfies REx over \textit{all} interventions that do not change the mechanism of $Y$, we can prove a much more general result.
We now consider an arbitrary SCM, $\C$, generating $Y$ and $X$, and let $\Ec^I$ be the set of domains corresponding to arbitrary interventions on $X$, similarly to \citet{peters2016causal}.

We emphasize that the predictor is not restricted to any particular class of models, and is a generic function $f: \mathcal{X} \rightarrow \mathcal{P}(Y)$, where $\mathcal{P}(Y)$ is the set of distributions over $Y$.
Hence, we drop $\theta$ from the below discussion and simply use $f$ to represent the predictor, and $\R(f)$ its risk.

\begin{appendix_theorem}
Suppose $\ell$ is a (strictly) proper scoring rule.
Then a predictor that satisfies REx for a over $\Ec^I$ uses $\mY(x)$ as its predictive distribution on input $x$ for all $x \in  \mathcal{X}$.
\end{appendix_theorem}
\begin{proof}
Let $\R^e(f,x)$ be the loss of predictor $f$ on point $x$ in domain $e$, and $\R^e(f) = \int_{P^e(x)} \R^e(f,x)$ be the risk of $f$ in $e$.
Define $\iota(x)$ as the domain given by the intervention $do(X=x)$, and note that $\R^{\iota(x)}(f) = \R^{\iota(x)}(f,x)$.
We additionally define $X_1 \doteq Par(Y)$.

\textbf{The causal mechanism, $\mY$, satisfies the REx principle over $\Ec^I$.}
For every $x \in \mathcal{X}$, $\mY(x) = P(Y|do(X=x)) = P(Y|do(X_1 = x_1)) = P(Y|X_1 = x_1)$ is invariant (meaning `independent of domain') by definition; $P(Y|do(X=x)) = P(Y|do(X_1=x_1)) = P(Y|X_1 = x_1)$ follows from the semantics of SEM/SCMs, and the fact that we don't allow $\mY$ to change across domains. 
Specifically $Y$ is always generated by the same ancestral sampling process that only depends on $X_1$ and $N^Y$.
Thus the risk of the predictor $\mY(x)$ at point $x$, $\R^e(\mY,x) = \ell(\mY(x), \mY(x))$ is also invariant, soit $\R(\mY,x)$.
Thus $\R^e(\mY) = \int_{P^e(x)} \R^e(\mY,x) = \int_{P^e(x)} \R(\mY,x)$ is invariant whenever $\R(\mY,x)$ does not depend on $x$, and the homoskedasticity assumption ensures that this is the case. %
This establishes that setting $f = \mY$ will produce equal risk across domains.

\textbf{No other predictor satisfies the REx principle over $\Ec^I$.} %
We show that any other $g$ achieves higher risk than $\mY$ for at least one domain.
This demonstrates both that $\mY$ achieves minimal risk (thus satisfying REx), and that it is the unique predictor which does so (and thus no other predictors satisfy REx).
We suppose such a $g$ exists and construct an domain where it achieves higher risk than $\mY$.
Specifically, if $g \neq \mY$ then let $x \in \mathcal{X}$ be a point such that $g(x) \neq \mY(x)$.
And since $\ell$ is a strictly proper scoring rule, this implies that $\ell(g(x), \mY(x)) > \ell(\mY(x), \mY(x))$.
But $\ell(g(x), \mY(x))$ is exactly the risk of $g$ on the domain $\iota(do(X=x))$, %
 and thus $g$ achieves higher risk than $\mY$ in $\iota(do(X=x))$, a contradiction.
\end{proof}

\subsection{REx as DRO}
\label{sec:DRO_theory}

We note that MM-REx is also performing robust optimization over a convex hull, see Figure~\ref{fig:risk_plane}.
The corners of this convex hull correspond to ``extrapolated domains'' with coefficients 
$(\lambda_{\min}, \lambda_{\min}, ..., (1 - (m-1)\lambda_{\min}))$ (up to some permutation).
However, these domains do not necessarily correspond to valid probability distributions; in general, they are quasidistributions, which can assign negative probabilities to some examples.
This means that, even if the original risk functions were convex, the extrapolated risks need not be.
However, in the case where they \textit{are} convex, then existing theorems, such as the convergence rate result of \citep{sagawa2019distributionally}.
This raises several important questions:

\begin{enumerate}
    \item When is the affine combination of risks convex?
    \item What are the effects of negative probabilities on the optimization problem REx faces, and the solutions ultimately found?
\end{enumerate}

\paragraph{Negative probabilities:}
Figure~\ref{fig:negative_probs} illustrates this for a case where $\mathcal{X} = \mathbb{Z}_2^2$, i.e.\ $x$ is a binary vector of length 2.
Suppose $x_1, x_2$ are independent in our training domains, and represent the distribution for a particular domain by the point $(P(X_1=1), P(X_2=1))$.
And suppose our 4 training distributions have $(P(X_1=1), P(X_2=1))$ equal to $\{(.4,.1), (.4, .9), (.6,.1), (.6,.9)\}$, with $P(Y|X)$ fixed.

\begin{figure}
    \centering
    \includegraphics{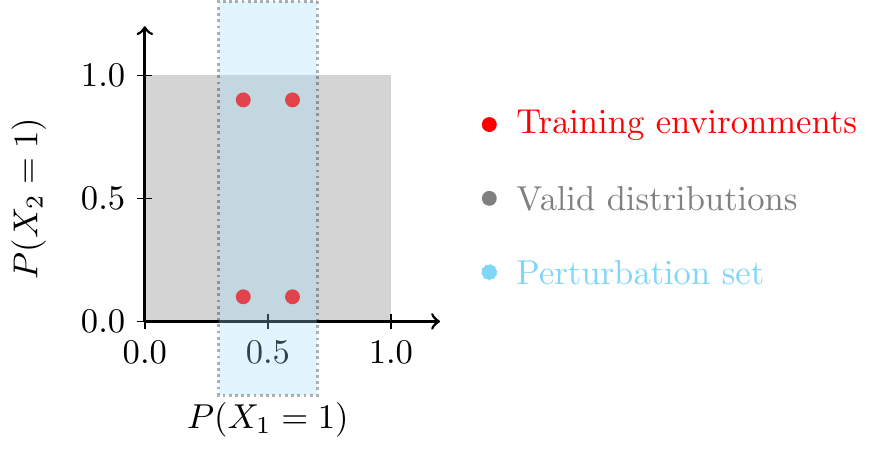}
    \caption{
    The perturbation set for MM-REx can include ``distributions'' which assign invalid (e.g. negative) probabilities to some data-points.
    The range of valid distributions $P(X)$ is shown in grey, and $P(X)$ for 4 different training domains are shown as red points.
    The interior of the dashed line shows the perturbation set for $\lambda_{\min}=-1/2$.
    }
    \label{fig:negative_probs}
\end{figure}

\section{The relationship between MM-REx vs. V-REx, and the role each plays in our work}
The MM-REx and V-REx methods play different roles in our work:
\begin{itemize}
    \item We use MM-REx to illustrate that REx can be instantiated as a variant of robust optimization, specifically a generalization of the common Risk Interpolation approach.
    We also find MM-REx provides a useful geometric intuition, since we can visualize its perturbation set as an expansion of the convex hull of the training risks or distributions.
    \item We expect V-REx to be the more practical algorithm.
    It is simple to implement.
    And it performed better in our CMNIST experiments;
    we believe this may be due to V-REx providing a smoother gradient vector field, and thus more stable optimization, see Figure ~\ref{fig:grad_vector_fields}.
\end{itemize}

Either method recovers the REx principle as a limiting case, as we prove in Section~\ref{sec:proof_of_REx}.
We also provide a sequence of mathematical derivations that sheds light on the relationship between MM-REx and V-REx in Section~\ref{sec:derivations} 
we can view these as a progression of steps for moving from the robust optimization formulation of MM-REx to the penalty term of V-REx:
\begin{enumerate}
    \item \textbf{From minimax to closed form:} We show how to arrive at the closed-form version of MM-REx provided in Eqn.~\ref{eqn:MM-REx_closed_form}.
    \item \textbf{Closed form as mean absolute error:} The closed form of MM-REx is equivalent to a mean absolute error (MAE) penalty term when there are only two training domains.
    \item \textbf{V-REx as mean squared error:} V-REx is exactly equivalent to a mean squared error penalty term (always).  Thus in the case of only two training domains, the difference between MM-REx and V-REx is just a different choice of norm.
\end{enumerate}

\begin{figure}[htp!]
\centering
\label{fig:grad_vector_fields}
\includegraphics[width=.7\columnwidth]{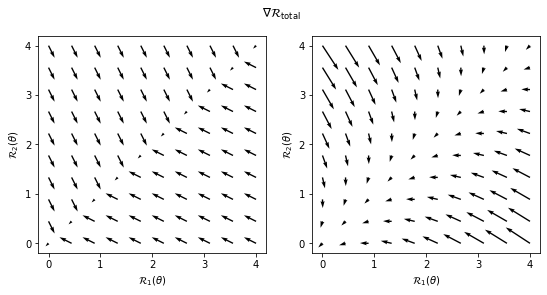}
\caption{
Vector fields of the gradient evaluated at different values of training risks $\R_1(\theta)$, $\R_2(\theta)$. 
We compare the gradients for $\mmR$ (\textbf{left}) and $\VR$ (\textbf{right}). %
Note that for $\VR$, the gradient vectors curve smoothly towards the direction of the origin, as they approach the diagonal (where training risks are equal); this leads to a smoother optimization landscape. %
}
\end{figure}

\subsection{V-REx and MM-REx enforce the REx principle in the limit}
\label{sec:proof_of_REx}
We  prove that both MM-REx and V-REx recover the constraint of perfect equality between risks in the limit of $\lambda_{\min} \rightarrow - \infty$ or $\beta \rightarrow \infty$, respectively.
For both proofs, we assume all training risks are finite.

\begin{proposition}
The MM-REx risk of predictor $f_\theta$, $\R_{\mathrm{MM-REx}}(\theta) \rightarrow \infty$ as $\lambda_{\min} \rightarrow -\infty$ unless $\R^d = \R^e$ for all training domains $d,e$.
\end{proposition}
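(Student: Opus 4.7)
The plan is to leverage the closed-form expression for $\mmR$ already derived in the paper (Eqn.~\ref{eqn:MM-REx_closed_form}), which turns what looks like a constrained maximization problem into a simple scalar expression in $\lambda_{\min}$. This should make the limit $\lambda_{\min}\to-\infty$ transparent.

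First, I would let $\R_{\max}\doteq\max_e\R_e(\theta)$ and $S\doteq\sum_{e=1}^m\R_e(\theta)$, and rewrite the closed form as
$$\mmR(\theta)=(1-m\lambda_{\min})\R_{\max}+\lambda_{\min} S=\R_{\max}+\lambda_{\min}\bigl(S-m\R_{\max}\bigr).$$
Next, I would observe that $S-m\R_{\max}=\sum_{e}(\R_e(\theta)-\R_{\max})\le 0$, with equality if and only if $\R_e(\theta)=\R_{\max}$ for every training domain $e$, i.e.\ if and only if all training risks are equal.

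From there the two cases are immediate. If all training risks are equal, the coefficient of $\lambda_{\min}$ vanishes and $\mmR(\theta)=\R_{\max}$, which is finite by assumption (all training risks are finite). If instead some pair of training risks differ, then $S-m\R_{\max}<0$ strictly, so $\lambda_{\min}(S-m\R_{\max})\to+\infty$ as $\lambda_{\min}\to-\infty$, and hence $\mmR(\theta)\to+\infty$.

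I do not anticipate a real obstacle here: the only subtlety is making sure the closed form in Eqn.~\ref{eqn:MM-REx_closed_form} is used with the correct convention (the worst-case domain receives weight $1-(m-1)\lambda_{\min}$, the others receive weight $\lambda_{\min}$, so the regrouping as $\R_{\max}+\lambda_{\min}(S-m\R_{\max})$ is valid), and ensuring the strict inequality $S<m\R_{\max}$ whenever some pair of risks differ, which follows directly from the definition of the maximum. No smoothness, convexity, or properness of $\ell$ is needed; the argument is purely algebraic.
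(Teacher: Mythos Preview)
Your proposal is correct and follows essentially the same approach as the paper: both start from the closed form in Eqn.~\ref{eqn:MM-REx_closed_form}, expand it as $\R_{\max}+\lambda_{\min}\bigl(\sum_e\R_e-m\R_{\max}\bigr)$, and use that $\sum_e(\R_e-\R_{\max})\le 0$ with strict inequality whenever the risks are not all equal. Your version is in fact slightly cleaner, since you compute the exact linear expression in $\lambda_{\min}$ directly rather than bounding it below via the largest pairwise difference $\epsilon$ as the paper does.
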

\begin{proof}
Suppose the risk is not equal across domains, and let the largest difference between any two training risks be $\epsilon > 0$.
Then $\R_{\mathrm{MM-REx}}(\theta) = (1 - m\lambda_{\min}) \max_e \R_e(\theta) + \lambda_{\min} \sum_{i=1}^m  \R_i(\theta) = \max_e \R_e(\theta) - m\lambda_{\min} \max_e \R_e(\theta) + \lambda_{\min} \sum_{i=1}^m  \R_i(\theta) \geq \max_e \R_e(\theta) - \lambda_{\min} \epsilon$,
with the inequality resulting from matching up the $m$ copies of $\lambda_{\min} \max_e \R_e$ with the terms in the sum and noticing that each pair has a non-negative value (since $\R_i - \max_e \R_e$ is non-positive and $\lambda_{\min}$ is negative), and at least one pair has the value $-\lambda_{\min} \epsilon$.
Thus sending $\lambda \rightarrow - \infty$ sends this lower bound on $\R_{\mathrm{MM-REx}}$ to $\infty$ and hence $\R_{\mathrm{MM-REx}} \rightarrow \infty$ as well. 
\end{proof}

\begin{proposition}
The V-REx risk of predictor $f_\theta$, $\R_{\mathrm{V-REx}}(\theta) \rightarrow \infty$ as $\beta \rightarrow \infty$ unless $\R^d = \R^e$ for all training domains $d,e$.
\end{proposition}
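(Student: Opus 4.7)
The plan is to mirror the structure of the MM-REx proof but with a simpler argument, since V-REx is linear in $\beta$ with coefficient equal to the variance of risks. The key observation is that the variance is a non-negative quantity that vanishes if and only if all entries of the set are equal, so once we rule out exact equality, we get a strictly positive coefficient multiplying $\beta$.

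First I would assume, for contrapositive, that there exist training domains $d, e$ with $\R^d(\theta) \neq \R^e(\theta)$. Then I would invoke the elementary fact that the (sample) variance of a finite multiset of real numbers is zero if and only if all its elements are equal; since the risks are not all equal, $c \doteq \mathrm{Var}(\{\R_1(\theta), \ldots, \R_m(\theta)\}) > 0$, and this $c$ is a fixed constant independent of $\beta$ (it depends only on $\theta$ and the training domains). Next, since all training risks are finite by the standing assumption of the section, the sum $S \doteq \sum_{e=1}^{m} \R_e(\theta)$ is a fixed finite constant as well.

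Substituting into the definition of V-REx gives $\R_\textrm{V-REx}(\theta) = \beta c + S$, which is an affine function of $\beta$ with strictly positive slope $c$. Therefore $\R_\textrm{V-REx}(\theta) \to \infty$ as $\beta \to \infty$, completing the proof.

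I do not anticipate any real obstacle: the only subtlety worth stating explicitly is that $c$ and $S$ must be treated as constants in $\beta$ (they depend only on $\theta$), which makes the limit immediate. Unlike the MM-REx case, there is no need to manipulate a max term or bound pairs of summands, because the variance already packages the ``deviation from equality'' into a single non-negative scalar.
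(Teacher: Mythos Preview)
Your proposal is correct and takes essentially the same approach as the paper: both argue that when the risks are not all equal the variance term is a strictly positive constant, so $\R_\textrm{V-REx}(\theta)=\beta c+S\to\infty$. The only cosmetic difference is that the paper derives an explicit lower bound $c\geq(\epsilon/2)^2$ (with $\epsilon$ the largest pairwise gap) rather than simply invoking the fact that variance vanishes iff all entries coincide.
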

\begin{proof}
Again, let $\epsilon > 0$ be the largest difference in training risks, and let $\mu$ be the mean of the training risks.
Then there must exist an $e$ such that $|\R_e - \mu| \geq \epsilon / 2$.
And thus $Var_i(\R_i(\theta)) = \sum_i (\R_i - \mu)^2 \geq (\epsilon / 2)^2$, since all other terms in the sum are non-negative.
Since $\epsilon>0$ by assumption, the penalty term is positive and thus $\R_{\mathrm{V-REx}}(\theta) \doteq \sum_i \R_i(\theta) + \beta Var_i(\R_i(\theta))$ goes to infinity as $\beta \rightarrow \infty$.
\end{proof}

\subsection{Connecting MM-REx to V-REx}
\label{sec:derivations}

\subsubsection{Closed form solutions to risk interpolation and minimax-REx}
Here, we show that risk interpolation %
is equivalent to the robust optimization objective of Eqn.~\ref{eqn:RO}.
Without loss of generality, let $\R_1$ be the largest risk, so $\R_e \leq \R_1$, for all $e$.
Thus we can express $\R_e = \R_1 - d_e$ for some non-negative $d_e$, with $d_1 = 0 \geq d_e$ for all $e$.
And thus we can write the weighted sum of Eqn.~\ref{eqn:MM-REx_closed_form} as:
\begin{align}
    \R_\textrm{MM}(\theta) 
    & \doteq 
    \max_{\substack{
    \Sigma_e \lambda_e = 1 \\
    \lambda_e \geq \lambda_{\min}}}
    \sum_{e=1}^m \lambda_e \R_e(\theta) \\
    &= 
    \max_{\substack{
    \Sigma_e \lambda_e = 1 \\
    \lambda_e \geq \lambda_{\min}}}
    \sum_{e=1}^m \lambda_e (\R_1(\theta) - d_e) \\
    &= 
    \R_1(\theta) + 
    \max_{\substack{
    \Sigma_e \lambda_e = 2 \\
    \lambda_e \geq \lambda_{\min}}}
    \sum_{e=1}^m - \lambda_e (d_e) \\
\end{align}
Now, since $d_e$ are non-negative, $-d_e$ is non-positive, and the maximal value of this sum is achieved when $\lambda_e = \lambda_{\min}$ for all $e \geq 2$, which also implies that 
$\lambda_1 = 1 - (m-1) \lambda_{\min}$. 
This yields the closed form solution provided in Eqn.~\ref{eqn:MM-REx_closed_form}.
The special case of Risk Interpolation, where $\lambda_{\min} = 0$, yields Eqn.~\ref{eqn:RO}.

\subsubsection{Minimax-REx and Mean absolute error REx}
\label{app:MM-REx_is_MAE}
In the case of only two training risks, MM-REx is equivalent to using a penalty on the mean absolute error (MAE) between training risks.
However, penalizing the pairwise absolute errors is not equivalent when there are $m>2$ training risks, as we show below.
Without loss of generality, assume that $\R_1 < \R_2 < ... < \R_m$. 
Then ($1/2$ of) the $\R_\textrm{MAE}$ penalty term is:
\begin{align}
    \sum_i \sum_{j\leq i} (\R_i - \R_j) &= m\R_m - \sum_{j\leq m}\R_j + (m-1) \R_{m-1} - \sum_{j\leq m-1}\R_j \ldots \\
    &= \sum_j j \R_j - \sum_j\sum_{i\leq j}\R_i \\
    &= \sum_j j \R_j - \sum_j (m-j+1) \R_j \\
    &= \sum_j (2j - m - 1) \R_j
\end{align}

For $m=2$, we have $1/2 \R_\textrm{MAE} = (2*1 - 2 - 1) \R_1 +  (2*2 - 2 - 1) \R_2 = \R_2 - \R_1$.
Now, adding this penalty term with some coefficient $\beta_\textrm{MAE}$ to the ERM term yields:
\begin{align}
    \R_\textrm{MAE} \doteq \R_1 + \R_2 + \beta_\textrm{MAE}(\R_2 - \R_1) &= (1 - \beta_\textrm{MAE}) \R_1 + (1 + \beta_\textrm{MAE})\R_2 \\
\end{align}
We wish to show that this is equal to $\R_\textrm{MM}$ for an appropriate choice of learning rate $\gamma_\textrm{MAE}$ and hyperparameter $\beta_\textrm{MAE}$.
Still assuming that $\R_1 < \R_2$, we have that: 
\begin{align}
    \R_\textrm{MM} \doteq (1 - \lambda_{\min}) \R_2 + \lambda_{\min} \R_1 
\end{align}
Choosing $\gamma_\textrm{MAE} = 1/2 \gamma_\textrm{MM}$ is equivalent to multiplying $\R_\textrm{MM}$ by 2, yielding:
\begin{align}
    2 \R_\textrm{MM} \doteq 2 (1 - \lambda_{\min}) \R_2 + 2 \lambda_{\min} \R_1 
\end{align}
Now, in order for $\R_\textrm{MAE} = 2\R_\textrm{MM}$, we need that:
\begin{align}
    2 - 2\lambda_{\min} &= 1 + \beta_\textrm{MAE} \\
    2\lambda_{\min} &= 1 - \beta_\textrm{MAE} \\
\end{align}
And this holds whenever $\beta_\textrm{MAE} = 1 - 2 \lambda_{\min}$.
When $m > 2$, however, these are not equivalent, since $R_\textrm{MM}$ puts equal weight on all but the highest risk, whereas $\R_\textrm{MAE}$ assigns a different weight to each risk.

\subsubsection{Penalizing pairwise mean squared error (MSE) yields V-REx}
The V-REx penalty (Eqn.~\ref{eqn:V-REx}) is equivalent to the average pairwise mean squared error between all training risks (up to a constant factor of 2).  Recall that $\R_i$ denotes the risk on domain $i$.  We have:
\begin{align}
    \frac{1}{2n^2} \sum_i \sum_j \left( \R_i - \R_j \right)^2
    &= \frac{1}{2n^2} \sum_i \sum_j \left( \R_i^2 + \R_j^2 - 2\R_i \R_j \right) \\
    &= \frac{1}{2n}\sum_i \R_i^2 + \frac{1}{2n} \sum_j \R_j^2 - \frac{1}{n^2}\sum_i \sum_j \R_i \R_j \\
    & = \frac{1}{n} \sum_i \R_i^2 - \left( \frac{1}{n}\sum_i \R_i \right)^2 \\
    &= \mathrm{Var}(\R)\,.
\end{align}

\section{Further results and details for experiments mentioned in main text}
\label{sec:exps_details}

\subsection{CMNIST with covariate shift}
Here we present the following additional results:
\begin{enumerate}
    \item Figure 1 of the main text with additional results using MM-REx, see ~\ref{fig:MM-REx_vs_others_covariate_shift}.  
    These results used the ``default'' parameters from the code of \citet{arjovsky2019invariant}.
    \item A plot with results on these same tasks after performing a random search over hyperparameter values similar to that performed by \citet{arjovsky2019invariant}.
    \item A plot with the percentage of the randomly sampled hyperparameter combinations that have satisfactory ($>50\%$) accuracy, which we count as ``success'' since this is better than random chance performance.
\end{enumerate}

These results show that REx is able to handle greater covariate shift than IRM, given appropriate hyperparameters.
Furthermore, when appropriately tuned, REx can outperform IRM in situations with covariate shift.
The lower success rate of REx for high values of $p$ is because it produces degenerate results (where training accuracy is less than test accuracy) more often.

The hyperparameter search consisted of a uniformly random search of 340 samples over the following intervals of the hyperparameters:
\begin{enumerate}
    \item HiddenDim = [2**7,  2**12]
	\item L2RegularizerWeight = [10**-2, 10**-4]
	\item Lr = [10**-2.8, 10**-4.3]
	\item PenaltyAnnealIters = [50, 250]
	\item PenaltyWeight = [10**2, 10**6]
	\item Steps = [201, 601]
\end{enumerate}

\begin{figure}[htb!]
\centering
\label{fig:MM-REx_vs_others_covariate_shift}
\includegraphics[width=.3\columnwidth]{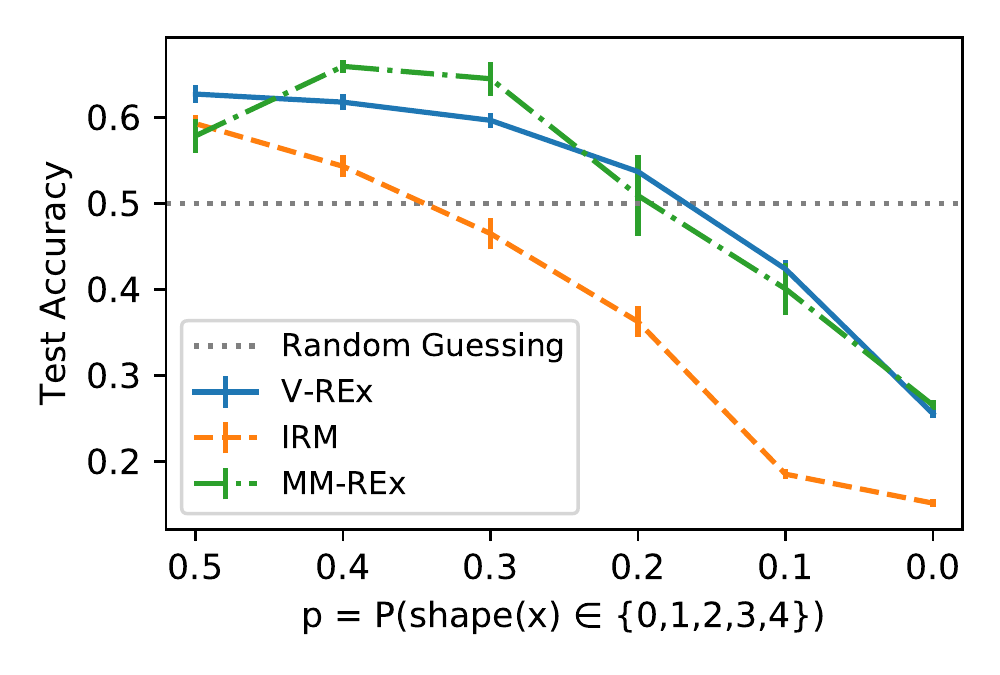}
\includegraphics[width=.3\columnwidth]{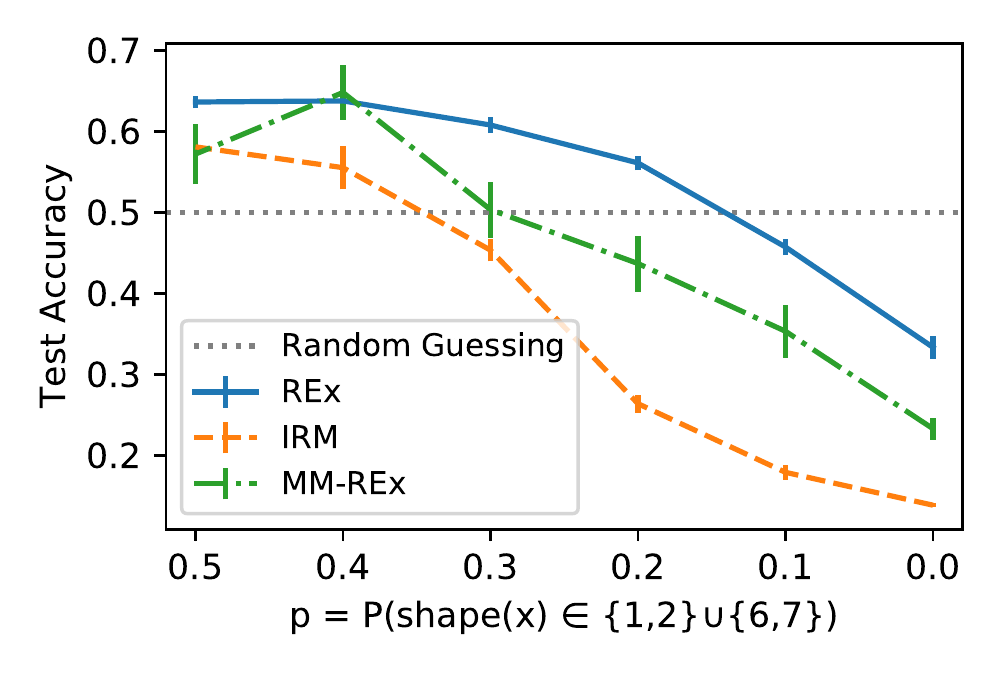}
\includegraphics[width=.3\columnwidth]{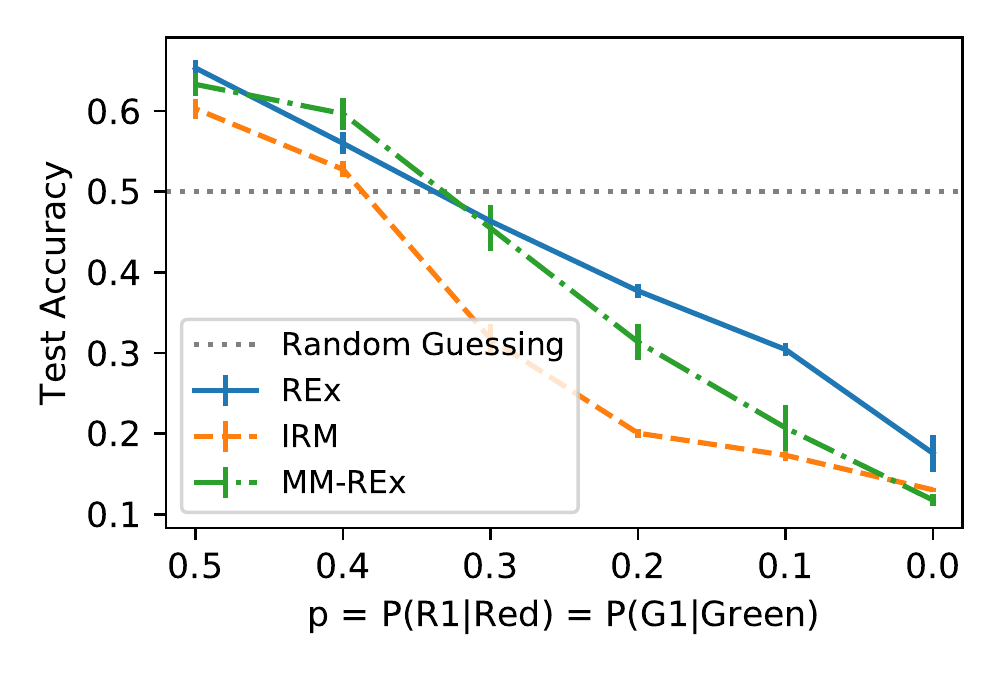}
\caption{
This is Figure~\ref{fig:REx_vs_IRM_covariate_shift} of main text with additional results using MM-REx. 
For each covariate shift variant (class imbalance, digit imbalance, and color imbalance from left to right as described in "CMNIST with covariate shift" subsubsection of Section 4.1 in main text) of CMNIST, the standard error (the vertical bars in plots) is higher for MM-REx than for V-REx.
}
\end{figure}

\begin{figure}[htb!]
\centering
\label{fig:Figure_1_with_best_tuned_hparams}
\includegraphics[width=.3\columnwidth]{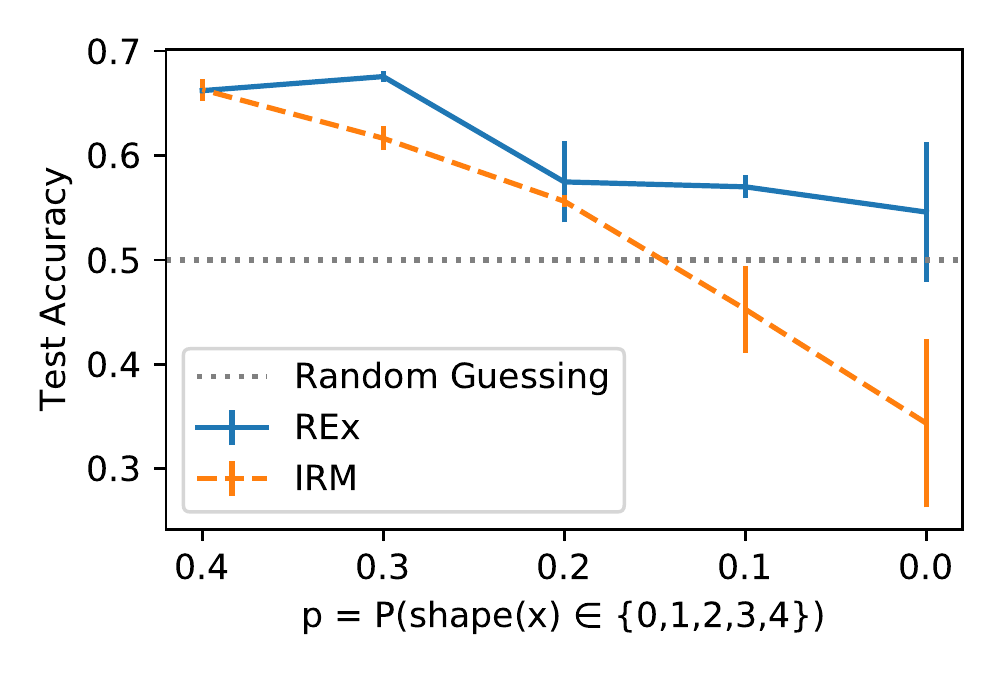}
\includegraphics[width=.3\columnwidth]{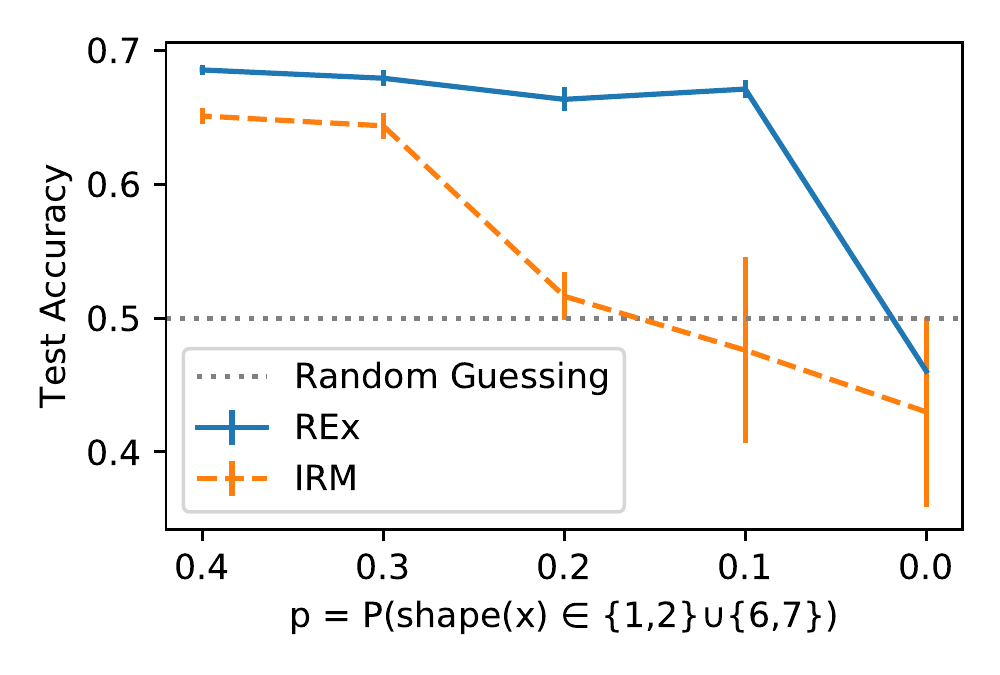}
\includegraphics[width=.3\columnwidth]{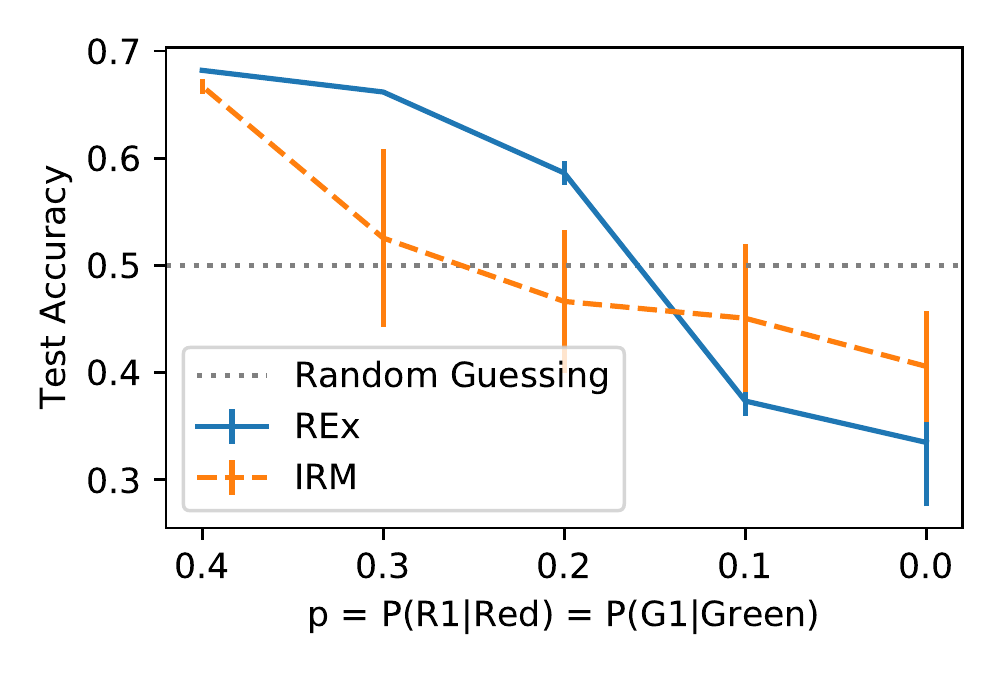}
\caption{This is Figure~\ref{fig:REx_vs_IRM_covariate_shift} of main text (class imbalance, digit imbalance, and color imbalance from left to right as described in "CMNIST with covariate shift" subsubsection of Section 4.1 in main text), but with hyperparameters of REx and IRM each tuned to perform as well as possible for each value of p for each covariate shift type. }
\end{figure}

\begin{figure}[htb!]
\centering
\label{fig:Figure_1_for_percent_of_runs_that_are_satisfactory}
\includegraphics[width=.3\columnwidth]{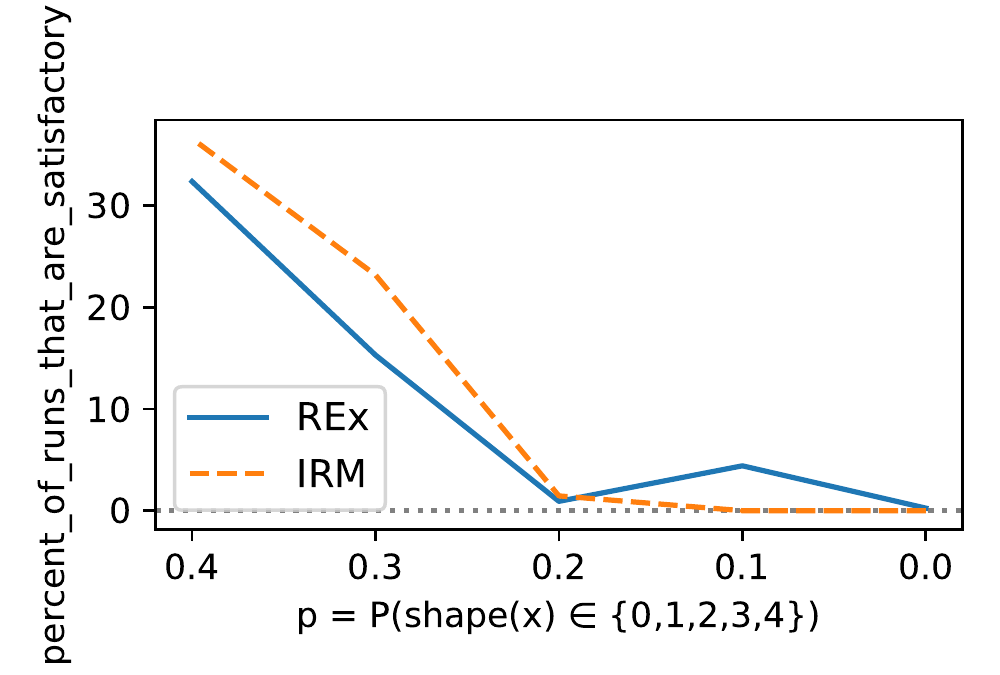}
\includegraphics[width=.3\columnwidth]{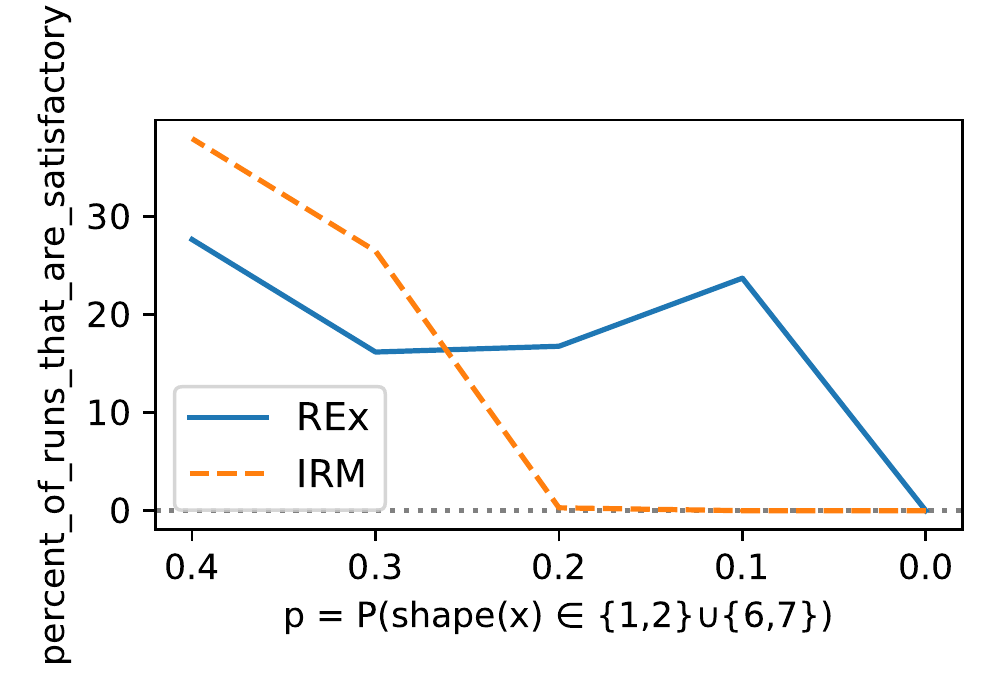}
\includegraphics[width=.3\columnwidth]{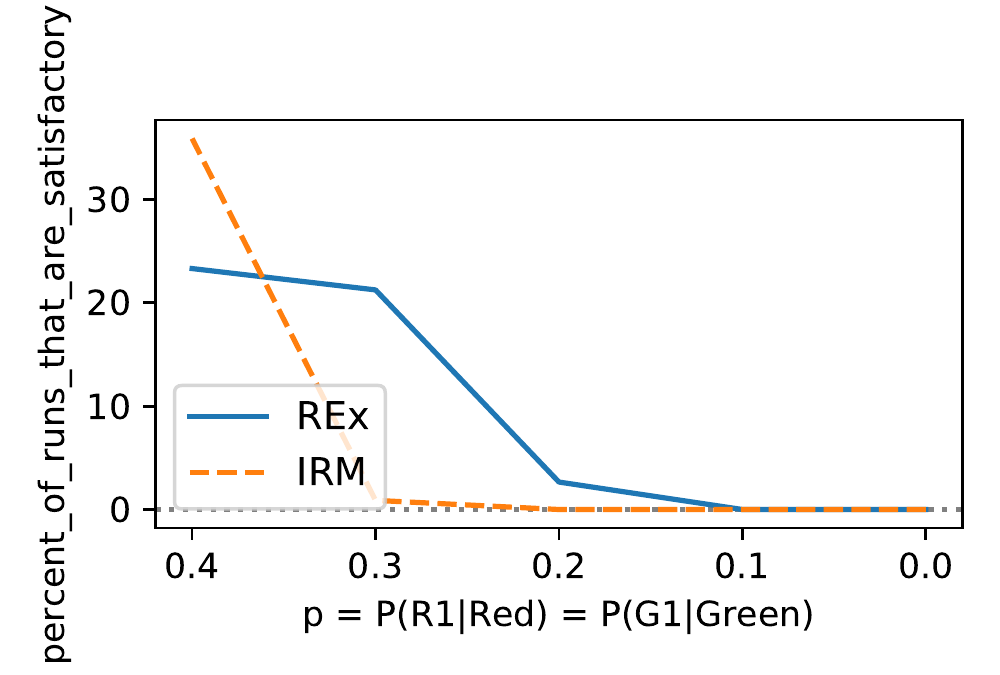}
\caption{This also corresponds to class imbalance, digit imbalance, and color imbalance from left to right as described in "CMNIST with covariate shift" subsubsection of Section 4.1 in main text; but now the y-axis refers to what percentage of the randomly sampled hyperparameter combinations we deemed to to be satisfactory. We define satisfactory as simultaneously being better than random guessing and having train accuracy greater than test accuracy. For p less than .5, a larger percentage of hyperparameter combinations are often satisfactory for REx than for IRM; for p greater than .5, a larger percentage of hyperparameter combinations are often satisfactory for IRM than for REx because train accuracy is greater than test accuracy for more hyperparameter combinations for IRM. We stipulate that train accuracy must be greater than test accuracy because test accuracy being greater than train accuracy usually means the model has learned a degenerate prediction rule such as "not color".
}
\end{figure}

\subsection{SEMs from ``Invariant Risk Minimization''}
\label{app:SEMs}

Here we present experiments on the (linear) structural equation model (SEM) tasks introduced by \citet{arjovsky2019invariant}.
\citet{arjovsky2019invariant} construct several varieties of SEM where the task is to predict targets $Y$ from inputs $X_1, X_2$, where $X_1$ are (non-anti-causal) causes of $Y$, and $X_2$ are (anti-causal) effects of $Y$.
We refer the reader to Section 5.1 and Figure 3 of \citet{arjovsky2019invariant} for more details.
We use the same experimental settings as \citet{arjovsky2019invariant} (except we only run 7 trials), and report results in Table~\ref{tab:SEMs}.

These experiments include several variants of a simple SEM, given by:
\begin{align*}
    X_1 &= N_1\\
    Y &= W_{1 \rightarrow Y} X_1 + N_Y \\
    X_2 &= W_{Y \rightarrow 2} Y + N_2 
\end{align*}
Where $N_1, N_Y, N_2$ are all sampled i.i.d.\ from normal distributions.
The variance of these distributions may vary across domains.

While REx achieves good performance in the \textbf{domain-homoskedastic} case, it performs poorly in the \textbf{domain-heteroskedastic} case, where the amount of intrinsic noise, $\sigma_y^2$ in the target changes across domains.\footnote{See Footnote~\ref{foot:homo_hetero}.}
Intuitively, this is because the irreducible error varies across domains in these tasks, meaning that the risk will be larger on some domains than others, even if the model's predictions match the expectation $\mathbb{E}(Y|Pa(Y))$.
We tried using a ``baseline'' (see Eqn.~\ref{eqn:RO}) of $r_e = Var(Y_e)$ \citep{meinshausen2015maximin} to account for the different noise levels in $Y$, but this did not work.

We include a mathematical analysis of the simple SEM given above in order to better understand why REx succeeds in the domain-homoskedastic, but not the domain-heteroskedastic case.
Assuming that $Y,X_1,X_2$ are scalars, this SEM becomes
\begin{align*}
    X_1 &= N_1\\
    Y &= w_{1 \rightarrow y} N_1 + N_Y \\
    X_2 &= w_{y \rightarrow 2} w_{1 \rightarrow y} N_1 + w_{y \rightarrow 2} N_Y + N_2 
\end{align*}
We consider learning a model $\hat Y = \alpha X_1 + \beta X_2$. 
Then the residual is:
\begin{align*}
    \hat Y - Y = (\alpha + w_{1 \rightarrow y}(\beta w_{y \rightarrow 2} - 1))N_1 + (\beta w_{y \rightarrow 2} - 1)N_Y + \beta N_2 
\end{align*}
Since all random variables have zero mean, the MSE loss is the variance of the residual.  
Using the fact that the noise $N_1, N_Y, N_2$ are independent, this equals:
\begin{align*}
    \E[(\hat Y - Y)^2] = (\alpha + w_{1 \rightarrow y}(\beta w_{y \rightarrow 2} - 1))^2 \sigma_1^2 + (\beta w_{y \rightarrow 2} - 1)^2 \sigma_Y^2 + \beta^2 \sigma_2^2
\end{align*}
Thus when (only) $\sigma_2$ changes, the only way to keep the loss unchanged is to set the coefficient in front of $\sigma_2$ to 0, meaning $\beta=0$. 
By minimizing the loss, we then recover $\alpha=w_{1 \rightarrow y}$; i.e. in the domain-homoskedastic setting, the loss equality constraint of REx yields the causal model.
On the other hand, if (only) $\sigma_Y$ changes, then REx enforces $\beta=1/w_{y \rightarrow 2}$, which then induces $\alpha=0$, recovering the \textit{anti}causal model. 

While REx (like ICP \citep{peters2016causal}) assumes the mechanism for $Y$ is fixed across domains (meaning $P(Y|Pa(Y))$ is independent of the domain, $e$), IRM makes the somewhat weaker assumption that $\mathbb{E}(Y|Pa(Y))$ is independent of domain.
While it is plausible that an appropriately designed variant of REx could work under this weaker assumption, we believe forbidding interventions on $Y$ is not overly restrictive, and such an extension for future work.

\setlength{\tabcolsep}{2pt} %
\begin{table}[]
    \scriptsize
    \centering
    \begin{tabular}{lcccc}
     & FOU(c) & FOU(nc) & FOS(c) & FOS(nc) \\ 
     \midrule
        IRM & 0.001$\pm$0.000 & 0.001$\pm$0.000 & 0.001$\pm$0.000 & 0.000$\pm$0.000 \\
        REx, $r_e = 0$ & 0.001$\pm$0.000 & 0.008$\pm$0.002 & 0.007$\pm$0.002 & 0.000$\pm$0.000 \\
        REx, $r_e = \V(Y_e)$ & 0.816$\pm$0.149 & 1.417$\pm$0.442 & 0.919$\pm$0.091 & 0.000$\pm$0.000 \\
     \midrule
     & POU(c) & POU(nc) & POS(c) & POS(nc) \\
     \midrule
        IRM & 0.004$\pm$0.001 & 0.006$\pm$0.003 & 0.002$\pm$0.000 & 0.000$\pm$0.000 \\ 
        REx, $r_e = 0$  & 0.004$\pm$0.001 & 0.004$\pm$0.001 & 0.002$\pm$0.000 & 0.000$\pm$0.000 \\ 
        REx, $r_e = \V(Y_e)$  & 0.915$\pm$0.055 & 1.113$\pm$0.085 & 0.937$\pm$0.090 & 0.000$\pm$0.000 \\ 
     \midrule
     \midrule
     & FEU(c) & FEU(nc) & FES(c) & FES(nc) \\ 
     \midrule
        IRM & 0.0053$\pm$0.0015 & 0.1025$\pm$0.0173 & 0.0393$\pm$0.0054 & 0.0000$\pm$0.0000 \\
        REx, $r_e = 0$ & 0.0390$\pm$0.0089 & 19.1518$\pm$3.3012 & 7.7646$\pm$1.1865 & 0.0000$\pm$0.0000 \\
        REx, $r_e = \V(Y_e)$ & 0.7713$\pm$0.1402 & 1.0358$\pm$0.1214 & 0.8603$\pm$0.0233 & 0.0000$\pm$0.0000 \\
     \midrule
     & PEU(c) & PEU(nc) & PES(c) & PES(nc) \\
     \midrule
        IRM & 0.0102$\pm$0.0029 & 0.0991$\pm$0.0216 & 0.0510$\pm$0.0049 & 0.0000$\pm$0.0000 \\ 
        REx, $r_e = 0$ & 0.0784$\pm$0.0211 & 46.7235$\pm$11.7409 & 8.3640$\pm$2.6108 & 0.0000$\pm$0.0000 \\ 
        REx, $r_e = \V(Y_e)$ & 1.0597$\pm$0.0829 & 0.9946$\pm$0.0487 & 1.0252$\pm$0.0819 & 0.0000$\pm$0.0000 \\ 
    \bottomrule
    \end{tabular}
    \caption{
    Average mean-squared error between true and estimated weights on causal ($X_1$) and non-causal ($X_2$) variables.
    \textbf{Top 2: }When the level of noise in the anti-causal features varies across domains, REx performs well (FOU, FOS, POU, POS).
    \textbf{Bottom 2: } When the level of noise in the targets varies instead, REx performs poorly (FEU, FES, PEU, PES).
    Using the baselines $r_e = \V(Y)$ does not solve the problem, and indeed, hurts performance on the homoskedastic domains.
    }
    \label{tab:SEMs}
\end{table}
\setlength{\tabcolsep}{6pt} %

\subsection{Reinforcement Learning Experiments}
Here we provide details and further results on the experiments in Section~\ref{sec:RL}.
We take tasks from the Deepmind Control Suite~\citep{deepmindcontrolsuite2018} and modify the original state, $\mathbf{s}$, to produce observation, $\mathbf{o} = (\mathbf{s} + \epsilon, \eta \mathbf{s}^\prime)$ including noise $\epsilon$ and spurious features $\eta \mathbf{s}^\prime$, where $\mathbf{s}^\prime$ contains 1 or 2 dimensions of $\mathbf{s}$.
The scaling factor takes values $\eta=1/2/3$ for the two training and test domains, respectively.
The agent takes $\mathbf{o}$ as input and learns a representation using Soft Actor-Critic~\citep{haarnoja2018sac} and an auxiliary reward predictor, which is trained to predict the next 3 rewards conditioned on the next 3 actions.
Since the spurious features are copied from the state before the noise is added, they are more informative for the reward prediction task, but they do not have an invariant relationship with the reward because of the domain-dependent $\eta$.

The hyperparameters used for training Soft Actor-Critic can be found in Table \ref{table:rl_hyper_params}.
We used \texttt{cartpole\_swingup} as a development task to tune the hyperparameters of penalty weight (chosen from $[0.01, 0.1, 1, 10]$) and number of iterations before the penalty is turned up (chosen from $[5000, 10000, 20000]$), both for REx and IRM.
The plots with the hyperparameter sweep are in Figure \ref{fig:cartpole_swingup_hyperparam}.

\begin{figure}
    \centering
    \includegraphics[width=0.2\textwidth]{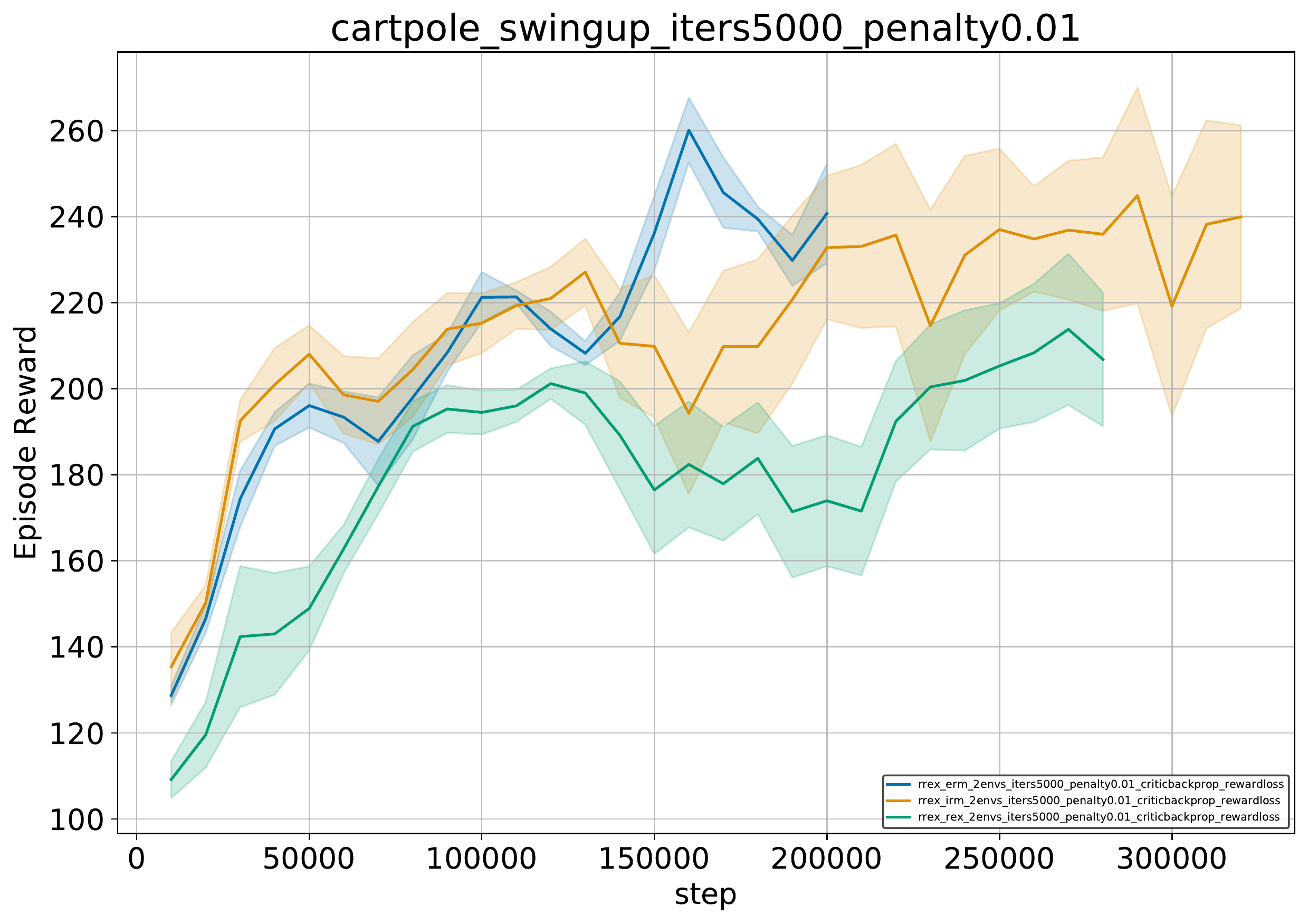}
    \includegraphics[width=0.2\textwidth]{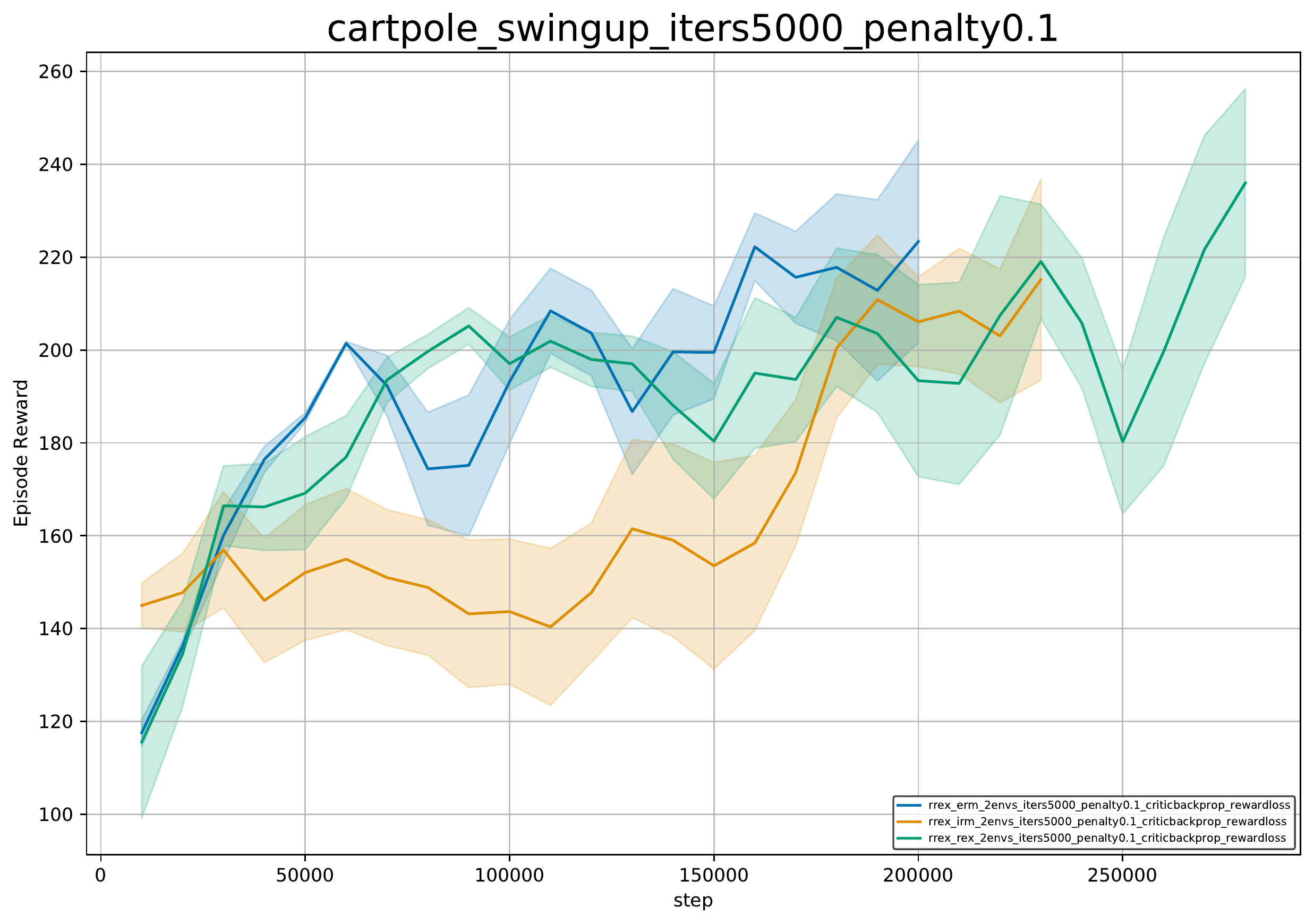}
    \includegraphics[width=0.2\textwidth]{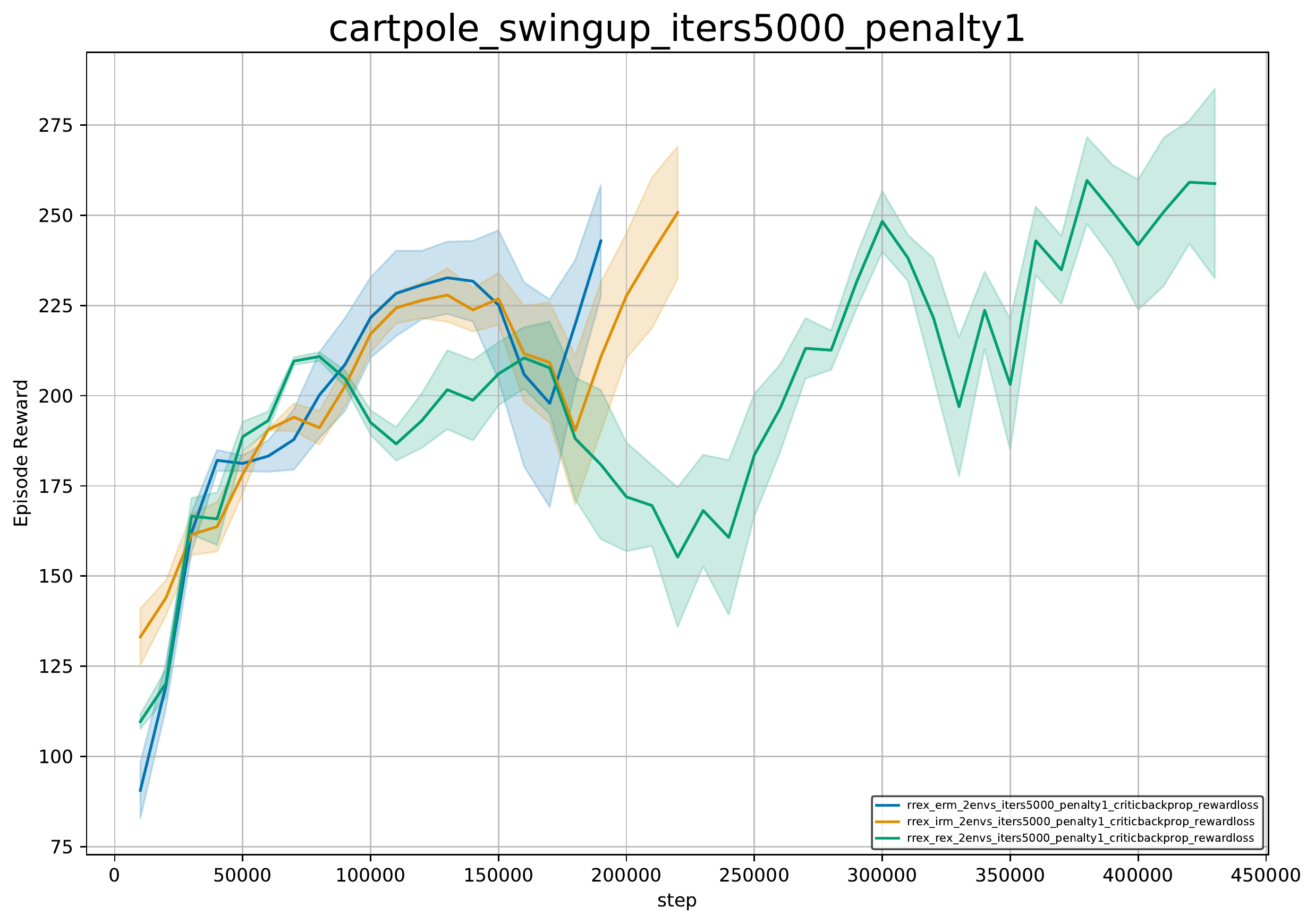}
    \includegraphics[width=0.2\textwidth]{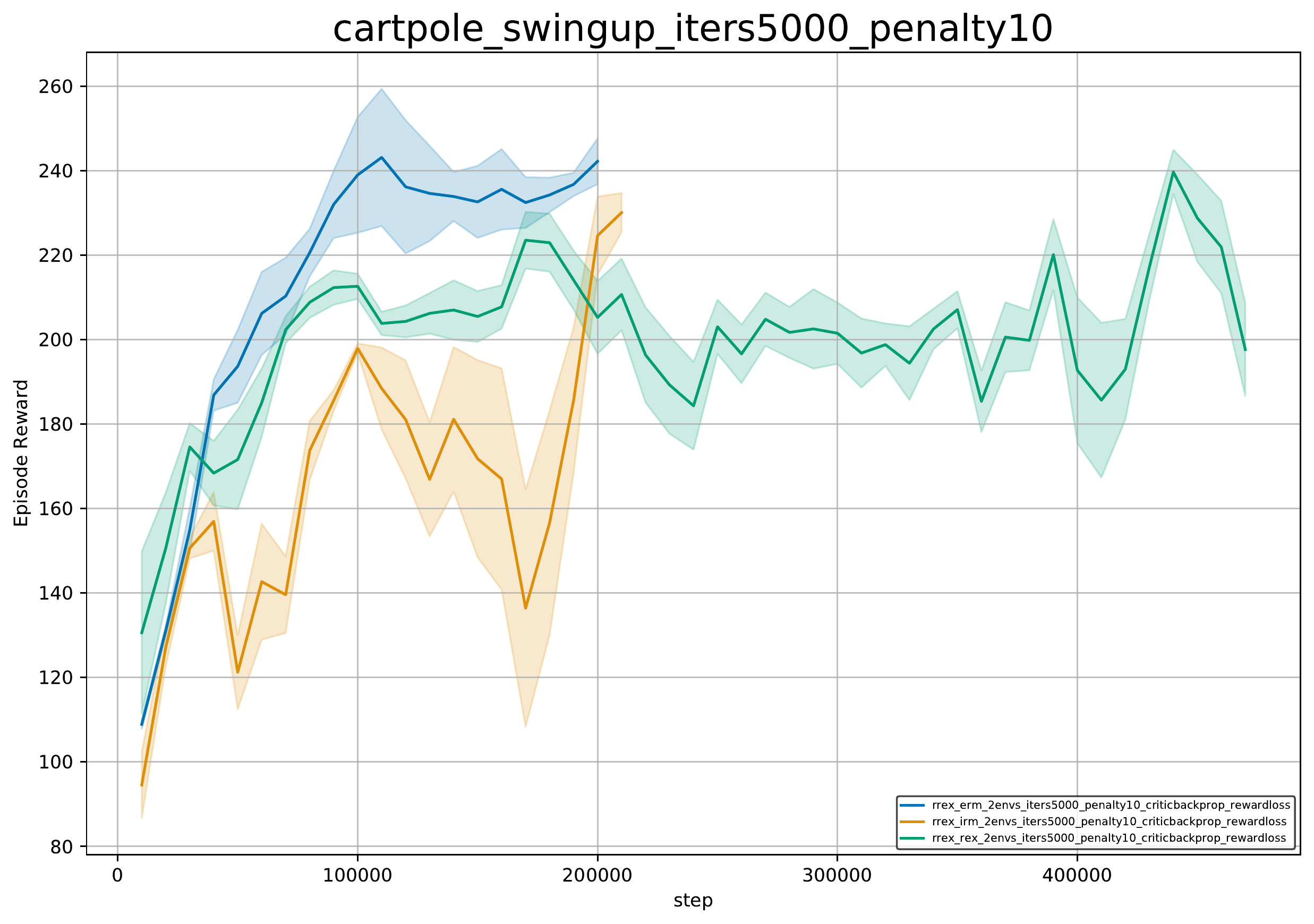}
    \includegraphics[width=0.2\textwidth]{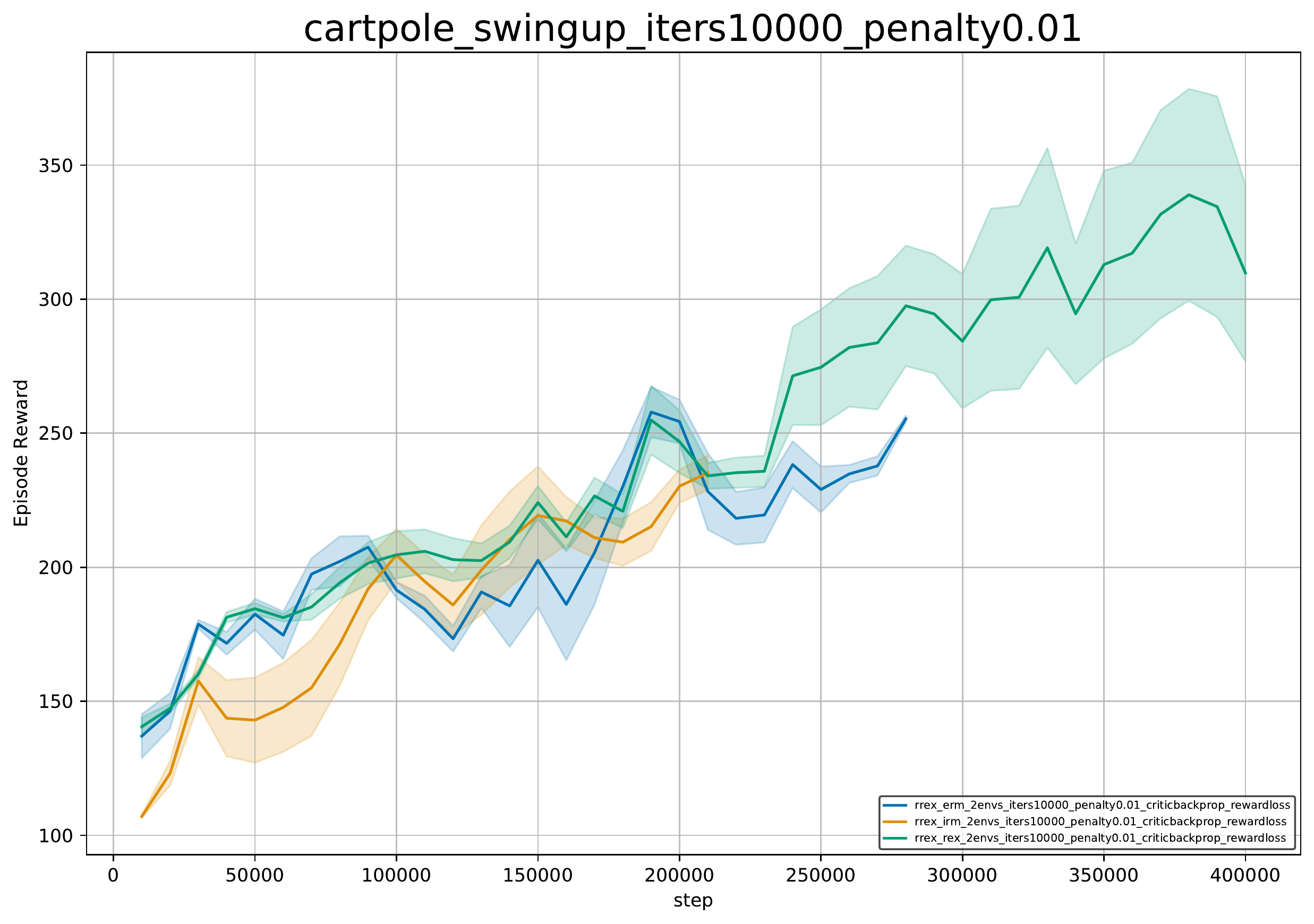}
    \includegraphics[width=0.2\textwidth]{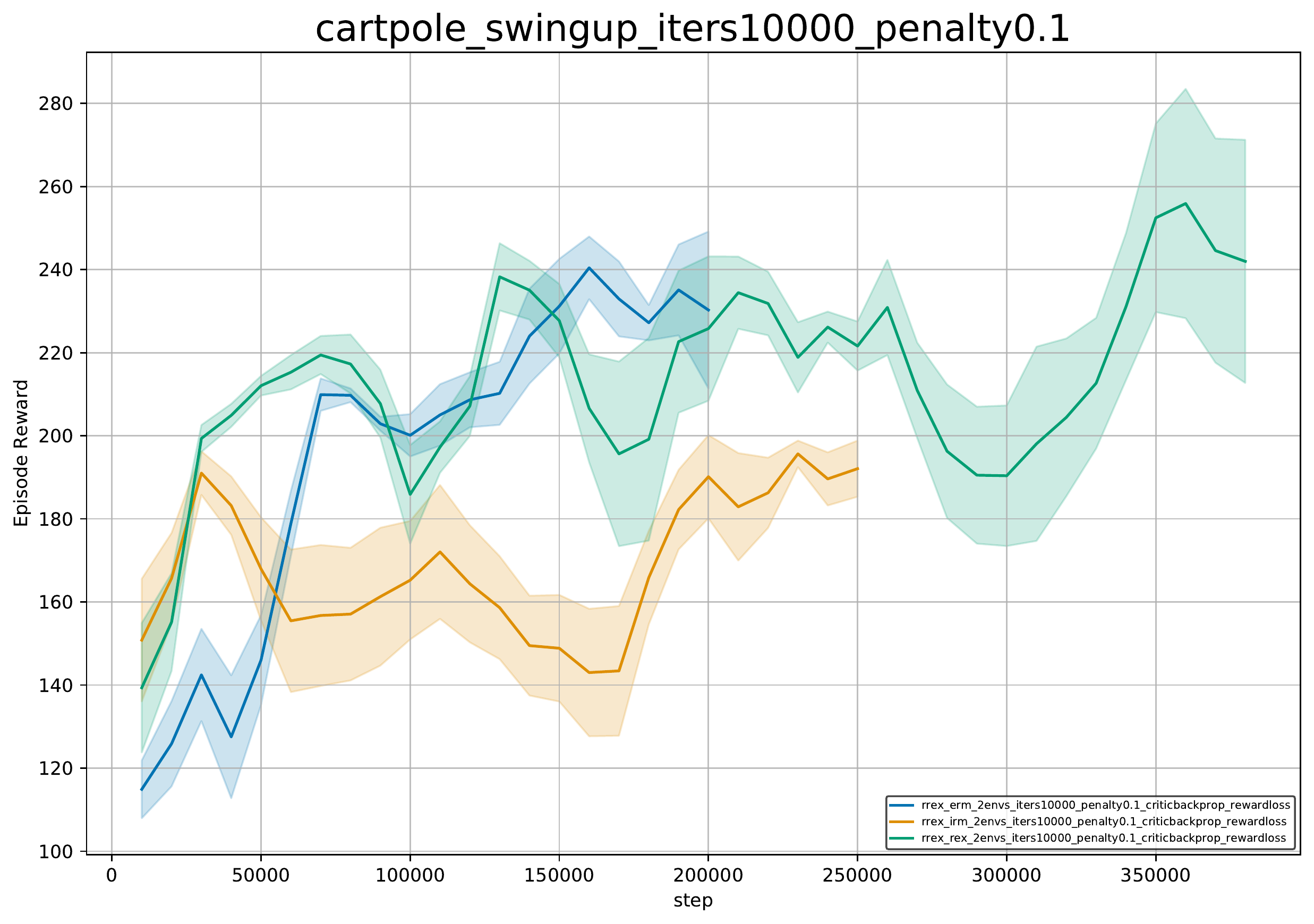}
    \includegraphics[width=0.2\textwidth]{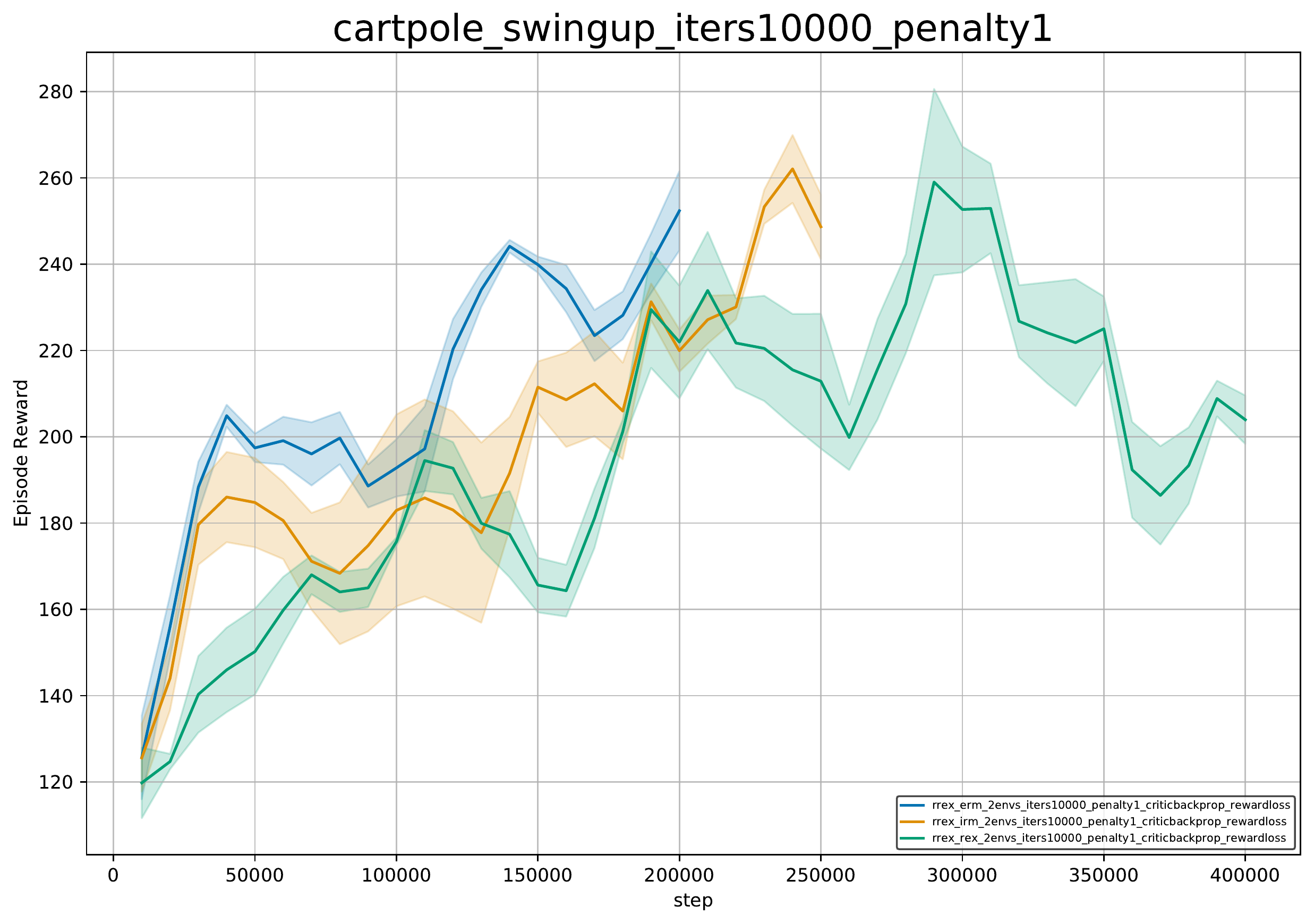}
    \includegraphics[width=0.2\textwidth]{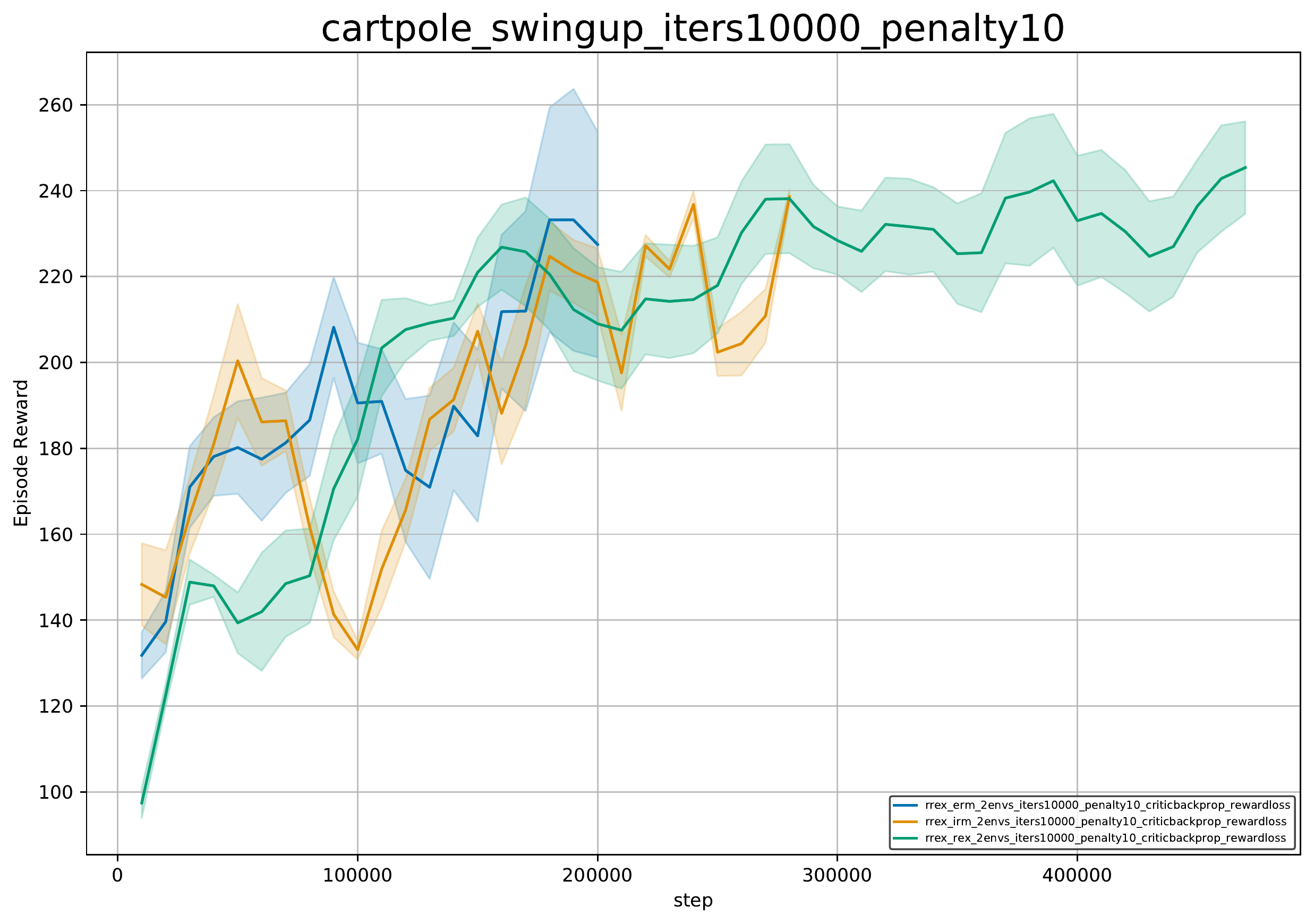}
    \includegraphics[width=0.2\textwidth]{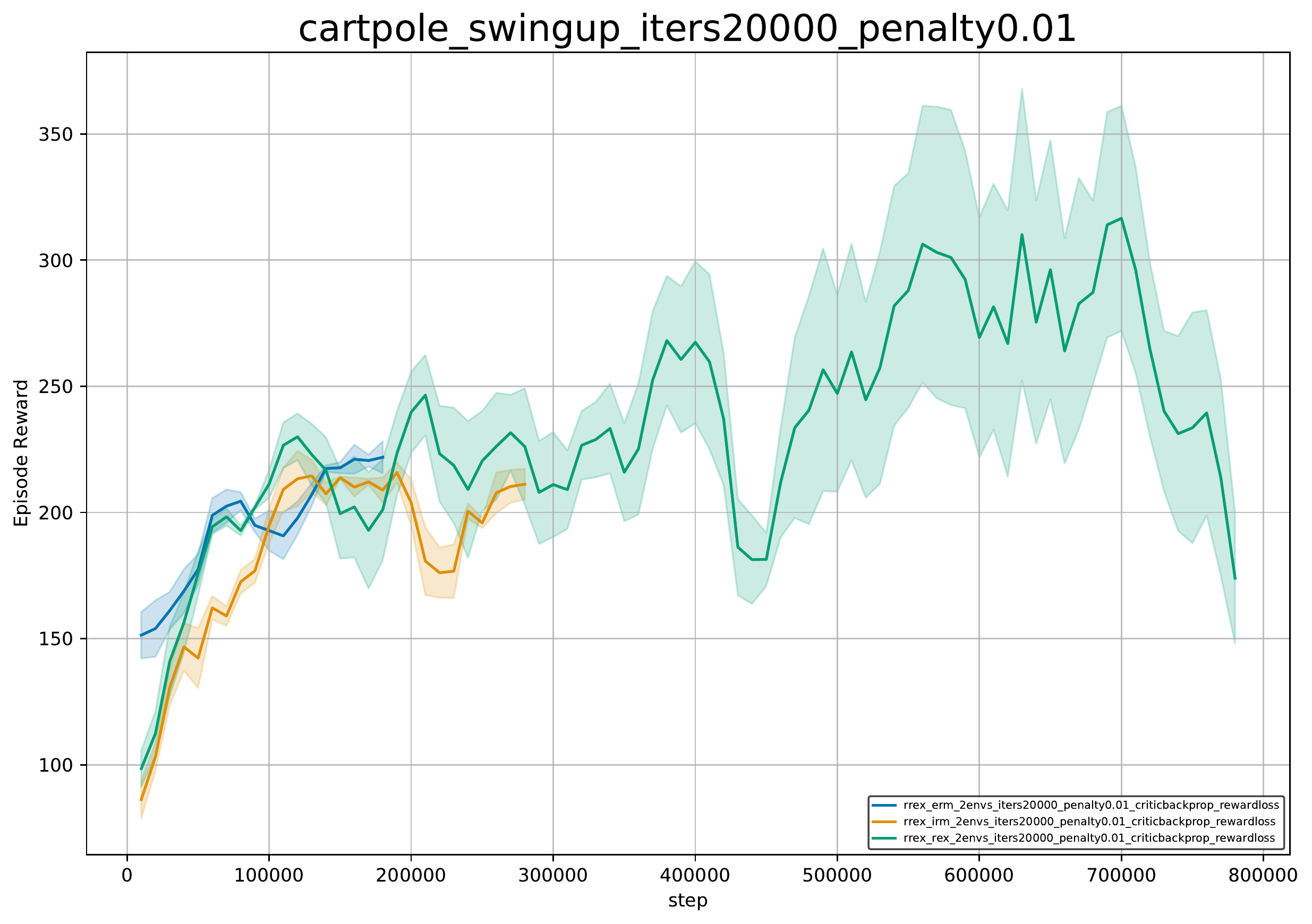}
    \includegraphics[width=0.2\textwidth]{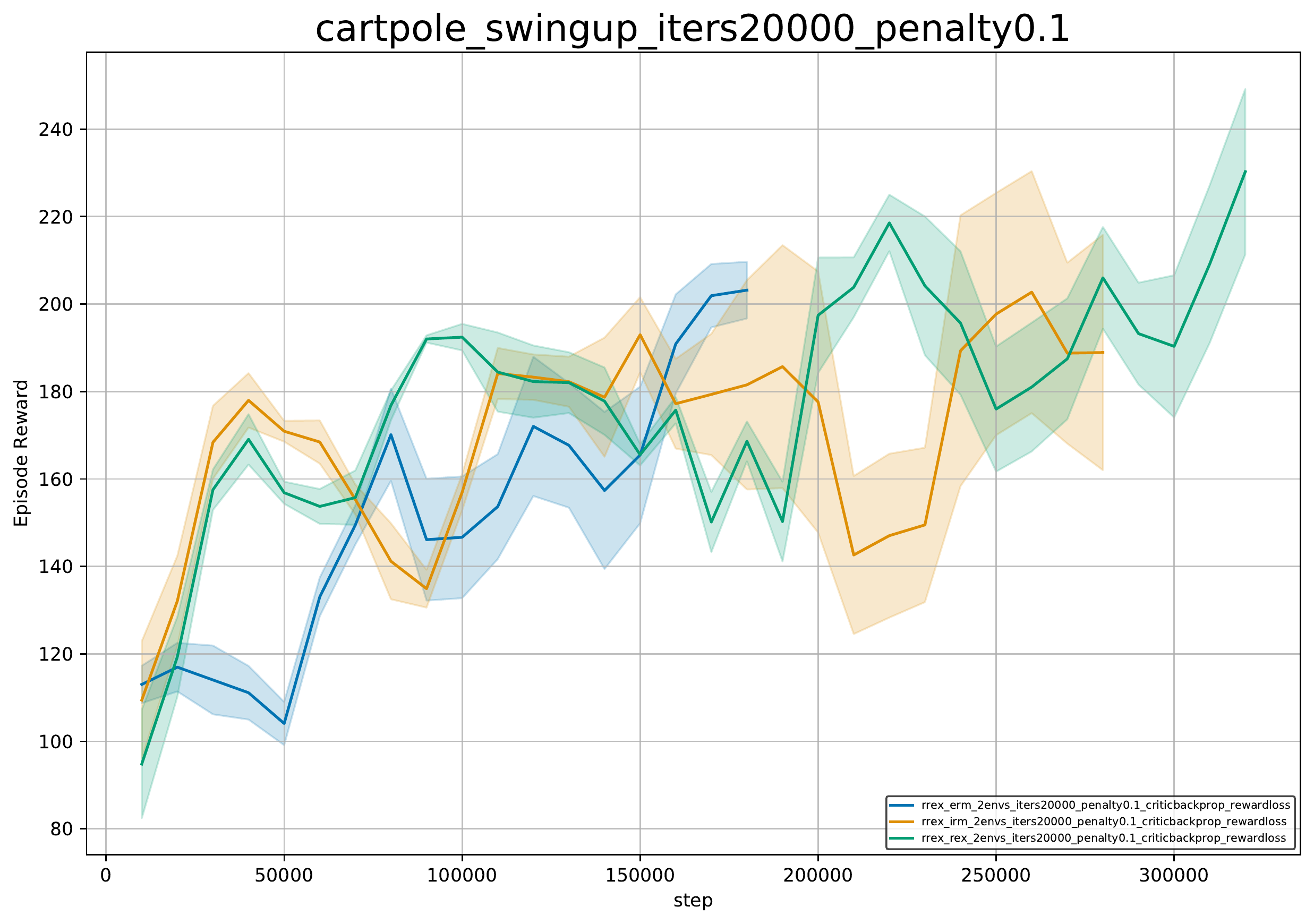}
    \includegraphics[width=0.2\textwidth]{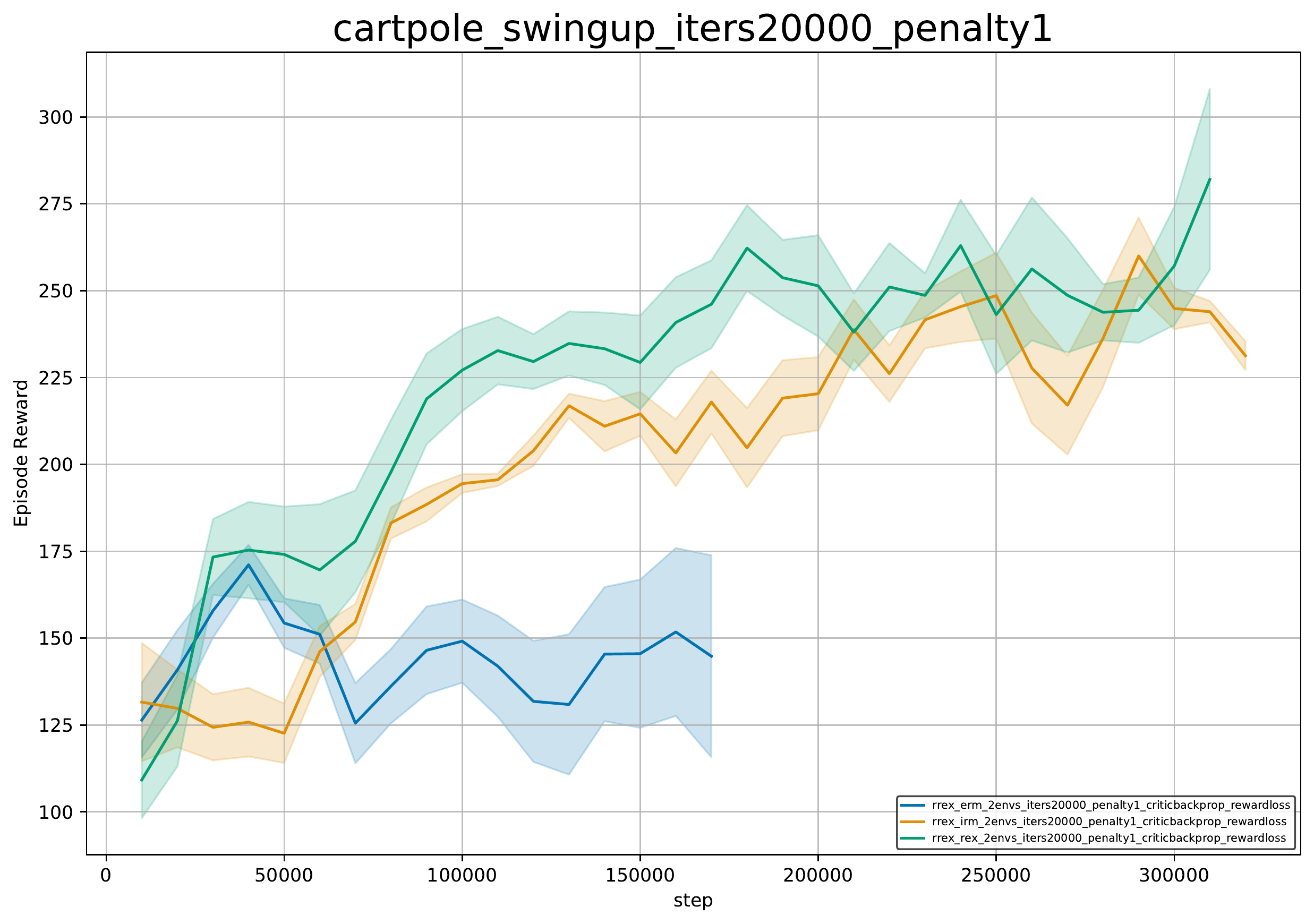}
    \includegraphics[width=0.2\textwidth]{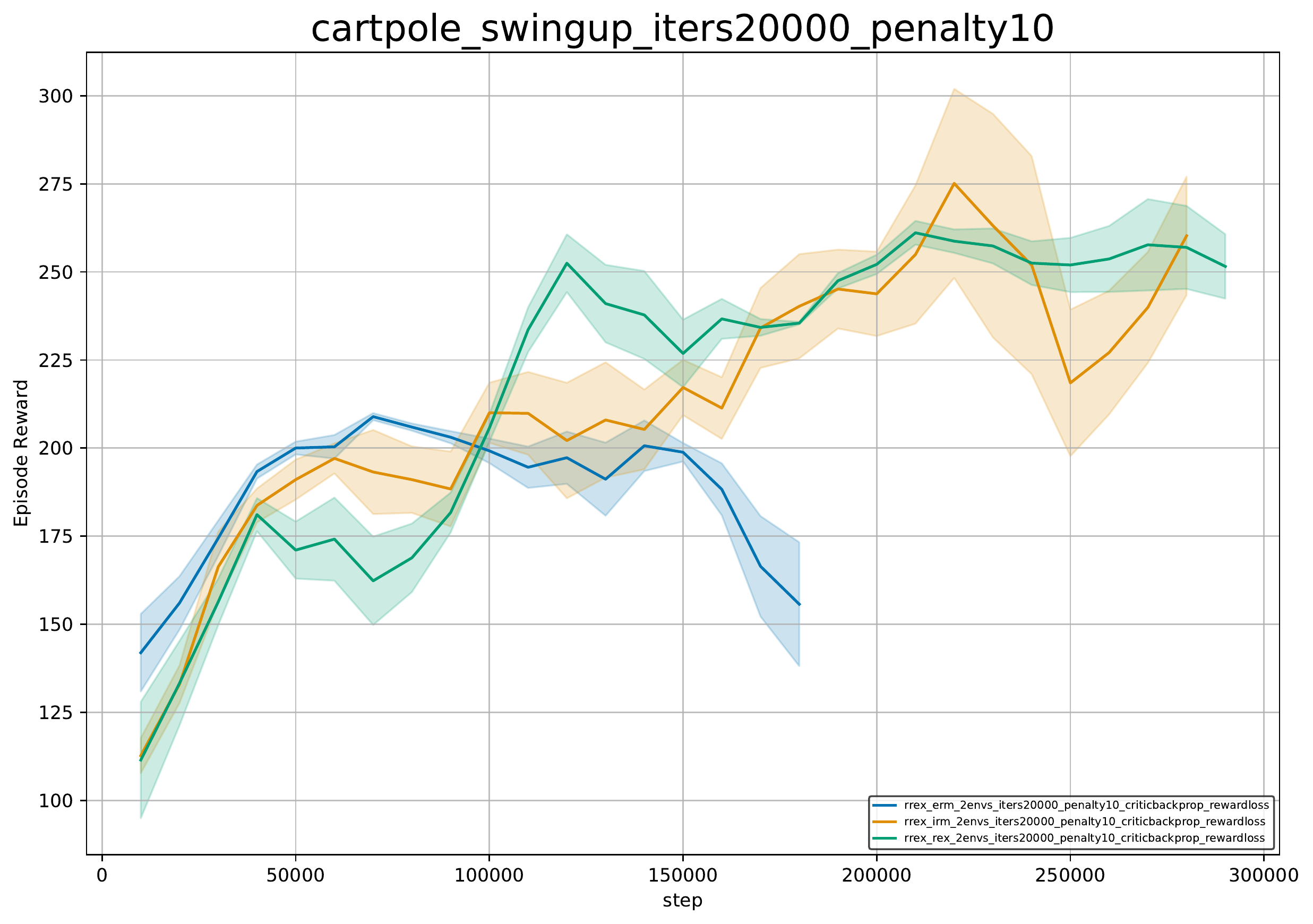}
    \caption{
    Hyperparameter sweep for IRM and REx on \texttt{cartpole\_swingup}.
    Green, blue, and orange curves correspond to REx, ERM, and IRM, respectively.
    The subfigure titles state the penalty strength (``penalty'') and after how many iterations the penalty strength was increased (``iters''). We chose a penalty factor of 1 and 10k iterations.
    }
    \label{fig:cartpole_swingup_hyperparam}
\end{figure}

\begin{table}[htb]
\centering
\begin{tabular}{|l|c|}
\hline
Parameter name        & Value \\
\hline
Replay buffer capacity & $1000000$ \\
Batch size & $1024$ \\
Discount $\gamma$ & $0.99$ \\
Optimizer & Adam \\
Critic learning rate & $10^{-5}$ \\
Critic target update frequency & $2$ \\
Critic Q-function soft-update rate $\tau_{\textrm{Q}}$ & 0.005 \\
Critic encoder soft-update rate $\tau_{\textrm{enc}}$ & 0.005 \\
Actor learning rate & $10^{-5}$ \\
Actor update frequency & $2$ \\
Actor log stddev bounds & $[-5, 2]$ \\
Encoder learning rate & $10^{-5}$ \\
Decoder learning rate & $10^{-5}$ \\
Decoder weight  decay & $10^{-7}$  \\
L1 regularization weight & $10^{-5}$ \\
Temperature learning rate & $10^{-4}$ \\
Temperature Adam's $\beta_1$ & $0.9$ \\
Init temperature & $0.1$ \\
\hline
\end{tabular}\\
\caption{\label{table:rl_hyper_params} A complete overview of hyperparameters used for reinforcement learning experiments.}
\end{table}

\FloatBarrier

\section{Experiments not mentioned in main text} %
\label{sec:new_exps}

We include several other experiments which do not contribute directly to the core message of our paper.
Here is a summary of the take-aways from these experiments:
\begin{enumerate}
    \item Our experiments in the CMNIST domain suggest that the IRM/V-REx penalty terms should be amplified exactly when the model starts overfitting training distributions.
    \item Our financial indicators experiments suggest that IRM and REx often perform remarkably similarly in practice.
\end{enumerate}

\subsection{A possible approach to scheduling IRM/REx penalties}
\label{sec:how_to_learn}

We've found that REx and IRM are quite sensitive to the choice of hyperparameters.
In particular, hyperparameters controlling the scheduling of the IRM/V-REx penalty terms are of critical importance.
For the best performance, the penalty should be increased the relative weight of the penalty term after approximately 100 epochs of training (using a so-called ``waterfall'' schedule \citep{waterfall}).
See Figure~\ref{fig:CMNIST_training}(b) for a comparison.
We also tried an exponential decay schedule instead of the waterfall and found the results (not reported) were significantly worse, although still above 50\% accuracy.

Given the methodological constraints of out-of-distribution generalization mentioned in \citep{gulrajani2020search}, this could be a significant practical issue for applying these algorithms.
We aim to address this limitation by providing a guideline for when to increase the penalty weight, based only on the training domains.
We hypothesize that successful learning of causal features using REx or IRM should proceed in two stages:
\begin{enumerate}
    \item In the first stage, predictive features are learned.
    \item In the second stage, causal features are selected and/or predictive features are fine-tuned for stability.
\end{enumerate}
This viewpoint suggests that we could use overfitting on the \textit{training} tasks as an indicator for when to apply (or increase) the IRM or REx penalty.

The experiments presented in this section provide \textit{observational} evidence consistent with this hypothesis.
However, since the hypothesis was developed by observing patterns in the CMNIST training runs, it requires further experimental validation on a different task, which we leave for future work.

\begin{figure}[ht!]
\centering
\includegraphics[width=.7\columnwidth]{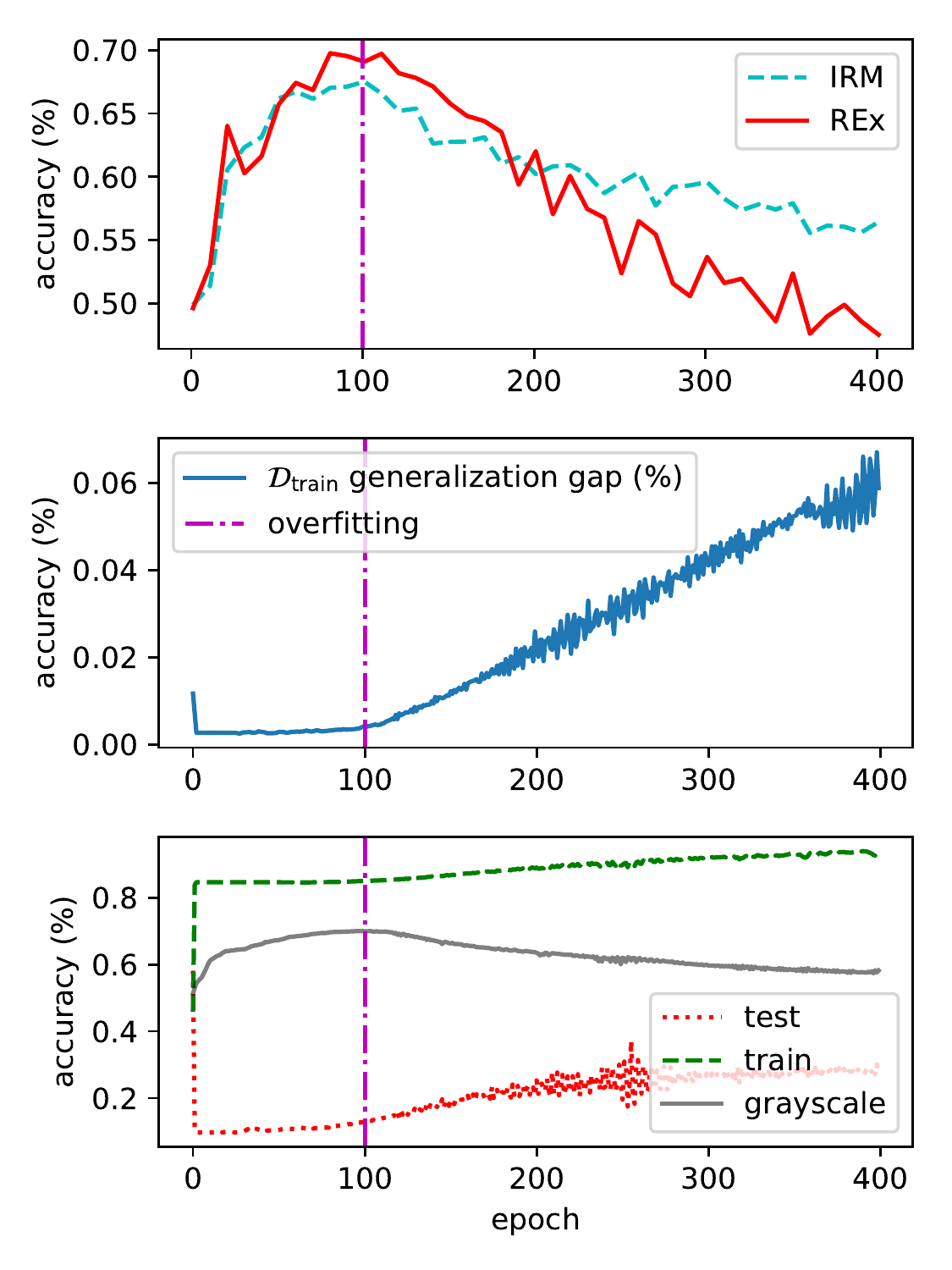}
\caption{
Stability penalties should be applied around when traditional overfitting begins, to ensure that the model has learned predictive features, and that penalties still give meaningful training signals.
\textbf{Top:} Test accuracy as a function of epoch at which penalty term weight is increased (learning rate is simultaneously decreased proportionally).  
Choosing this hyperparameter correctly is essential for good performance.
\textbf{Middle:} Generalization gap on a validation set with 85\% correlation between color and label (the same as the average training correlation).
The best test accuracy is achieved by increasing the penalty when the generalization gap begins to increase. 
The increase clearly indicates memorization because color and shape are only 85\%/75\% correlated with the label, and so cannot be used to make predictions with higher than 85\% accuracy.
\textbf{Bottom:} Accuracy on training/test sets, as well as an auxilliary grayscale set.
Training/test performance reach 85\%/15\% after a few epochs of training, but grayscale performance improves, showing that meaningful features are still being learned.  
}
\label{fig:CMNIST_training}
\end{figure}

\subsubsection{Results and Interpretation}
In Figure~\ref{fig:CMNIST_training}, we demonstrate that the optimal point to apply the waterfall in the CMNIST task is after predictive features have been learned, but before the model starts to memorize training examples.
Before predictive features are available, the penalty terms push the model to learn a constant predictor, impeding further learning.
And after the model starts to memorize, it become difficult to distinguish anti-causal and causal features.
This second effect is because neural networks often have the capacity to memorize all training examples given sufficient training time, achieving and near-0 loss \citep{zhang2016understanding}.
In the limits of this memorization regime, the differences between losses become small, and gradients of the loss typically do as well, and so the REx and IRMv1 penalties no longer provide a strong or meaningful training signal, see Figure~\ref{fig:ERM_minimizes_penalty}.

\begin{figure}[htp!]
\centering
\includegraphics[width=.65\columnwidth]{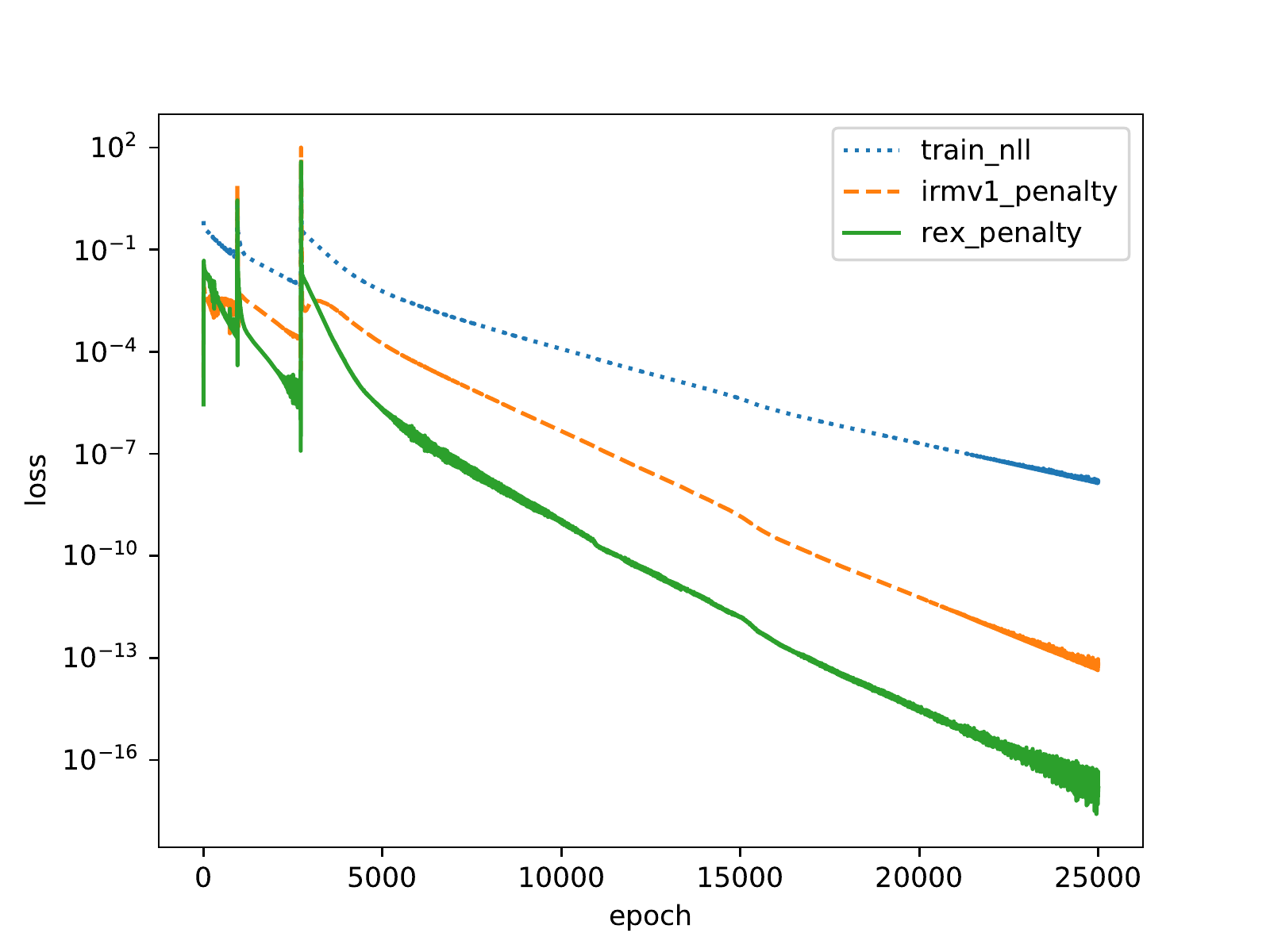}
\caption{
Given sufficient training time, empirical risk minimization (ERM) minimizes both REx and IRMv1 penalty terms on Colored MNIST (\textit{without} including either term in the loss function).
This is because the model (a deep network) has sufficient capacity to fit the training sets almost perfectly.  
This prevents these penalties from having the intended effect, once the model has started to overfit.
The y-axis is in log-scale. %
}
\label{fig:ERM_minimizes_penalty}
\end{figure}

\subsection{Domain Generalization: VLCS and PACS}
\label{domaingen_details}

Here we provide earlier experiments on the VLCS and PACS dataset.
We removed these experiments from the main text of our paper in favor of the more complete DomainBed results.

To test whether REx provides a benefit on more realistic domain generalization tasks, we compared REx, IRM and ERM performance on the VLCS \citep{torralba2011unbiased} and PACS \citep{li2017deeper} image datasets. 
Both datasets are commonly-used for multi-source domain generalization. 
The task is to train on three domains and generalize to a fourth one at test time. 

Since every domain in PACS is used as a test set when training on the other three domains, it is not possible to perform a methodologically sound evaluation on PACS after examining results on \textit{any} of the data.
Thus to avoid performing any tuning on test distributions, we use VLCS to tune hyperparameters and then apply these exact same settings to PACS and report the final average over 10 runs on each domain.

We use the same architecture, training procedure and data augmentation strategy as the (formerly) state-of-the-art Jigsaw Puzzle approach \citep{carlucci2019domain} (except with IRM or V-REx intead of JigSaw as auxilliary loss) for all three methods. 
As runs are very noisy, we ran each experiment 10 times, and report average test accuracies extracted at the time of the highest validation accuracy on each run.
Results on PACS are in Table~\ref{tab:pacs-results}. %
On PACS we found that REx outperforms IRM and IRM outperforms ERM on average, while all are worse than the state-of-the-art Jigsaw method.

We use all hyperparameters from the original Jigsaw codebase.\footnote{\url{https://github.com/fmcarlucci/JigenDG}} We use Imagenet pre-trained AlexNet features and chose batch-size, learning rate, as well as penalty weights based on performance on the VLCS dataset where test performance on the holdout domain was used for the set of parameters producing the highest validation accuracy. The best performing parameters on VLCS were then applied to the PACS dataset without further changes. We searched over batch-sizes in $\{128, 384\}$, over penalty strengths in $\{0.0001,0.001,0.01,0.1,1,10\}$, learning rates in $\{0.001, 0.01\}$ and used average performance over all 4 VLCS domains to pick the best performing hyperparameters. 
Table~\ref{tab:vlcs-results} shows results on VLCS with the best performing hyperparameters.

The final parameters for all methods on PACS were a batch size of 384 with 30 epochs of training with Adam, using a learning rate of 0.001, and multiplying it by 0.1 after 24 epochs (this step schedule was taken from the Jigsaw repo).%
The penalty weight chosen for Jigsaw was 0.9; for IRM and REx it was 0.1.%
We used the same data-augmentation pipeline as the original Jigsaw code for ERM, IRM, Jigsaw and REx to allow for a fair comparison.
\begin{table*}[ht!]
\centering
\begin{tabular}{ l r r r r | r}
  \toprule
  \textbf{VLCS} & \textbf{CALTECH} & \textbf{SUN}& \textbf{PASCAL}& \textbf{LABELME}& \textbf{Average} \\
  \midrule
  \textbf{REx (ours)} & \sout{\textbf{96.72}} & \sout{63.68} & \sout{\textbf{72.41}} & \sout{60.40} & \sout{\textbf{73.30}} \\
  IRM & \sout{95.99} & \sout{62.85}  & \sout{71.71} & \sout{59.61}& \sout{72.54} \\
  ERM & \sout{94.76}  & \sout{61.92} & \sout{69.03} & \sout{\textbf{60.55}}& \sout{71.56} \\
  Jigsaw (SOTA) & \sout{96.46}&  \sout{\textbf{63.84}} & \sout{70.49}& \sout{60.06} & \sout{72.71} \\
  \bottomrule
\end{tabular}
\caption{
Accuracy (percent) of different methods on the VLCS task. 
Results are test accuracy at the time of the highest validation accuracy, averaged over 10 runs. 
On VLCS REx outperforms all other methods. 
Numbers are shown in strike-through because we selected our hyperparameters based on highest test set performance; the goal of this experiment was to find suitable hyperparameters for the PACS experiment.
}
\label{tab:vlcs-results}
\end{table*}

\begin{table*}[ht!]
\centering
\begin{tabular}{ l r r r r | r}
  \toprule
  \textbf{PACS} & \textbf{Art Painting} & \textbf{Cartoon}& \textbf{Sketch}& \textbf{Photo}& \textbf{Average} \\
  \midrule
  REx (ours)    & 66.27$\pm$0.46 & 68.8$\pm$0.28 &  59.57$\pm$0.78 & 89.60$\pm$0.12 &  71.07\\
  IRM           & 66.46$\pm$0.31 & 68.60$\pm$0.40 &  58.66$\pm$0.73 & 89.94$\pm$0.13 &  70.91\\
  ERM 		& 66.01$\pm$0.22 & 68.62$\pm$0.36  & 58.38$\pm$0.60 & 89.40$\pm$0.18 &  70.60 \\
  \midrule
  Jigsaw (SOTA) & 66.96$\pm$0.39 & 66.67$\pm$0.41 &  61.27$\pm$0.73 & 89.54$\pm$0.19 &  71.11 \\
  \bottomrule
\end{tabular}
\caption{
Accuracy (percent) of different methods on the PACS task. 
Results are test accuracy at the time of the highest validation accuracy, averaged over 10 runs. 
REx outperforms ERM on average, and performs similar to IRM and Jigsaw (the state-of-the-art).
}
\label{tab:pacs-results}
\end{table*}

\subsection{Financial indicators}
\label{sec:finance}

We find that IRM and REx seem to perform similarly across different splits of the data in a prediction task using financial data.
The dataset is split into five years, 2014--18, containing 37 publicly reported financial indicators of several thousand publicly listed companies each. The task is to predict if a company's value will increase or decrease in the following year (see Appendix for dataset details.)
We consider each year a different domain, and create 20 different tasks by selecting all possible combinations of domains where three domains represent the training sets, one domain the validation set, and another one the test set. We train an MLP using the validation set to determine an early stopping point, with $\beta=10^4$.
The per-task results summarized in fig.~\ref{fig:fin_grid} indicate substantial differences between ERM and IRM, and ERM and REx. 
The predictions produced by IRM and REx, however, only differ insignificantly, highlighting the similarity of IRM and REx.
While performance on specific tasks differs significantly between ERM and IRM/REx, performance averaged over tasks is not significantly different.%

\begin{figure}
    \centering
    \includegraphics[scale=0.57]{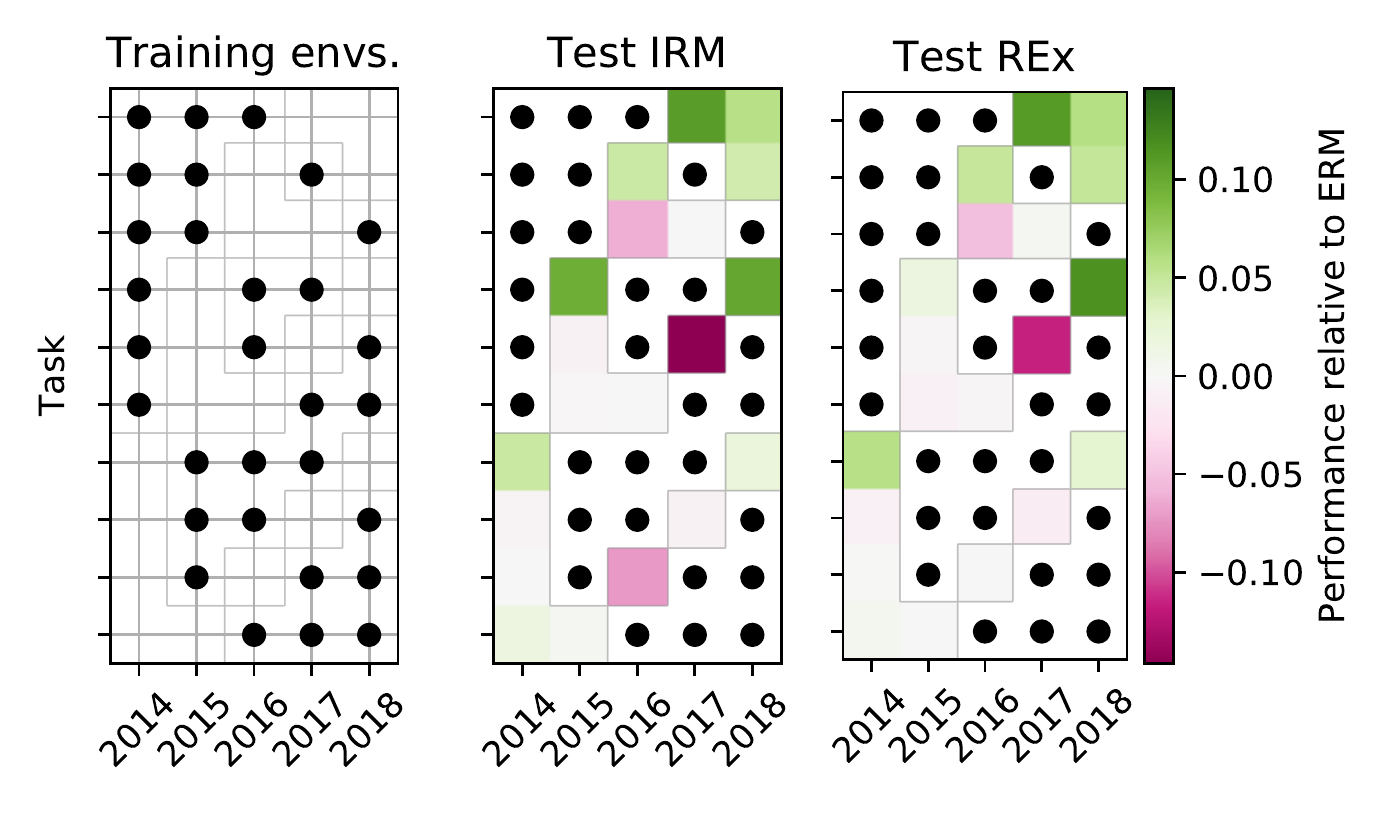}
    \caption{Financial indicators tasks. The left panel indicates the set of training domains; the middle and right panels show the test accuracy on the respective domains relative to ERM (a black dot corresponds to a training domain; a colored patch indicates the test accuracy on the respective domain.)}
    \label{fig:fin_grid}
\end{figure}

\subsubsection{Experiment Details}
We use \texttt{v1} of the dataset published on 
\footnote{\url{https://www.kaggle.com/cnic92/200-financial-indicators-of-us-stocks-20142018}}
and prepare the data as described in.\footnote{\url{https://www.kaggle.com/cnic92/explore-and-clean-financial-indicators-dataset}} We further remove all the variables that are not shared across all 5 years, leaving us with 37 features, and whiten the data through centering and normalizing by the standard deviation.

On each subtask, we train an MLP with two hidden layers of size 128 with tanh activations and dropout (p=0.5) after each layer. We optimize the binary cross-entropy loss using Adam (learning rate 0.001, $\beta_1=0.9$, $\beta_2=0.999$,  $\epsilon=10^{-8}$), and  an L2 penalty (weight 0.001). In the IRM/REx experiments, the respective penalty is added to the loss ($\beta=1$) and the original loss is scaled by a factor $10^{-4}$ after 1000 iterations.
Experiments are run for a maximum of 9000 training iterations with early stopping based on the validation performance.
All results are averaged over 3 trials. The overall performance of the different models, averaged over all tasks, is summarized in Tab.~\ref{tab:fin_acc}.
The difference in average performance between ERM, IRM, and REx is not statistically significant, as the error bars are very large.

\begin{table}[]
    \centering
    \begin{tabular}{lrrr}
     & Overall accuracy & Min acc. & Max acc. \\
     \midrule
    ERM & $54.6\pm4.6$ & 47.6 & 66.2 \\
    IRM & $55.3\pm5.9$ & 45.9 & 67.5 \\
    REx & \textbf{55.5$\,\pm\,$6.0} & 47.2 & \textbf{68.0} \\
    \bottomrule
    \end{tabular}
    \caption{Test accuracy of models trained on the financial domain dataset, averaged over all 20 tasks, as well as min./max. accuracy across the tasks.}
    \label{tab:fin_acc}
\end{table}

\section{Overview of other topics related to OOD generalization}
\label{app:more_related_work}
\textbf{Domain adaptation} \citep{ben-david2010theory} 
shares the goal of generalizing to new distributions at test time, but allows some access to the test distribution.
A common approach is to make different domains have a similar distribution of features \citep{pan2010domain}.
A popular deep learning method for doing so is Adversarial Domain Adaptation (ADA) \citep{ganin2016domain,tzeng2017adversarial,long2018conditional,li2018deep}, 
which seeks a ``invariant representation'' of the inputs, i.e.\ one whose distribution is domain-independent.
Recent works have identified fundamental shortcomings with this approach, however \citep{zhao2019learning, johansson2019support, arjovsky2019invariant, wu2020representation}.

Complementary to the goal of domain generalization is \textbf{out-of-distribution detection} \citep{hendrycks2016baseline, hendrycks2018deep}, where the goal is to recognize examples as belonging to a new domain.
Three common deep learning techniques that can improve OOD generalization are \textbf{adversarial training} \citep{goodfellow2014explaining, hendrycks2019benchmarking}, \textbf{self-supervised learning} \citep{oord2018representation, hjelm2018learning, hendrycks2019augmix, albuquerque2020improving} and \textbf{data augmentation} \citep{krizhevsky2012imagenet, zhang2017mixup, cubuk2018autoaugment, shorten2019survey, hendrycks2019using, carlucci2019domain}.
These methods can also been combined effectively in various ways \citep{tian2019contrastive, bachman2019learning, gowal2019achieving}.
Data augmentation and self-supervised learning methods typically use prior knowledge such as 2D image structure.
Several recent works also use prior knowledge to design augmentation strategies for invariance to superficial features that may be spuriously correlated with labels in object recognition tasks \citep{he2019noniid, wang2019learning, gowal2019achieving, ilse2020designing}.
In contrast, REx can discover which features have invariant relationships with the label without such prior knowledge.

\end{appendices}
\end{document}